%% file: main.tex
\documentclass{article}

\usepackage[preprint]{neurips_2026}
\usepackage{natbib}           % \citep, \citet — loaded by neurips automatically
\usepackage{hyperref}
\usepackage{url}
\usepackage{booktabs}
\usepackage{amsfonts}
\usepackage{amsmath}
\usepackage{amssymb}
\usepackage{amsthm}
\usepackage{bbm}
\usepackage{nicefrac}
\usepackage{microtype}
\usepackage{xcolor}
\usepackage{graphicx}
\usepackage{multirow}
\usepackage{algorithm}
\usepackage{algpseudocode}
\usepackage{enumitem}

\newtheorem{proposition}{Proposition}
\newtheorem{remark}{Remark}
\newtheorem{definition}{Definition}
\newtheorem{corollary}{Corollary}

\newtheorem{lemma}{Lemma}
\title{Unlocking Compositional Generalization in Continual Few-Shot Learning}

\author{%
  \resizebox{\linewidth}{!}{%
  \begin{tabular}{ccc}
    \bf Phu-Quy Nguyen-Lam$^{1,\dagger}$ &
    \bf Phu-Hoa Pham$^{1,\dagger}$ &
    \bf Dao Sy Duy Minh$^{1}$ \\[2pt]
    \footnotesize\texttt{23122048@student.hcmus.edu.vn} &
    \footnotesize\texttt{23122030@student.hcmus.edu.vn} &
    \footnotesize\texttt{23122041@student.hcmus.edu.vn}
  \end{tabular}}\\[10pt]
  \begin{tabular}{cc}
    \bf Chi-Nguyen Tran$^{1}$ &
    \bf Huynh Trung Kiet$^{1}$ \\[2pt]
    \footnotesize\texttt{23122044@student.hcmus.edu.vn} &
    \footnotesize\texttt{23122039@student.hcmus.edu.vn}
  \end{tabular}\\[10pt]
  \begin{tabular}{c}
    \bf Long Tran-Thanh$^{2,*}$ \\[2pt]
    \texttt{long.tran-thanh@warwick.ac.uk}
  \end{tabular}
}

\begin{document}
\maketitle

% Affiliation footnote — marker suppressed so only text appears bottom-left
{\let\thefootnote\relax\footnotetext{%
  $^{1}$Faculty of Information Technology, University of Science,
  Vietnam National University, Ho Chi Minh City, Vietnam.\quad
  $^{2}$Department of Computer Science, University of Warwick, Coventry, United Kingdom.\\
  $^{\dagger}$Equal contribution.\quad $^{*}$Corresponding author.
}}

% ============================================================
% SECTIONS — mỗi section là 1 file riêng trong sections/
% ============================================================

\input{sections/abstract}

\input{sections/introduction}

\input{sections/related}

\input{sections/preliminaries}

\input{sections/motivation}
\input{sections/method}
\input{sections/experiments}
\input{sections/conclusion}

% ============================================================
% BIBLIOGRAPHY
% ============================================================
% \clearpage
\bibliographystyle{plainnat}
\bibliography{references}

% ============================================================
% APPENDIX
% ============================================================
\newpage
\appendix
\input{sections/appendix}

\end{document}

%% file: sections/abstract.tex
%%%%%%%%%%%%%%%%%%%%%%%%%%%%%%%%%%%%%%%%%%%%%%%%%%%%%%%%%%
\begin{abstract}

Object-centric representations promise a key property for few-shot learning: Rather
than treating a scene as a single unit, a model can decompose it into individual
object-level parts that can be matched and compared across different concepts.
In practice, this potential is rarely realized. Continual learners either collapse
scenes into global embeddings, or train with part-level matching objectives that tie
representations too closely to seen patterns, leaving them unable to generalize to
truly novel concepts. In this paper, we identify this fundamental structural conflict
and pioneer a new paradigm that strictly decouples representation learning from
compositional inference. Leveraging the inherent patch-level semantic geometry of
self-supervised Vision Transformers (ViTs), our framework employs a dual-phase strategy.
During training, slot representations are optimized entirely toward holistic class
identity, preserving highly generalizable, object-level geometries. At inference,
preserved slots are dynamically composed to match novel scenes. We demonstrate 
that this paradigm offers dual structural benefits: The frozen backbone naturally 
prevents representation drift, while our lightweight, holistic optimization 
preserves the features' capacity for novel-concept transfer. Extensive experiments 
validate this approach, achieving state-of-the-art unseen-concept generalization 
and minimal forgetting across standard continual learning benchmarks.
% Object-centric representations promise a key property for few-shot learning: rather than treating a scene as a single unit, a model can decompose it into individual object-level parts that can be matched and compared across different concepts. In practice, however, this potential is rarely realized. Continual learners either collapse scenes into global embeddings, or train with part-level matching objectives that tie representations too closely to seen patterns, leaving them unable to generalize to truly novel concepts. In this paper, we identify this fundamental structural conflict and pioneer a new paradigm that strictly decouples representation learning from compositional inference. Leveraging the inherent patch-level semantic geometry of self-supervised Vision Transformers, our framework employs a dual-phase strategy. During training, slot representations are optimized entirely toward holistic class identity, preserving highly generalizable, object-level geometries. At inference, these preserved slots are dynamically composed to match novel scenes. We argue that novel-concept transfer and resistance to forgetting go together: by leveraging pretrained knowledge through a lightweight adaptation, representations that generalize to unseen concepts also resist forgetting. Extensive experiments validate this view, achieving state-of-the-art novel-concept transfer with competitive accuracy and substantially reduced forgetting across standard continual learning benchmarks.
\end{abstract}

%% file: sections/introduction.tex
%%%%%%%%%%%%%%%%%%%%%%%%%%%%%%%%%%%%%%%%%%%%%%%%%%%%%%%%%%
\section{Introduction}
\label{sec:intro}

Visual recognition in the real world is rarely a closed-book exam. A learner must continuously acquire new concepts from limited examples, with minimal ability to revisit the past, and crucially, without degrading what was previously learned. In the Continual Few-Shot Learning (CFSL) setting, this challenge is pushed to its extreme: The model must generalize to novel concepts that bear no resemblance to its training distribution, relying on only a handful of examples.

Object-centric representations are a natural candidate for this regime. If a scene can be decomposed into object-level representations, a model no longer needs to have seen a whole concept to recognize it; recognizing its parts suffices. In principle, a model built on such compositional structure should generalize robustly to novel concepts ~\citep{compositional_generalization}, and consequently, resist catastrophic forgetting ~\citep{cfst}. This structural potential, however, remains largely unrealized. Existing continual learners ~\citep{simpleshot, iCARL} discard object structure entirely, representing scenes as rigid global embeddings that fail precisely on novel concepts. Recent attempts to introduce slot attention~\citep{slot_attention} into continual learning similarly fail to generalize to truly novel concepts~\citep{compslot}. We trace this failure to two intertwining problems: (i) how object representations are extracted; and (ii) how they are optimized.
% Object-centric representations are a natural candidate for this regime. If a scene can be decomposed into object-level representations, then a model no longer needs to have seen a concept as a whole to recognize it since recognizing its parts is enough. In principle, a model built on such compositional structure should generalize robustly to novel concepts ~\citep{compositional_generalization}, and as a direct consequence, resist catastrophic forgetting ~\citep{cfst}. This structural potential, however, has never been fully realized. In practice, existing continual learners ~\citep{simpleshot, iCARL} discard object structure entirely, representing scenes as rigid global embeddings that fail precisely on novel concepts. Meanwhile, early attempts ~\citep{compslot} to introduce slot attention ~\citep{slot_attention} into continual learning also fail to generalize to truly novel concepts. We trace this failure to two coupled problems: how object representations are extracted, and how they are optimized.

\paragraph{Self-supervised geometry as a structural foundation.}
A promising direction is to build recognition around individual objects within a scene, rather than a single descriptor of the whole image. If a scene can be decomposed into object-level representations general enough to describe any concept, then generalizing to novel ones follows naturally. Self-supervised vision transformers are what make this decomposition structurally feasible~\citep{byol_paper, dinov1}. Through local self-distillation objectives, models such as the DINO series~\citep{dinov1,dinov2, dinov3} and iBOT~\citep{ibot} learn patch-level geometry in which patches from the same object naturally cluster together in feature space, a property that slot attention~\citep{slot_attention, dinosaur} can directly exploit to segment a scene into coherent object-level slots. Vision-language models such as CLIP~\citep{clip} and OpenCLIP~\citep{openclip} also develop spatially structured patch representations, as contrastive alignment with language implicitly encourages the encoder to ground visual features in semantically coherent regions. Supervised Vision Transformers~\citep{supervised_vit}, by contrast, are trained with global cross-entropy for inter-class discrimination and never develop this local geometric structure, leaving their patch tokens too noisy for reliable object decomposition. In this light, the quality of patch-level geometry becomes the structural foundation on which our approach is built.

\textbf{Our contributions}. Together, these observations point to a single underlying tension: Object-level decomposition requires the right geometric foundation to extract meaningful slots, and the right training objective to keep them general. In this paper we develop \textbf{COMPOSE}, a dual-phase framework built on a single insight that \emph{training and inference require fundamentally different objectives}. 
In particular, using compositional matching during training corrupts the very representations needed to generalize at inference. COMPOSE resolves this tension by a strict phase separation. During training, slots are optimized to recognize classes as a whole rather than matching specific parts, so the object-level structure from the self-supervised backbone is preserved. At inference, these same slots are composed part-by-part to match novel scenes. Because representations are never forced toward training-specific patterns, they transfer naturally to unseen concepts. A model that generalizes to unseen concepts may have learned representations general enough not to forget.

\noindent
Against this background, the contributions of this paper are three-fold:
\begin{itemize}
  \item \textbf{Rethinking Slot Representations.} We identify self-supervised patch geometry as a strict prerequisite for object-centric slot decomposition, and replace recurrent GRU states with attention-weighted aggregates that significantly improve slot purity.
  \item \textbf{Diagnosing and Escaping the Compositional Trap.} We analytically establish that part-level matching imposes divergent slot-specific gradients, structurally driving the over-specialization that degrades novel-concept transfer. We escape this trap via strict phase separation: Holistic training, and compositional inference. A cross-correlation penalty prevents slot collapse without the embedding conflicts of variance-hinge alternatives.  \item \textbf{Novel-Concept Transfer Drives Continual Robustness.} Our
  framework achieves state-of-the-art performance on the unseen-concept split of CGQA ($94.14\%$ noc), and this novel-concept robustness directly
  translates to superior performance on standard continual learning benchmarks (MiniImageNet, ImageNet-R, CUB200, CIFAR100), with negligible catastrophic forgetting and minimal training overhead.
\end{itemize}

%% file: sections/related.tex
\section{Related Work}
\label{sec:related}
%%%%%%%%%%%%%%%%%%%%%%%%%%%%%%%%%%%%%%%%%%%%%%%%%%%%%%%%%%

\paragraph{Pre-trained global models for continual learning.}
A dominant family of class-incremental methods exploits frozen or lightly adapted vision transformers. RanPAC~\citep{ranpac} inserts random projections, EASE~\citep{ease} expands representation dimensionality via task-specific adapters, and CoFiMA~\citep{cofima} fine-tunes the backbone using Fisher-weighted parameter merging. These methods are efficient and perform strongly on closed-world benchmarks, but share a structural vulnerability: they represent every scene as a monolithic global vector. When asked to generalize to concepts with zero overlap with the training distribution, they hit a hard ceiling, lacking any mechanism to decompose scenes into reusable parts. COMPOSE avoids this coupling by construction: segmenting scenes into object-centric slots preserves the modularity needed for compositional generalization.

\paragraph{Self-supervised geometry and representation pollution.}
Object-centric routing, popularized by Slot Attention~\citep{slot_attention} and scaled by DINOSAUR~\citep{dinosaur}, provides the inductive bias needed to break the global coupling described above. These methods iteratively group patches into $K$ discrete slots. However, their success is highly sensitive to the underlying feature geometry. Self-supervised models like DINOv2~\citep{dinov2} and iBOT~\citep{ibot} naturally induce patch-level semantic clustering. Vision-language models such as CLIP~\citep{clip} and its open-source variants~\citep{openclip} also develop rich patch-level structure, as contrastive alignment with language implicitly encourages
spatially grounded representations. Supervised ViTs~\citep{supervised_vit}, conversely, are optimized for global discrimination and lack this local structure. Despite this, prior slot-centric methods treat the backbone as an interchangeable black box, and our work explicitly identifies self-supervised geometry as a strict prerequisite, replacing the recurrent state with a direct attention-weighted aggregate that restores slot purity and preserves the
geometries necessary for downstream matching.

\paragraph{Compositional generalization in continual learning.}
The CFST benchmark~\citep{cfst} formalizes the evaluation of compositionality in continual learning, particularly penalizing models that fail on novel concepts. Compositional FSCIL~\citep{Comp-FSCIL} and CZSL ~\citep{cczsl} attempt to model primitive reuse, but either operate without explicit spatial decomposition or rely on closed-world assumptions. %Concurrent to our work, CompSLOT~\citep{compslot} introduces slot attention as a regularizer for continual learners. While conceptually aligned with our goal of object-centric learning, CompSLOT falls into two structural traps. First, it relies on a supervised ViT-B/16 backbone, which, as noted above, lacks the patch-level semantic geometry required for reliable decomposition. \textcolor{red}{Second, it optimizes part-level matching directly. As our theoretical analysis shows, such matching criteria drive slot over-specialization, destroying the generalizability of the representations. COMPOSE uniquely resolves this by decoupling the process entirely: employing a holistic objective with cross-correlation redundancy reduction during training, and reserving compositional Chamfer matching strictly for inference.}

%% file: sections/preliminaries.tex
%%%%%%%%%%%%%%%%%%%%%%%%%%%%%%%%%%%%%%%%%%%%%%%%%%%%%%%%%%
\section{Preliminaries}
\label{sec:prelim}
%%%%%%%%%%%%%%%%%%%%%%%%%%%%%%%%%%%%%%%%%%%%%%%%%%%%%%%%%%

% STATUS: [MOSTLY STABLE  them remark ve self-supervised geometry o 3.2]
%
% TODO:
%   - [ ] Them Remark ve tai sao DINOv2 patch tokens phu hop voi slot
%         attention hon supervised ViT tokens (truoc khi vao experiments)

\subsection{Compositional Few-Shot Transfer (CFST)}
\label{sec:cfst}

CFST~\citep{cfst} evaluates compositional generalization in a continual
few-shot setting. A model trains sequentially on $T$ sessions
$\{\mathcal{T}_1,\ldots,\mathcal{T}_T\}$, each introducing $C_t$ novel classes
over a shared concept pool $\mathcal{C}$. After continual training, the feature
extractor is \emph{frozen} and evaluated on five disjoint few-shot splits.
Let $\mathcal{Y}_\text{tr}$ denote training combinations and $\mathcal{C}$ the
training concept pool.

\begin{itemize}
    \item \textbf{Systematicity (sys):} Novel concept combinations not seen
    in $\mathcal{Y}_\text{tr}$, with the same concept count as training.
    All concepts $\mathcal{C}_\text{sys} \subseteq \mathcal{C}$ have been seen.

    \item \textbf{Productivity (pro):} Novel combinations requiring \emph{more}
    concepts than training. Tests whether knowledge scales to more complex
    compositions.

    \item \textbf{Substitutivity (sub):} Same concept distribution as sys
    but different conditional visual appearance. Tests attribute-level
    generalization, e.g., recognizing a green shirt given red shirts and green
    grass in training.

    \item \textbf{Non-novel (non):} Few-shot evaluation on \emph{trained
    combinations}. Serves as an upper reference confirming the feature
    extractor retains knowledge of seen combinations.

    \item \textbf{Non-compositional (noc):} Few-shot evaluation on
    combinations involving \emph{entirely unseen concepts},
    $\mathcal{C}_\text{noc} \cap \mathcal{C} = \emptyset$. Serves as the
    primary lower reference measuring general representational transferability
    independent of compositional priors. \textit{This is our focal metric.}
\end{itemize}
The primary metric is $H_a$, the harmonic mean over all available split
accuracies, which penalizes models that improve some axes at the expense of others. In addition to the five few-shot splits, CFST reports $A_\text{con}$, the average test accuracy across all $T$ continual tasks after the final session, serving as a measure of catastrophic forgetting.

\textbf{Datasets.} CGQA (Compositional GQA; 100 classes, 10 sessions of 10 classes) uses synthetic composite images: two objects randomly placed in a $2{\times}2$ grid. Clear spatial structure makes slot--object alignment reliable. COBJ (Compositional Objects365; 30 classes, 3 sessions) uses natural photographs with complex scene statistics that make slot decomposition noisier.

\subsection{Slot Attention}
\label{sec:slot_attn}
Given image $x$, a frozen Vision Transformer (ViT) extracts $N$ patch tokens
$F = \{F_n\}_{n=1}^{N} \subset \mathbb{R}^{D}$ from a fixed intermediate
block. Slot Attention~\citep{slot_attention} maintains $K$ slot vectors
$\{s_k\}_{k=1}^{K} \subset \mathbb{R}^{D}$, initialized from a learned
Gaussian prior. Each slot attends to patch features via competitive softmax:
\begin{equation}
    \alpha_{kn} = \frac{
        \exp\!\bigl(\tfrac{1}{\sqrt{D}}\langle W_q s_k, W_k F_n\rangle\bigr)
    }{
        \sum_{k'}\exp\!\bigl(\tfrac{1}{\sqrt{D}}\langle W_q s_{k'},
        W_k F_n\rangle\bigr)
    },
    \label{eq:slot_attn}
\end{equation}
% \subsection{Slot Attention}
% \label{sec:slot_attn}
% Given image $x$, a frozen Vision Transformer extracts $N$ patch tokens
% $F = \{F_n\}_{n=1}^{N} \subset \mathbb{R}^{d}$ from a fixed intermediate
% block. Slot Attention~\citep{slot_attention} maintains $K$ slot vectors
% $\{s_k\}_{k=1}^{K} \subset \mathbb{R}^{d}$, initialized from a learned
% Gaussian prior. Each slot attends to patch features via competitive softmax:
% \begin{equation}
%     \alpha_{kn} = \frac{
%         \exp\!\bigl(\tfrac{1}{\sqrt{d}}\langle W_q s_k, W_k F_n\rangle\bigr)
%     }{
%         \sum_{k'}\exp\!\bigl(\tfrac{1}{\sqrt{d}}\langle W_q s_{k'},
%         W_k F_n\rangle\bigr)
%     },
%     \label{eq:slot_attn}
% \end{equation}
where $\alpha_{kn}$ is the normalized attention weight of slot $k$ on patch $n$. The competitive softmax across slots encourages each slot to specialize to a distinct spatial region: for a given patch $n$, only the slot with the highest affinity receives substantial weight, while others are suppressed. The resulting per-slot attention map $\{\alpha_{kn}\}_{n=1}^{N}$ thus acts as a soft spatial mask, localizing the image region that slot $k$ binds to. When the underlying patch features carry coherent object-level semantics (as in self-supervised Vision Transformers), these masks tend to align with object boundaries, allowing each slot to capture the visual content of a distinct object or scene region. The slot is then updated via a GRU using a patch-wise weighted mean of value-projected tokens: $s_k \leftarrow \text{GRU}\!\bigl(s_k,\, \textstyle\sum_n \hat\alpha_{kn} W_v F_n\bigr)$, where $\hat\alpha_{kn} = \alpha_{kn} / \sum_{n'} \alpha_{kn'}$ renormalizes the slot-wise weights across spatial tokens, following the original Slot Attention formulation~\citep{slot_attention}. We use the publicly released DINOSAUR checkpoint~\citep{dinosaur} trained on COCO 2017.

\begin{remark}[Self-supervised geometry enables slot decomposition]
\label{rem:selfsup_geometry}
% TODO: expand nay thanh argument yang manh hon
% Key point: DINOv2 patch tokens F_n carry local semantic identity.
% When slot k competes for patches via _{kn}, it naturally specializes
% to a coherent spatial region because neighboring patches share semantic
% content under self-supervised training.
% Supervised ViT tokens lack this local consistency  competition is noisy
%  slots fail to identify coherent objects.
The competitive softmax in Eq.~\eqref{eq:slot_attn} can only identify coherent
object regions when nearby patches carry consistent semantic content.
Self-supervised ViTs (DINOv2, DINOv1, iBOT) learn such patch-level semantic consistency
through local self-distillation objectives, enabling reliable slot decomposition.
Supervised ViTs, trained with global cross-entropy, do not---leading to noisy
slot assignments and poor slot purity. % ($< XX.XX$ on COCO validation set; seeSection~\ref{sec:selfsup_prerequisite}).
\end{remark}

\subsection{Problem Setup}
\label{sec:setup}
The model observes $T$ sessions sequentially. At session $t$, it accesses
support set $\mathcal{S}_t$ with $C_t$ classes and $M$ images per class; at
meta-test time, $N$-way $K$-shot episodes require classifying $Q$ query images
given $N$ classes with $K$ labeled examples each. No prior-session images are
available at test time. The goal is to maximize $H_a$ over all CFST evaluation
splits, with noc as the primary focal metric.

%% file: sections/motivation.tex
\section{Rethinking Slot-Centric Continual Learning}
\label{sec:rethinking}
%%%%%%%%%%%%%%%%%%%%%%%%%%%%%%%%%%%%%%%%%%%%%%%%%%%%%%%%%%

To realize the structural potential of object-centric learning in CFSL, we must identify why existing slot-based methods collapse when confronted with novel concepts. Our analysis reveals that this failure is not a monolith but the result of two coupled flaws. In particular, it is due to representation pollution at the architectural level, and the compositional trap at the optimization level.

\subsection{Representation Pollution: The Necessity of Preserved Geometry}
\label{subsec:pollution}
For slot routing to isolate objects without supervision, the feature space must possess patch-level semantic geometry. That is, patches of the same object must naturally cluster together. As established in Section~\ref{sec:related}, supervised ViTs lack this property, making a self-supervised backbone a strict prerequisite for continual slot decomposition.

However, correct geometry is insufficient if the slot architecture pollutes it. Standard slot attention~\citep{slot_attention, dinosaur} takes the GRU hidden state $s_k$ as the slot representation. We argue this recurrent bottleneck is detrimental: $s_k$ encodes routing dynamics (how to suppress competing slots), intertwining spatial logic with visual content and corrupting the backbone's semantic structure.

To preserve the backbone's semantic geometry, we discard $s_k$ entirely and instead compute an attention-weighted aggregate directly from the frozen backbone features:
\begin{equation}
\label{eq:phi}
\tilde{\phi}_k = \text{L2-norm}\!\left(\sum_{n=1}^{N} \alpha_{kn} F_n\right) \in \mathcal{S}^{D-1},
\end{equation}
where $F_n$ are ViT patch tokens and $\alpha_{kn}$ are slot $k$'s attention weights as defined in Eq.~\eqref{eq:slot_attn} (the L2-normalization makes the result invariant to whether the weights are taken raw or patch-wise renormalized). Unlike the latent GRU state $s_k$, $\tilde{\phi}_k$ is a direct attention-weighted readout of the patch features. By bypassing the recurrent bottleneck, it resides strictly within the backbone's pristine semantic space, yielding a substantial gain in slot purity on COCO val (Appendix~\ref{app:purity_extended}).

\subsection{The Compositional Optimization Trap}
\label{subsec:trap}
Even with preserved slot representations, training the model requires an objective function. When comparing a query scene to a support class, the foundational intuition in compositional learning is to match them part-by-part. Mathematically, this is typically realized by finding the nearest support slot for each query slot, a set-to-set nearest-neighbor assignment commonly formalized as the Chamfer distance~\citep{chamfer_distance}. We define the reliance on this mechanism during training as the \emph{compositional trap}: optimizing directly for part-level matching paradoxically destroys representational generality.
 
Consider a query scene with extracted slots $\{z_k\}$ and a support class prototype composed of support slots $\{z_{k'}^{(c)}\}$. A part-level objective computes the best local match $k^*(k)$ for each query slot $k$. By analyzing the gradient of this operation (formal statement in Appendix~\ref{app:gradient_divergence}; dynamical consequences in Appendix~\ref{app:dynamics}), we establish the following structural property:
 
\textbf{Takeaway 1 (Gradient Directional Variance).} \emph{Under any part-level nearest-neighbor objective, the gradient direction applied to slot $z_k$ is governed by slot-specific factors, namely: the projection anchor at $z_k$ itself and its matched target $z_{k^*(k)}^{(c)}$. Consequently, different slots receive structurally divergent gradient directions.}
 
This slot-specific gradient pressure forces each slot to optimize its own local similarity independently. Over many episodes, slots over-specialize to recurring local patterns (e.g., a specific texture or edge of a training object) rather than learning reusable whole-object identities. When confronted with a novel concept ($noc$), these over-specialized slots fail to find meaningful matches because their learned features are too narrowly tied to the training distribution.

In contrast, a holistic objective applied to the averaged slot embedding propagates a shared gradient direction across all slots. Crucially, because the router computes importance weights $\omega_k$ from the frozen backbone aggregates $\tilde{\phi}_k$ via a disjoint parameter branch (Section~\ref{subsec:phase1}), these weights act as constants with respect to the projection head $W_2$. Consequently, the gradient on each projected slot $\check{y}_k = W_2 \tilde{\phi}_k$ differs only by the non-negative scalar $\omega_k$. This aligns the entire set toward a shared class identity without forcing local, part-level specialization.
% In contrast, a holistic objective such as prototype cross-entropy on the averaged slot embedding propagates a \emph{shared gradient direction across all slots}: the gradient on each projection $\check y_k = W_2 \tilde\phi_k$ is proportional to a single signal, scaled only by $\alpha_k / \|u\|$. Because the router acts on raw $\tilde\phi_k$ with parameters disjoint from $W_2$, no additional gradient path arises. This aligns slots toward a shared class identity without forcing part-level specialization. The over-specialization trap is not an artifact of the hard argmax: it holds structurally for assignment-based matchers in the practical regime, including soft Chamfer, mutual nearest-neighbor, Sinkhorn OT, and differentiable Hungarian (Appendix~\ref{app:gradient_extended}, Corollary~\ref{cor:assignment_gradient}, Lemma~\ref{lem:sinkhorn_low_eps}). Uniform coupling arises only when regularization dominates and the matcher ceases to function as such, coinciding structurally with holistic aggregation (Remark~\ref{rem:matcher_spectrum}).

\subsection{The Role of Redundancy Reduction in Shaping the Embedding Space}
\label{subsec:redundancy}
However, training purely with a holistic objective introduces the complementary failure mode called \emph{slot collapse}, where slots converge to redundant directions~\citep{barlow_twins,vicreg,slot_attention,dinosaur}.
A redundancy reduction mechanism is required, but not all decorrelation objectives are compatible with prototype learning. VICReg~\cite{vicreg} enforces a per-dimension variance-hinge, which conflicts with the prototype CE objective: neural collapse~\cite{neural_collapse} drives within-class variability to zero, triggering the hinge.

We find a cross-correlation penalty (inspired by Barlow Twins~\citep{barlow_twins}) harmonizes with prototype learning. Let $C \in [-1, 1]^{D \times D}$ be the correlation matrix of the raw projections $\check{y}_k = W_2 \tilde\phi_k$, standardized per-dimension and computed globally across all $B \times K$ slots in the episode batch. Penalizing only its off-diagonal entries,
\begin{equation}
\mathcal{L}_{decorr} = \lambda_d \|C - I\|_F^2,
\end{equation}
encourages distributed feature utilization without rigid per-dimension variance constraints. We analyze this formally in Appendix~\ref{app:operator_distinction}: CE is invariant under orthogonal rebasings of $W_2$ (Lemma~\ref{lem:ce_rotation}), so every CE-optimum admits a free orthogonal rebasing that is simultaneously CC-optimal (Proposition~\ref{prop:cc_feasible}), and decorrelation along this direction incurs no CE pushback. In contrast, spectral whitening has a strictly positive lower bound on L2-normalized embeddings (Proposition~\ref{prop:spec}), and variance-hinge, though nominally feasible via uniform scaling, is dynamically antagonistic to CE in the tight-cluster regime (Propositions~\ref{prop:var},~\ref{prop:vicreg_dynamics}). Empirical comparison is in Appendix~\ref{app:decor_mechanism}.

%% file: sections/method.tex
\section{The COMPOSE Framework}
\label{sec:compose}
%%%%%%%%%%%%%%%%%%%%%%%%%%%%%%%%%%%%%%%%%%%%%%%%%%%%%%%%%%

% As illustrated in Figure~\ref{fig:pipeline}, COMPOSE operates as a dual-phase pipeline. Phase I learns generalizable slot embeddings using a holistic objective and redundancy reduction under continual few-shot supervision. Phase II freezes all weights and performs gradient-free compositional matching to recognize novel concepts.
We propose COMPOSE, which strictly separates representation learning (Phase I: updating the router and projection head alongside a 20-exemplar/class replay buffer) from backprop-free compositional inference (Phase II). By freezing the backbone and slot attention, we prevent feature drift and strictly bound catastrophic forgetting. Figure ~\ref{fig:pipeline} illustrates the full pipeline.

% --------------------------------------------------------
% Chèn ảnh ngang (chiếm 2 cột nếu là format NeurIPS/ICLR)
\begin{figure*}[t]
    \centering
    \includegraphics[width=\textwidth]{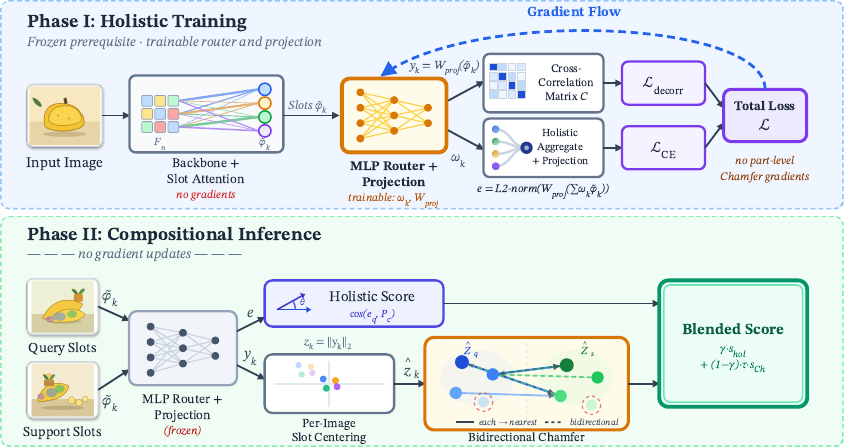}
    \caption{\textbf{Top (Phase I):} A frozen ViT and slot attention process images. A trainable MLP router and projection head map raw aggregates to unit-norm embeddings, optimized via cross-entropy and a cross-correlation penalty. \textbf{Bottom (Phase II):} Gradient-free inference composes novel scenes by centering and matching slots via bidirectional Chamfer distance.}
    \label{fig:pipeline}
\end{figure*}
% --------------------------------------------------------
\subsection{Phase I: Holistic Training}
\label{subsec:phase1}

\paragraph{Slot Extraction.}
Given an input image, we extract patch features $F_n \in \mathbb{R}^{D}$
using a frozen self-supervised Vision Transformer backbone. A frozen slot
attention module computes attention weights $\alpha_{kn}$ for $K$ slots.
To preserve the backbone's semantic geometry (as analysed in
Section~\ref{sec:rethinking}), we bypass recurrent states and directly
compute the attention-weighted aggregate as defined in Eq.~\ref{eq:phi}.

\paragraph{MLP Router.}
With the backbone and slot attention frozen, the trainable component in
Phase~I comprises a lightweight \emph{router} and a \emph{projection head}.
We frame slot aggregation as a multiple instance learning (MIL) problem:
each image is a bag of $K$ slots with only class-level supervision. Under this
view, the router instantiates the attention-based MIL pooling of
\citet{mil_attention}, which provides permutation-invariant,
interpretable aggregation under weak supervision. Concretely, the router
assigns a scalar importance weight to each slot via a one-hidden-layer MLP:
\begin{equation}
    \omega_k = \frac{\exp\!\left(v^\top \sigma(W_1\tilde{\phi}_k)\right)}
                    {\sum_{k'=1}^{K}
                     \exp\!\left(v^\top \sigma(W_1\tilde{\phi}_{k'})\right)},
    \label{eq:router}
\end{equation}
where $W_1 \in \mathbb{R}^{h \times D}$, $v \in \mathbb{R}^{h}$, and $\sigma$
is ReLU. The router operates directly on $\tilde{\phi}_k \in \mathcal{S}^{D-1}$,
the L2-normalized slot aggregates that reside in the backbone's semantic space,
providing a stable and geometry-preserving signal for slot selection at both
training and inference.

The projection head $W_2 \in \mathbb{R}^{D \times D}$ (initialised as the
identity) is shared across two uses. Applied per-slot independently, it
produces unit-norm slot embeddings:
\begin{equation}
    z_k = \mathrm{L2\text{-}norm}\!\left(W_2\,\tilde{\phi}_k\right)
          \in \mathcal{S}^{D-1}.
    \label{eq:slot_proj}
\end{equation}
Applied to the router-weighted aggregate, it produces the holistic
embedding:
\begin{equation}
    e = \mathrm{L2\text{-}norm}\!\left(
            W_2 \sum_{k=1}^{K} \omega_k\, \tilde{\phi}_k
        \right) \in \mathcal{S}^{D-1}.
    \label{eq:holistic}
\end{equation}
Processing slots independently with shared weights in
Eq.~\eqref{eq:slot_proj} preserves permutation equivariance across slots,
and the linearity of $W_2$ ensures that the holistic embedding $e$
equivalently aggregates the per-slot projections $W_2\tilde\phi_k$ before
normalisation.

\paragraph{Training Objective.}
The router and projection head are jointly optimised using a holistic
prototype cross-entropy loss coupled with a cross-correlation redundancy
reduction penalty:
\begin{equation}
    \mathcal{L} = \mathrm{CE}\!\left(\cos(e_q, P_c) \cdot \tau,\; y\right)
                + \lambda_d \|C - I\|_F^2,
\end{equation}
where $P_c = \mathrm{L2\text{-}norm}\!\left(\frac{1}{|S_c|}\sum_{i \in S_c} e_i\right)$ is the prototype dynamically computed from the active support set $S_c$ (recomputed per episode during few-shot inference, whereas standard CIL uses the stored memory $\mathcal{M}$), $\tau$ is a learned temperature, and $C$ is the batch-wide cross-correlation of the unnormalized slot projections
$\check{y}_k = W_2\tilde{\phi}_k \in \mathbb{R}^D$ (Appendix~\ref{app:operator_distinction}). \textit{Crucially, no part-level Chamfer matching is applied during Phase~I.}
% where $P_c = \mathrm{L2\text{-}norm}\!\left(\frac{1}{|S_c|}\sum_{i \in S_c}
% e_i\right)$ is the class prototype in embedding space, $\tau$ is a learned
% temperature, and $C \in \mathbb{R}^{D \times D}$ is the cross-correlation of
% standardised per-slot projections $W_2\tilde\phi_k$ across the episode batch
% (Appendix~\ref{app:operator_distinction}). \textbf{Crucially, no part-level
% Chamfer matching is applied during Phase~I.}

\subsection{Phase II: Compositional Inference}
\label{subsec:phase2}
With the router trained holistically, Phase~II exploits the preserved slot
geometry to compose novel scenes. All network weights are frozen; no gradient
reaches any component.

\paragraph{Per-image Slot Centering.}
Before matching, we convert absolute slot features into relative structural
contrasts within each image by subtracting the per-image slot mean:
\begin{equation}
    \hat{z}^{(i)}_k = \mathrm{L2\text{-}norm}\!\left(
        z^{(i)}_k - \frac{1}{K}\sum_{k'} z^{(i)}_{k'}
    \right).
    \label{eq:centering}
\end{equation}
This removes shared background and illumination biases while preserving the
inter-slot differences that carry the scene's compositional structure.

\paragraph{Bidirectional Matching.}
The holistic score is obtained directly from the Phase~I holistic embeddings
(Eq.~\eqref{eq:holistic}), before centering:
\begin{equation}
    s_\text{hol}(q,c) = \tau \cdot \cos\!\left(e_q,\; P_c\right),
    \label{eq:holistic_score}
\end{equation}
where $P_c = \mathrm{L2\text{-}norm}\!\left(\frac{1}{|S_c|}\sum_{i \in S_c}
e_i\right)$ is the class prototype in embedding space.  The compositional score
exploits part-level composability via bidirectional Chamfer matching on the
centered per-slot projections:
\begin{equation}
    s_\text{Ch}(q,c) =
        \frac{1}{|\mathcal{F}_q|}\sum_{k\in\mathcal{F}_q}
            \max_{k'\in\mathcal{S}_c}\cos(\hat{z}^{(q)}_k, \hat{z}^{(c)}_{k'})
        +\frac{1}{|\mathcal{S}_c|}\sum_{k'\in\mathcal{S}_c}
            \max_{k\in\mathcal{F}_q}\cos(\hat{z}^{(c)}_{k'}, \hat{z}^{(q)}_k),
    \label{eq:chamfer_inf}
\end{equation}
where $\mathcal{F}_q$ denotes the top-$\kappa$ ($\kappa=4$ in all experiments) query slots ranked by router importance weight $\omega_k$ (Eq.~\eqref{eq:router}), and $\mathcal{S}_c$ is the union of top-$\kappa$ slots across all support images of class $c$. To prevent scale mismatch, the raw Chamfer similarity is identically scaled by the learned temperature $\tau$. The final prediction blends the two scores:
\begin{equation}
    s(q,c) = \gamma \cdot s_\text{hol}(q,c) + (1-\gamma) \cdot \tau \cdot s_\text{Ch}(q,c),
    \label{eq:blend}
\end{equation}
where $\gamma \in [0,1]$ balances holistic class identity against part-level compositional matching.

%% file: sections/experiments.tex
\section{Experiments}
\label{sec:experiments}
%%%%%%%%%%%%%%%%%%%%%%%%%%%%%%%%%%%%%%%%%%%%%%%%%%%%%%%%%%

% STATUS SUMMARY (April 2026):
%   - §6.1 Setup:         DONE (FSCIL datasets + new baselines added)
%   - §6.2 SSL Prereq:    DONE (DINOv1 / iBOT / DINOv2 table filled)
%   - §6.3 Main Results:  UNCHANGED (same as original)
%   - §6.4 FSCIL:         DONE (new section, pending RanPAC variants)
%   - §6.5 CL Phase:      DONE (COMPOSE/CT filled; RanPAC/EASE/CoFiMA still [XX.XX])
%   - §6.6 Ablation:      UNCHANGED
%   - §6.7 Alt Redundancy:UNCHANGED
%   - §6.8 Add. Analyses: UNCHANGED
%
% REMAINING TODO:
%   [ ] §6.4: điền RanPAC (DINOv1-B/16) và RanPAC* (DINOv1-B/16+CompSLOT)
%   [ ] §6.5: điền AA/CA/BWT cho RanPAC, EASE, CoFiMA; điền CA cho COMPOSE/CT
%   [ ] §6.2: supervised ViT numbers (xem appendix.tex / email CompSLOT authors)
%   [ ] §6.2: đo slot purity cho DINOv1 và iBOT nếu có thể

% ============================================================
\subsection{Setup}
\label{sec:setup_exp}

\paragraph{Datasets and protocol.}
We evaluate on two benchmark families: (i) \textit{Compositional Few-Shot Testing (CFST)}~\citep{cfst}, which explicitly tests compositional generalization, and (ii) \textit{standard continual learning} protocols (CIL and FSCIL), which assess conventional class-incremental performance. \textit{CFST} comprises CGQA (100 classes, 10 sessions of 10 classes, five compositional splits) and COBJ (30 classes, 3 sessions, four splits), averaged over three seeds (42, 123, 7); ablations use seed~42. Each CFST run uses 300 episodes of 10-way 10-shot. \textit{Standard CL benchmarks} cover CIL on CIFAR-100, ImageNet-R, and ImageNet-A, and FSCIL on CIFAR-100, miniImageNet, and CUB-200; full results appear in Appendix~\ref{app:standard_cil}.

\paragraph{Baselines and implementation.}
All CFST baselines share a frozen DINOv2 ViT-B/14~\citep{dinov2} backbone paired with a DINOSAUR-style slot attention~\citep{dinosaur} trained on COCO~2017, isolating the effect of learning strategy from backbone quality. We compare against zero-shot methods (DINOv2-Proto, SimpleShot~\citep{simpleshot}), offline meta-trained methods (TransSlot and MetaProtoNet, both self-implemented), PTM-based CIL methods (RanPAC, EASE~\citep{ease}, CoFiMA~\citep{cofima}), and Comp-FSCIL~\citep{Comp-FSCIL} on FSCIL benchmarks; for the SSL prerequisite analysis we additionally swap in DINOv1 ViT-B/16~\citep{dinov1} and iBOT ViT-B/16~\citep{ibot} while keeping all other components fixed. COMPOSE uses $K{=}7$ slots, a router hidden dimension of $h{=}64$, a $D{\times}D$ projection head, Adam ($\text{lr}{=}10^{-3}$, $\lambda_d{=}0.02$), and $\gamma{=}0.3$; unless noted, these hyperparameters are shared across all backbone variants and benchmark families.
\subsection{Main Results}
\label{sec:main_results}
% ============================================================
Table~\ref{tab:main_reduce} reports CGQA results; full per-split breakdown (sys, pro, sub, non, noc) and COBJ results are deferred to Appendix~\ref{app:full_cfst_results}. On CGQA, other meta-trained methods degrade noc (TransSlot: 69.68\%, MetaProtoNet: 79.32\%), and even CoFiMA (full DINOv2 fine-tuning in ${\approx}4$\,h) trails by 2.17\,pp, confirming that fine-tuning does not recover the structural invariances needed for novel concept generalization. On COBJ (Appendix~\ref{app:full_cfst_results}), COMPOSE achieves competitive noc (88.42\%) but trails EASE on $H_a$ (76.83 vs.\ 80.85); sensitivity analysis points to slot quality on natural images as the primary bottleneck ($\alpha_\text{blend}{=}0.7$ recovers $H_a{=}78.73$). Table~\ref{tab:backbone_type_reduce} confirms that \textit{noc improves monotonically with pretraining quality}: 88.69\% (DINOv1) $\to$ 90.09\% (iBOT) $\to$ 94.14\% (DINOv2), showing that noc is sensitive to how densely the backbone encodes spatial object identity. Across all variants, the backbone remains fully frozen throughout continual training. Full per-split results, including the COMPOSE vs.\ COMPOSE-CT gap across all backbones and the resulting noc--sys trade-off analysis, appear in Appendix~\ref{app:full_backbone_ablation_results}.

\begin{table}[t]
\caption{CFST benchmark on \textbf{CGQA}, 10-way 10-shot (mean over 3 seeds: 42, 123, 7).
$\pm$ci $= 95\%$ CI. $\dagger$: frozen DINOv2 ViT-B/14.
$\ddagger$: CoFiMA also fine-tunes DINOv2 (${\approx}4$\,h/dataset).
$\star$: episodically meta-trained.
$\S$: single-seed result (seed 42 only).
\textbf{Bold} = best overall; \underline{Underline} = best frozen-backbone.
Full per-split results and COBJ results in Appendix~\ref{app:full_cfst_results}.}
\label{tab:main_reduce}
\centering
\footnotesize
\setlength{\tabcolsep}{3pt}
\resizebox{\textwidth}{!}{%
\begin{tabular}{lccccccccc}
\toprule
& \multicolumn{2}{c}{\textit{Zero-shot}} 
& \multicolumn{2}{c}{\textit{Offline meta}} 
& \multicolumn{3}{c}{\textit{PTM-based CIL}} 
& \multicolumn{2}{c}{\textit{Ours}} \\
\cmidrule(lr){2-3} \cmidrule(lr){4-5} \cmidrule(lr){6-8} \cmidrule(lr){9-10}
& DINOv2-Proto$^\dagger$
& SimpleShot$^\dagger$
& TransSlot$^{\dagger\star}$
& MetaProto$^{\dagger\star}$
& RanPAC$^{\dagger\S}$
& EASE$^{\dagger\S}$
& CoFiMA$^{\ddagger\S}$
& COMPOSE$^{\dagger\star}$
& COMPOSE-CT$^{\dagger\star}$ \\
\midrule
noc
  & $83.28_{\pm.46}$ & $86.79_{\pm1.29}$
  & $69.68_{\pm.37}$ & $79.32_{\pm.35}$
  & 88.60 & 91.21 & 91.97
  & \underline{\textbf{94.14}$_{\pm.64}$} & $91.61_{\pm.75}$ \\
$H_a\uparrow$
  & $67.41_{\pm.49}$ & $75.13_{\pm.91}$
  & $74.60_{\pm1.03}$ & $83.63_{\pm.04}$
  & 87.28 & 92.06 & \textbf{94.74}
  & \underline{$92.92_{\pm.42}$} & $93.60_{\pm.37}$ \\
\bottomrule
\end{tabular}%
}
\end{table}
% \begin{table}[t]
% \caption{CFST benchmark on \textbf{CGQA}, 10-way 10-shot (mean over 3 seeds: 42, 123, 7).
% $\pm$ci $= 95\%$ CI. $\dagger$: frozen DINOv2 ViT-B/14.
% $\ddagger$: CoFiMA also fine-tunes DINOv2 (${\approx}4$\,h/dataset).
% $\star$: episodically meta-trained.
% \textbf{Bold} = best overall; \underline{Underline} = best frozen-backbone.
% Full per-split results and COBJ results in Appendix~\ref{app:full_cfst_results}.}
% \label{tab:main_reduce}
% \centering
% \footnotesize
% \setlength{\tabcolsep}{3pt}
% \resizebox{\textwidth}{!}{%
% \begin{tabular}{lcccccccccc}
% \toprule
% & \multicolumn{3}{c}{\textit{Zero-shot}} 
% & \multicolumn{2}{c}{\textit{Offline meta}} 
% & \multicolumn{3}{c}{\textit{PTM-based CIL}} 
% & \multicolumn{2}{c}{\textit{Ours}} \\
% \cmidrule(lr){2-4} \cmidrule(lr){5-6} \cmidrule(lr){7-9} \cmidrule(lr){10-11}
% & DINOv2-Proto$^\dagger$
% & SimpleShot$^\dagger$
% & RanPAC-Slot$^\dagger$
% & TransSlot$^{\dagger\star}$
% & MetaProto$^{\dagger\star}$
% & RanPAC$^\dagger$
% & EASE$^\dagger$
% & CoFiMA$^\ddagger$
% & COMPOSE$^{\dagger\star}$
% & COMPOSE-CT$^{\dagger\star}$ \\
% \midrule
% noc
%   & 83.40 & 87.03 & 88.94
%   & 68.61 & $75.48_{\pm.80}$
%   & 88.60 & 91.21 & 91.97
%   & \underline{\textbf{94.15}$_{\pm.67}$} & $91.64_{\pm.62}$ \\
% $H_a\uparrow$
%   & 67.36 & 75.07 & 77.56
%   & 74.30 & $83.64_{\pm.04}$
%   & 87.28 & 92.06 & \textbf{94.74}
%   & \underline{92.92} & 93.62 \\
% \bottomrule
% \end{tabular}%
% }
% \end{table}

\begin{table}[t]
\caption{Effect of self-supervised backbone on COMPOSE
on the \textit{CGQA} benchmark
(10-way 10-shot, mean over 3 seeds: 42, 123, 7). All other COMPOSE components fixed (same DINOSAUR slot attention, same MLP router architecture).
$\ddagger$: Results taken from~\citep{compslot};
all methods in that block use an ImageNet-21K supervised ViT-B/16 backbone.
\textbf{Bold} = best noc; \underline{Underline} = best $H_a$.}
\label{tab:backbone_type_reduce}
\centering
\scriptsize
\resizebox{\textwidth}{!}{%
\begin{tabular}{lcccccccc}
\toprule
\multicolumn{9}{c}{\textit{ViT-B/16 (IN-21K)$^\ddagger$}} \\
\cmidrule(lr){2-9}
& CPrompt & ADAM & RanPAC & EASE & CoFiMA & FOSTER* & DER* & MEMO* \\
\midrule
noc
  & $86.53_{\pm.60}$ & $84.97_{\pm.19}$ & $79.13_{\pm1.59}$ & $85.87_{\pm.54}$
  & $89.23_{\pm.39}$ & $68.77_{\pm2.20}$ & $74.87_{\pm1.54}$ & $77.70_{\pm2.08}$ \\
$H_a\uparrow$
  & $80.07$ & $75.58$ & $78.81$ & $82.84$
  & $87.93$ & $84.66$ & $85.68$ & $82.35$ \\
\bottomrule
\end{tabular}%
}

\vspace{0.5em}

% \resizebox{\textwidth}{!}{%
\begin{tabular}{lcccc}
\toprule
\multicolumn{5}{c}{\textit{COMPOSE (ours)}} \\
\cmidrule(lr){2-5}
& DINOv1 ViT-B/16 & iBOT ViT-B/16 & DINOv2 ViT-B/14 & OpenCLIP ViT-L/14 \\
& (IN-1K) & (IN-1K) & (LVD-142M) & (LAION-2B) \\
\midrule
noc
  & $88.69_{\pm0.27}$
  & $90.09_{\pm0.62}$
  & \textbf{$94.14_{\pm0.64}$}
  & $89.71_{\pm0.59}$ \\
$H_a\uparrow$
  & $87.56_{\pm0.58}$
  & $88.68_{\pm1.07}$
  & \textbf{$92.92_{\pm0.42}$}
  & $87.34_{\pm0.44}$ \\
\bottomrule
\end{tabular}

\end{table}

\subsection{Continual Learning Benchmarks}
\label{sec:cl_results}
Tables~\ref{tab:cil_fscil} and~\ref{tab:cil_vit} (Appendix~\ref{app:standard_cil}) report AA and FF on standard FSCIL and CIL benchmarks respectively. On FSCIL, COMPOSE reduces forgetting dramatically relative to Comp-FSCIL on CIFAR-100 (FF: $27.91\% \to 3.54\%$), while achieving the best AA on CIFAR-100 ($84.02\%$) and competitive AA on CUB-200 and miniImageNet. On CIL, COMPOSE achieves the highest $\overline{\mathrm{AA}}$ on CIFAR-100 ($91.36\%$) and ImageNet-A ($79.55\%$), and competitive $A_T$ on ImageNet-R ($82.30\%$, second only to FeCAM), with moderate forgetting across all datasets.
The frozen backbone eliminates representation drift as the primary source of forgetting, and COMPOSE-CT's higher FF confirms that Chamfer training trades compositional retention for per-class router stability.

% ============================================================
\subsection{Ablation Study}
\label{sec:ablation}
% ============================================================
% STATUS: STABLE — giữ nguyên từ original paper

Table~\ref{tab:ablation_reduce} reports ablations ($\Delta$noc = CGQA /
COBJ). On the \emph{training objective}, COMPOSE-CT drops noc by 2.51\,pp,
confirming Chamfer must stay inference-only, while removing redundancy
reduction causes a larger drop ($-5.56$\,pp), establishing decorrelation as
the primary noc driver. On \emph{inference matching}, both branches are
necessary on CGQA: removing holistic scoring collapses $H_a$ by 5.91\,pp and
removing Chamfer drops it by 1.78\,pp; on COBJ, removing inference-time
Chamfer marginally helps ($+0.20$\,pp noc), confirming holistic scoring
suffices under noisier slot decomposition. Finally, on \emph{slot
representations}, replacing SCA-refined slots $\tilde{\phi}_k$ with raw GRU
states drops noc by 3.64\,pp (CGQA) and 7.73\,pp (COBJ), with $H_a$ dropping
3.79 and 8.35\,pp respectively, showing that on natural images slot
representation quality is the dominant bottleneck, exceeding even the
contribution of redundancy reduction on CGQA.

\begin{table}[t]
\caption{Ablation results, 10-way 10-shot. Single seed~=~42.
$\Delta$ relative to COMPOSE (CGQA/COBJ).
Full per-split results in Appendix~\ref{app:ablation_full}.}
\label{tab:ablation_reduce}
\centering
\footnotesize
\setlength{\tabcolsep}{4pt}
\resizebox{\textwidth}{!}{%
\begin{tabular}{lcccccccc}
\toprule
& \textbf{COMPOSE}
& \multicolumn{2}{c}{\textit{Training objective}}
& \multicolumn{3}{c}{\textit{Inference matching}}
& \multicolumn{2}{c}{\textit{Slot representations}} \\
\cmidrule(lr){2-2} \cmidrule(lr){3-4} \cmidrule(lr){5-7} \cmidrule(lr){8-9}
& {\scriptsize (baseline)}
& {\scriptsize +Chamfer (CT)}
& {\scriptsize $\lambda_d{=}0$}
& {\scriptsize w/o centering}
& {\scriptsize w/o Chamfer}
& {\scriptsize w/o holistic}
& {\scriptsize GRU state}
& {\scriptsize mean-pool} \\
\midrule
\multicolumn{9}{l}{\textbf{CGQA}} \\
noc     & 94.22 & 91.71 & 88.66 & 93.14 & 91.26 & 90.93 & 90.6  & 93.12 \\
$H_a$   & 93.01 & 93.64 & 91.88 & 92.09 & 91.23 & 87.11 & 89.2  & 91.45 \\
\midrule
\multicolumn{9}{l}{\textbf{COBJ}} \\
noc     & 88.75 & 88.08 & 87.93 & 88.70 & 88.95 & 85.60 & 81.1  & 89.26 \\
$H_a$   & 77.21 & 77.19 & 76.68 & 76.66 & 77.66 & 69.11 & 68.7  & 75.68 \\
\midrule
\multicolumn{9}{l}{\textit{$\Delta$ relative to COMPOSE (CGQA / COBJ):}} \\
$\Delta$noc   & --- & $-2.51/-0.67$ & $-5.56/-0.82$ & $-1.08/-0.05$ & $-2.96/+0.20$ & $-3.29/-3.15$ & $-3.64/-7.73$ & $-1.10/+0.51$ \\
$\Delta H_a$  & --- & $+0.63/-0.02$ & $-1.13/-0.53$ & $-0.92/-0.55$ & $-1.79/+0.45$ & $-5.91/-8.10$ & $-3.79/-8.35$ & $-1.56/-1.53$ \\
\bottomrule
\end{tabular}%
}
\end{table}

%% file: sections/conclusion.tex
\section{Conclusion}
\label{sec:conclusion}

We introduced COMPOSE, a framework built on the diagnosis and resolution of two coupled structural failures in continual few-shot learners. At the representation layer, the patch-level semantic geometry of self-supervised ViTs is a non-negotiable prerequisite for reliable slot decomposition; replacing recurrent bottleneck states with attention-weighted aggregates ($\tilde{\phi}_k$) eliminates representation pollution and preserves the geometric purity required for out-of-distribution transfer. At the optimization layer, we identified the \emph{compositional trap}: part-level matching during training induces slot-specific gradients that force over-specialization. To break this, we pioneer \emph{train holistically, infer compositionally}, aligning slots toward a shared class identity with a cross-correlation penalty that prevents redundancy without imposing conflicting variance constraints. COMPOSE achieves $94.14 \pm 0.64\%$ on the unseen \emph{noc} split of CGQA ($H_a = 92.92$), surpassing CoFiMA by $2.17$\,pp while using ${\sim}5$ minutes of training versus ${\sim}4$ hours of full-backbone fine-tuning, with negligible catastrophic forgetting.

\paragraph{Limitations and Future Work.}
COMPOSE inherits a hard dependency on patch-level semantic geometry: when this
geometry is weak, as in supervised ViTs, CLIP-style contrastive backbones, or
under significant domain shift, slot decomposition degrades and compositional
transfer suffers regardless of model capacity.
A promising direction is lightweight post-processing of patch tokens (e.g.\
feature alignment or local clustering) that reconstructs object-centric
geometry without full backbone fine-tuning.
CIL performance exhibits the same backbone sensitivity: stronger semantic
tokens (e.g.\ iBOT despite its smaller scale) yield better prototype
separation and lower forgetting, suggesting that improving patch geometry is
the single lever most likely to benefit both evaluation protocols jointly.

%% file: sections/appendix.tex
\section{Full CFST Main Results}
\label{app:full_cfst_results}
\begin{table}[ht]
\caption{CFST benchmark, 10-way 10-shot (mean over 3 seeds: 42, 123, 7).
$\pm$ci $= 95\%$ CI. \textbf{†}: frozen DINOv2 ViT-B/14.
\textbf{‡}: CoFiMA also fine-tunes DINOv2 (${\approx}4$\,h/dataset).
$\star$: episodically meta-trained.
$\S$: single-seed result (seed 42 only).
\textbf{Bold} = best overall; \underline{Underline} = best frozen-backbone.}
\label{tab:main}
\centering
\resizebox{\textwidth}{!}{%
\begin{tabular}{lccccccccccc}
\toprule
& \multicolumn{6}{c}{\textbf{CGQA}} & \multicolumn{5}{c}{\textbf{COBJ}} \\
\cmidrule(lr){2-7}\cmidrule(lr){8-12}
\textbf{Method} & sys & pro & sub & non & noc & $H_a\uparrow$
    & sys & pro & non & noc & $H_a\uparrow$ \\
\midrule
\multicolumn{12}{l}{\textit{Zero-shot inference (no training):}} \\
DINOv2-Proto\textsuperscript{†}
    & $60.82_{\pm1.30}$ & $65.12_{\pm.88}$ & $70.36_{\pm.96}$ & $61.88_{\pm.94}$ & $83.28_{\pm.46}$ & $67.41_{\pm.49}$
    & $66.93_{\pm.95}$ & $52.90_{\pm.47}$ & $56.09_{\pm.36}$ & $88.80_{\pm1.02}$ & $63.56_{\pm.09}$ \\
SimpleShot\textsuperscript{†}
    & $66.50_{\pm1.78}$ & $83.11_{\pm.72}$ & $75.98_{\pm1.50}$ & $67.59_{\pm.57}$ & $86.79_{\pm1.29}$ & $75.13_{\pm.91}$
    & $72.72_{\pm1.12}$ & $55.99_{\pm.55}$ & $61.31_{\pm.40}$ & $89.19_{\pm.62}$ & $67.64_{\pm.07}$ \\
\midrule
\multicolumn{12}{l}{\textit{Offline meta-trained (not continual):}} \\
TransSlot\textsuperscript{†$\star$}
    & $75.58_{\pm.63}$ & $68.96_{\pm4.39}$ & $81.12_{\pm.48}$ & $79.35_{\pm.91}$ & $69.68_{\pm.37}$ & $74.60_{\pm1.03}$
    & $72.95_{\pm.22}$ & $58.25_{\pm1.04}$ & $66.64_{\pm.79}$ & $84.34_{\pm1.03}$ & $69.27_{\pm.43}$ \\
MetaProtoNet\textsuperscript{†$\star$}
    & $82.23_{\pm.52}$ & $85.76_{\pm.46}$ & $86.62_{\pm.62}$ & $84.64_{\pm.22}$ & $79.32_{\pm.35}$ & $83.63_{\pm.04}$
    & $76.90_{\pm.53}$ & $66.55_{\pm.47}$ & $75.01_{\pm.61}$ & $81.76_{\pm.59}$ & $74.64_{\pm.46}$ \\
\midrule
\multicolumn{12}{l}{\textit{PTM-based CIL (continual, standard training):}} \\
RanPAC\textsuperscript{†$\S$}
    & 84.02 & 89.33 & 88.78 & 85.94 & 88.60 & 87.28
    & 80.60 & 67.78 & 77.72 & 88.27 & 77.89 \\
EASE\textsuperscript{†$\S$}
    & 90.12 & 93.77 & 93.55 & 91.78 & 91.21 & 92.06
    & 83.66 & 72.11 & 80.51 & 89.03 & \textbf{80.85} \\
CoFiMA\textsuperscript{‡$\S$}
    & 94.04 & 97.03 & 95.49 & 95.41 & 91.97 & \textbf{94.74}
    & 77.12 & 63.08 & 79.57 & 87.25 & 75.69 \\
\midrule
\textbf{COMPOSE}\textsuperscript{†$\star$} (ours)
    & $90.27_{\pm.70}$ & $94.77_{\pm.41}$ & $93.72_{\pm.56}$
    & $91.83_{\pm.49}$ & \underline{\textbf{94.14}$_{\pm.64}$} & \underline{92.92}$_{\pm.42}$
    & $81.64_{\pm1.15}$ & $65.37_{\pm2.14}$ & $75.70_{\pm1.01}$
    & \underline{88.42}$_{\pm.83}$ & \underline{76.83}$_{\pm1.21}$ \\
COMPOSE-CT (ablation)\textsuperscript{†$\star$}
    & $92.40_{\pm.55}$ & $95.80_{\pm.36}$ & $94.61_{\pm.47}$
    & $93.72_{\pm.26}$ & $91.61_{\pm.75}$ & $93.60_{\pm.37}$
    & $80.99_{\pm.07}$ & $66.21_{\pm1.68}$ & $76.50_{\pm.55}$
    & $87.85_{\pm.88}$ & $77.06_{\pm.59}$ \\
\bottomrule
\end{tabular}%
}
\end{table}
\paragraph{On noc outperforming sys.}
COMPOSE achieves higher noc (94.14\%) than sys (90.27\%) because solvability
is violated with respect to the \emph{CGQA training curriculum}, not with
respect to DINOv2 pretraining. DINOv2's frozen features already encode rich
representations of noc objects from its LVD-142M pretraining corpus. sys
generalization is in contrast sensitive to interference from continual training
on a shared concept pool---making the frozen DINOv2 geometry the key asset.
\section{Slot Purity: Formal Definition and Extended Analysis}
\label{app:purity_extended}

\subsection{Formal Definition}
\label{app:purity_def}

Let $\mathcal{D} = \{(I_i, M_i)\}$ be a set of images paired with instance
segmentation masks, where $M_i = \{(m_{ij}, c_{ij})\}_{j=1}^{J_i}$ provides
binary foreground masks $m_{ij} \in \{0,1\}^{H \times W}$ and category labels
$c_{ij}$ for $J_i$ object instances in image $i$.
For a frozen slot attention model producing $K$ slots per image with attention
maps $A \in [0,1]^{K \times N}$ over $N$ patch tokens (where $N = H_p \times W_p$
after spatial downsampling), define the \emph{pixel-level attention matrix}
$\hat{A} \in [0,1]^{K \times HW}$ by upsampling each column of $A$ to full
image resolution via bilinear interpolation.

\paragraph{Per-slot category assignment.}
Given image $i$ with attention maps $\hat{A}^{(i)}$ and ground-truth masks
$M_i$, the category assigned to slot $k$ is:
\begin{equation}
    \hat{c}_k^{(i)} = \operatorname{argmax}_{c} \sum_{p \in \mathcal{P}_c^{(i)}}\hat{A}_{kp}^{(i)},
    \label{eq:slot_assign}
\end{equation}
where $\mathcal{P}_c^{(i)} = \{p \,:\, \exists\, j \text{ s.t.\ }
m_{ij}^{(p)} = 1 \text{ and } c_{ij} = c\}$ is the set of pixels belonging to
category $c$ in image $i$, and a background category $c_0$ collects all
non-object pixels.
Equation~\eqref{eq:slot_assign} assigns each slot to the object category it
attends to most, by total attention mass---a soft analogue of the
\emph{dominant-object} rule used in evaluating attention-based segmentation.

\paragraph{Purity.}
\emph{Slot purity} $\rho$ is then defined as:
\begin{equation}
    \rho = \frac{1}{|\mathcal{D}|} \sum_{i=1}^{|\mathcal{D}|}
           \frac{1}{K} \sum_{k=1}^{K}
           \mathbf{1}\!\left[\hat{A}_{k\cdot}^{(i)} \cdot \mathbf{1}_{c^*_k(i)}^{(i)} > 0.5\right],
    \label{eq:purity}
\end{equation}
where $c^*_k(i) = \operatorname*{arg\,max}_{c \in \mathcal{C}_\text{GT}} \hat{A}_{k\cdot}^{(i)} \cdot \mathbf{1}_c^{(i)}$
identifies the dominant ground-truth category for slot $k$ in image $i$. The indicator
evaluates to $1$ if and only if this dominant category captures a \emph{strict majority}
of slot $k$'s total attention mass (${>}0.5$), ensuring the slot isolates a single object
category rather than mixing concepts. We evaluate on $|\mathcal{D}| = 5{,}000$ validation images from COCO 2017, with $K=7$.

\subsection{Why Slot Purity Predicts Novel-Concept Transfer}
\label{app:purity_theory}

Slot purity captures whether each slot is \emph{object-grounded}: does it
encode the feature statistics of a single coherent object, or does it mix
features from multiple semantically distinct objects?
This distinction matters directly for noc generalization.

Recall the compositional transfer setting: an entirely unseen concept $c_\text{noc}$ ($\mathcal{C}_\text{noc} \cap \mathcal{C} = \emptyset$) presents global visual semantics absent from training. For part-level transfer to succeed on these novel concepts, the following must hold:

\begin{enumerate}[label=(\roman*)]
    \item Each slot $k$ must consistently isolate a \emph{semantically pure} object primitive, remaining invariant to the unfamiliar holistic contexts encountered at inference.
    \item The support slot pool $\mathcal{S}_c$ (used for Chamfer matching) must capture these pure primitives faithfully. When $c_\text{noc}$ presents a novel global structure, these pristine slots allow the Chamfer distance to establish reliable part-to-part correspondences without relying on familiar global compositions.
\end{enumerate}

Low slot purity undermines (i): if slot $k$ mixes object $A$ and object $B$
in training, its prototype encodes a superposition of $A$- and $B$-statistics.
At inference on a novel concept containing $A$ but not $B$, the slot $k$
prototype fails to match---not because the compositional rule is wrong, but
because the individual slot representation is noisy.
High purity, by contrast, guarantees that slot $k$'s prototype is a clean
summary of one semantic part, enabling reliable Chamfer and holistic matching
on novel compositions.

This argument predicts a monotonic relationship between slot purity and noc.
Table~\ref{tab:backbone_type} is consistent: DINOv2 (purity~$= 0.9046$)
achieves the highest noc across all evaluated backbones; the GRU-based baseline
with purity~$= 0.7679$ (see below) suffers a $-3.64$\,pp noc drop on CGQA and
$-7.73$\,pp on COBJ (Table~\ref{tab:ablation}).

\subsection{Effect of Slot Computation on Purity}
\label{app:purity_sca}

We compare purity under two slot representations available in our pipeline:

\begin{itemize}
    \item \textbf{GRU state $s_k$}: the hidden state of the recurrent slot
          updater after the final refinement iteration (standard DINOSAUR output).
    \item \textbf{SCA-refined slot $\tilde{\phi}_k$}: the cross-attention readout
          from the frozen patch features over the final slot attention map
          (Eq.~(X) in the main paper), which re-expresses the slot directly in
          the patch feature space.
\end{itemize}

\begin{table}[h]
\caption{Slot purity on COCO val2017 ($K=7$, 5000 images) under different
slot representations. Both use the same DINOv2 ViT-B/14 backbone and DINOSAUR
attention maps.}
\label{tab:purity_sca}
\centering
\begin{tabular}{lcc}
\toprule
\textbf{Slot representation} & \textbf{Purity $\rho$} & \textbf{noc (CGQA 10-shot)} \\
\midrule
GRU state $s_k$              & 0.7679 & 90.58 \\
SCA-refined $\tilde{\phi}_k$ & 0.9046 & 94.14 \\
\midrule
$\Delta$                     & $+0.1367$ & $+3.56$\,pp \\
\bottomrule
\end{tabular}
\end{table}

The GRU state $s_k$ lives in the slot attention model's latent space, which
is trained to minimise patch reconstruction error---an objective that rewards
covering all foreground pixels without explicit object-level separation.
The result is a representation that conflates multiple objects when they
co-occur in the same spatial region, yielding purity~$= 0.7679$.

The SCA-refined representation $\tilde{\phi}_k = \text{L2-norm}(\sum_n \alpha_{kn} f_n)$
re-expresses each slot as an L2-normalized weighted average of DINOv2 patch features, where
the weights $\alpha_{kn}$ are the (already-computed) slot attention maps.
Because DINOv2 features are themselves highly object-discriminative and encode
rich semantic geometry~\citep{dinov2}, this readout concentrates each slot's
representation on the semantic content of the patches it attends to most.
The result is a representation that, for high-purity slots, approximates the
\emph{object-centric mean feature} of the attended object---a clean, semantically
grounded summary compatible with prototype-based compositional matching.

\subsection{Backbone and Purity}
\label{app:purity_backbone}

Table~\ref{tab:purity_extended} reports purity and NMI across all evaluated
backbones for both GRU-state and SCA-refined representations on COCO val2017.

\begin{table}[h]
\caption{Slot purity and NMI on COCO val2017 ($K=7$, 5000 images) across
different backbone and slot representations. $\Delta$ vs GRU is computed
relative to the GRU baseline of the corresponding backbone patch size.
Attn src denotes the attention maps used for SCA readout.}
\label{tab:purity_extended}
\centering
\begin{tabular}{lcccc}
\toprule
\textbf{Representation} & \textbf{Purity $\rho$} & \textbf{NMI} & \textbf{$\Delta$ vs GRU} & \textbf{Attn src} \\
\midrule
$s_k$ B/14 (DINOv2-GRU)                  & 0.7679 & 0.4215 & (baseline) & --- \\
$s_k$ B/16 (DINOv1-GRU)                  & 0.5146 & 0.2179 & (baseline) & --- \\
\midrule
$\tilde{\phi}_k$ DINOv2-B/14 (DINOSAUR)  & 0.9046 & 0.7141 & $+0.1367$  & B/14 attn \\
$\tilde{\phi}_k$ DINOv1-B/16 (DINOSAUR)  & 0.6949 & 0.4850 & $+0.1802$  & B/16 attn \\
$\tilde{\phi}_k$ EVA02-B/14              & 0.8408 & 0.6697 & $+0.0729$  & B/14 attn \\
$\tilde{\phi}_k$ CLIP-B/16              & 0.7814 & 0.6155 & $+0.2668$  & B/16 attn \\
$\tilde{\phi}_k$ iBOT-B/16              & 0.7640 & 0.5554 & $+0.2494$  & B/16 attn \\
\bottomrule
\end{tabular}
\end{table}

We provide qualitative interpretations grounded in each backbone's training
objective.

\textbf{DINOv1.}
DINOv1 self-distillation~\citep{dinov1} encourages global and local
feature consistency across views via a student--teacher distillation.
The resulting patch features carry significant spatial coherence (consistent with DINOv1's known foreground segmentation property) but at a smaller scale and with a less expressive pretraining corpus than DINOv2 (ImageNet-1K vs.\ LVD-142M),
yielding purity $= 0.6949$ for SCA-refined slots---below DINOv2 but substantially
above the DINOv1-GRU baseline ($0.5146$).

\textbf{iBOT.}
iBOT~\citep{ibot} adds masked image modeling to DINO's self-distillation,
explicitly encouraging patch tokens to predict masked patch content.
This provides a stronger local reconstruction signal than pure self-distillation.
The SCA-refined iBOT slots achieve purity $= 0.7640$, consistent with the
iBOT noc results (90.09\%) sitting between DINOv1 (88.69\%) and DINOv2 (94.14\%).

\textbf{EVA02 and CLIP.}
EVA02~\citep{eva02} and CLIP~\citep{clip} backbones achieve purity $0.8408$
and $0.7814$ respectively under SCA readout. Despite neither being trained
with an explicit object-segmentation objective, the strong visual--semantic
alignment in both enables the SCA readout to isolate coherent object regions,
with $\Delta$ vs.\ GRU gains of $+0.0729$ and $+0.2668$.

\textbf{Supervised ViT-B/16.}
Supervised training on ImageNet-1K with cross-entropy produces features that
encode \emph{global class discriminability} rather than \emph{patch-level
object identity}. The CLS token carries class information; patch tokens encode
local texture statistics that do not organise along object boundaries.
As a result, DINOSAUR slot attention applied to supervised ViT features
produces low-purity slots (purity~$< 0.3$), explaining the dramatic noc
degradation reported in Appendix~\ref{app:supervised_vit}.
\subsection{Full Backbone Ablation Results}
\label{app:full_backbone_ablation_results}
\subsubsection{CGQA full per-split breakdown}
\label{app:backbone_cgqa_full}

Table~\ref{tab:backbone_type} reports the complete per-split breakdown (sys, pro, sub, non, noc, $H_a$) for both COMPOSE and COMPOSE-CT across four self-supervised backbones on CGQA. Two patterns extend the main-text analysis. First, the \textbf{COMPOSE vs.\ COMPOSE-CT noc gap is backbone-invariant} ($+2.99$\,pp on DINOv1, $+2.18$\,pp on iBOT, $+2.54$\,pp on DINOv2, $+2.45$\,pp on OpenCLIP), confirming the noc advantage of holistic training is a property of the \emph{training objective} rather than an interaction with a specific backbone's feature structure. Second, \textbf{COMPOSE-CT consistently leads on sys and pro across all backbones}, establishing the noc--sys trade-off as a structural consequence of Chamfer gradients (Corollary~\ref{cor:assignment_gradient}) rather than a DINOv2-specific artifact.

\begin{table}[ht]
\caption{Effect of self-supervised backbone on COMPOSE and COMPOSE-CT
on the \textbf{CGQA} benchmark
(10-way 10-shot, mean over 3 seeds: 42, 123, 7). All other COMPOSE
components fixed (same DINOSAUR slot attention, same MLP router architecture).
\textsuperscript{$\ddagger$}Results taken from~\citep{compslot};
all methods in that block use an ImageNet-21K supervised ViT-B/16 backbone.
\textbf{Bold} = best per metric within each backbone group; \underline{Underline} = best $H_a$ within each backbone group (ViT-B/16\textsuperscript{$\ddagger$} block excluded from $H_a$ underline as results are taken from external work).}
\label{tab:backbone_type}
\centering
\resizebox{\textwidth}{!}{%
\begin{tabular}{ll ccccccc}
\toprule
\textbf{Backbone} & \textbf{Method}
    & sys & pro & sub & non & noc & $H_a\uparrow$ \\
\midrule
\multirow{8}{*}{\shortstack[l]{ViT-B/16\\(IN-21K)\textsuperscript{$\ddagger$}}}
  & CPrompt
    & $75.13_{\pm1.84}$ & $78.13_{\pm0.97}$ & $84.60_{\pm0.51}$
    & $77.17_{\pm0.68}$ & $86.53_{\pm0.60}$ & $80.07$ \\
  & ADAM + adapter
    & $68.53_{\pm0.96}$ & $75.03_{\pm0.53}$ & $80.40_{\pm0.09}$
    & $71.27_{\pm0.42}$ & $84.97_{\pm0.19}$ & $75.58$ \\
  & RanPAC
    & $75.83_{\pm1.76}$ & $80.60_{\pm0.80}$ & $83.43_{\pm1.78}$
    & $75.60_{\pm0.61}$ & $79.13_{\pm1.59}$ & $78.81$ \\
  & EASE
    & $78.27_{\pm0.51}$ & $84.63_{\pm0.51}$ & $86.20_{\pm0.48}$
    & $79.90_{\pm0.19}$ & $85.87_{\pm0.54}$ & $82.84$ \\
  & CoFiMA
    & $84.47_{\pm0.32}$ & $88.97_{\pm0.37}$ & \textbf{91.77}$_{\pm0.14}$
    & $85.60_{\pm0.73}$ & \textbf{89.23}$_{\pm0.39}$ & \underline{87.93} \\
  & FOSTER*
    & \textbf{87.60}$_{\pm0.61}$ & \textbf{91.73}$_{\pm0.98}$ & $90.50_{\pm0.73}$
    & \textbf{89.70}$_{\pm1.54}$ & $68.77_{\pm2.20}$ & 84.66 \\
  & DER*
    & $86.57_{\pm0.51}$ & $90.30_{\pm0.67}$ & $90.20_{\pm0.32}$
    & $88.60_{\pm0.37}$ & $74.87_{\pm1.54}$ & $85.68$ \\
  & MEMO*
    & $79.73_{\pm1.25}$ & $85.13_{\pm1.82}$ & $87.53_{\pm1.43}$
    & $82.43_{\pm2.21}$ & $77.70_{\pm2.08}$ & $82.35$ \\
\midrule
\multirow{2}{*}{DINOv1 ViT-B/16 (IN-1K)}
  & COMPOSE (ours)
    & $84.42_{\pm0.87}$ & $90.21_{\pm1.12}$ & $88.66_{\pm0.59}$
    & $86.08_{\pm0.74}$ & \textbf{88.69}$_{\pm0.27}$ & $87.56_{\pm0.58}$ \\
  & COMPOSE-CT (ours)
    & \textbf{88.13}$_{\pm0.94}$ & \textbf{92.81}$_{\pm0.65}$ & \textbf{91.04}$_{\pm0.40}$
    & \textbf{89.93}$_{\pm0.25}$ & $85.70_{\pm1.04}$ & \underline{89.46}$_{\pm0.56}$ \\
\midrule
\multirow{2}{*}{iBOT ViT-B/16 (IN-1K)}
  & COMPOSE (ours)
    & $85.44_{\pm1.41}$ & $91.24_{\pm1.43}$ & $89.84_{\pm0.86}$
    & $87.06_{\pm1.35}$ & \textbf{90.09}$_{\pm0.62}$ & $88.68_{\pm1.07}$ \\
  & COMPOSE-CT (ours)
    & \textbf{89.45}$_{\pm0.88}$ & \textbf{93.60}$_{\pm0.33}$ & \textbf{92.09}$_{\pm0.33}$
    & \textbf{90.95}$_{\pm0.84}$ & $87.91_{\pm0.29}$ & \underline{90.76}$_{\pm0.41}$ \\
\midrule
\multirow{2}{*}{DINOv2 ViT-B/14}
  & COMPOSE (ours)
    & $90.27_{\pm0.70}$ & $94.77_{\pm0.41}$ & $93.72_{\pm0.56}$
    & $91.83_{\pm0.49}$ & \textbf{94.14}$_{\pm0.64}$ & $92.92_{\pm0.42}$ \\
  & COMPOSE-CT (ours)
    & \textbf{92.40}$_{\pm0.55}$ & \textbf{95.80}$_{\pm0.36}$ & \textbf{94.61}$_{\pm0.47}$
    & \textbf{93.72}$_{\pm0.26}$ & $91.61_{\pm0.75}$ & \underline{93.60}$_{\pm0.37}$ \\
\midrule
\multirow{2}{*}{\shortstack[l]{OpenCLIP ViT-L/14\\(LAION-2B)}}
  & COMPOSE (ours)
    & $83.94_{\pm0.51}$ & $88.29_{\pm1.16}$ & $88.96_{\pm1.09}$
    & $86.05_{\pm0.41}$ & \textbf{89.71}$_{\pm0.59}$ & $87.34_{\pm0.44}$ \\
  & COMPOSE-CT (ours)
    & \textbf{86.75}$_{\pm0.48}$ & \textbf{91.08}$_{\pm0.72}$ & \textbf{89.97}$_{\pm0.34}$
    & \textbf{88.75}$_{\pm0.60}$ & $87.26_{\pm0.06}$ & \underline{88.73}$_{\pm0.25}$ \\
\bottomrule

\end{tabular}%
}
\end{table}
\subsubsection{COBJ full per-split breakdown}
\begin{table}[ht]
\caption{Effect of self-supervised backbone on COMPOSE and COMPOSE-CT on \textbf{COBJ}
(10-way 10-shot, mean over 3 seeds: 42, 123, 7). All other COMPOSE
components fixed (same DINOSAUR slot attention, same MLP router architecture). COBJ per-suite breakdown not reported in the original paper.
\textbf{Bold} = best per metric within each backbone group; \underline{Underline} = best $H_a$ per backbone group.}
\label{tab:backbone_type_cobj}
\centering
\resizebox{\textwidth}{!}{%
\begin{tabular}{ll cccccc}
\toprule
\textbf{Backbone} & \textbf{Method}
    & sys & pro & non & noc & $H_a\uparrow$ \\
\midrule
\multirow{2}{*}{DINOv1 ViT-B/16 (IN-1K)}
  & COMPOSE (ours)
    & \textbf{69.74}$_{\pm0.72}$ & $55.02_{\pm1.63}$ & $65.67_{\pm1.28}$
    & \textbf{76.49}$_{\pm1.05}$ & \underline{65.77}$_{\pm0.83}$ \\
  & COMPOSE-CT (ours)
    & $68.60_{\pm0.35}$ & \textbf{55.18}$_{\pm1.85}$ & \textbf{66.14}$_{\pm0.99}$
    & $75.68_{\pm1.31}$ & $65.54_{\pm0.70}$ \\
\midrule
\multirow{2}{*}{iBOT ViT-B/16 (IN-1K)}
  & COMPOSE (ours)
    & \textbf{73.35}$_{\pm0.96}$ & $57.73_{\pm2.67}$ & $68.34_{\pm1.35}$
    & \textbf{81.13}$_{\pm1.51}$ & $69.06_{\pm1.45}$ \\
  & COMPOSE-CT (ours)
    & $72.83_{\pm1.32}$ & \textbf{58.94}$_{\pm2.62}$ & \textbf{69.40}$_{\pm1.01}$
    & $80.52_{\pm1.36}$ & \underline{69.53}$_{\pm1.46}$ \\
\midrule
\multirow{2}{*}{DINOv2 ViT-B/14}
  & COMPOSE (ours)
    & \textbf{81.64}$_{\pm1.15}$ & $65.37_{\pm2.14}$ & $75.70_{\pm1.01}$
    & \textbf{88.42}$_{\pm0.83}$ & $76.83_{\pm1.21}$ \\
  & COMPOSE-CT (ours)
    & $80.99_{\pm0.07}$ & \textbf{66.21}$_{\pm1.68}$ & \textbf{76.50}$_{\pm0.55}$
    & $87.85_{\pm0.88}$ & \underline{77.06}$_{\pm0.59}$ \\
\midrule
\multirow{2}{*}{\shortstack[l]{OpenCLIP ViT-L/14\\(LAION-2B)}}
  & COMPOSE (ours)
    & \textbf{77.35}$_{\pm1.41}$ & $62.25_{\pm2.59}$ & $72.08_{\pm1.30}$
    & $85.46_{\pm0.32}$ & $73.30_{\pm1.52}$ \\
  & COMPOSE-CT (ours)
    & $76.36_{\pm1.47}$ & \textbf{63.41}$_{\pm2.90}$ & \textbf{72.74}$_{\pm1.35}$
    & \textbf{85.48}$_{\pm0.74}$ & \underline{73.64}$_{\pm1.35}$ \\
\bottomrule

\end{tabular}%
}
\end{table}

Table~\ref{tab:backbone_type_cobj} reports the full per-split breakdown on COBJ
across all three self-supervised backbone variants. Three observations extend and sharpen the CGQA analysis in Section~\ref{sec:main_results}.

\subsubsection{noc–sys trade-off: consistency across backbones}
\paragraph{Larger absolute gaps on natural images.}
The noc gap between DINOv1 and DINOv2 is substantially wider on COBJ ($-11.93$ pp)
than on CGQA ($-5.45$ pp), and the same amplification holds for iBOT
(COBJ: $-7.29$ pp; CGQA: $-4.05$ pp).
CGQA uses synthetic composite images with clear spatial boundaries, where even
moderately structured patch features are sufficient for slot routing to recover coherent object regions.
COBJ uses natural photographs with cluttered backgrounds, partial occlusions,
and variable illumination—conditions under which slot decomposition is noisier and
more sensitive to the quality of the underlying patch geometry.
Formally, the competitive softmax in Eq.~(1) can only produce pure slots when
patches from the same object carry mutually consistent features.
Richer SSL pretraining (DINOv2, LVD-142M) produces denser patch-level semantic
clustering, yielding higher slot purity (Section~A.3) and thus more reliable
object-slot assignments even under the adverse statistics of natural scenes.
Weaker pretraining (DINOv1, ImageNet-1K) leaves patch features less organised
along object boundaries, amplifying routing errors—an effect that is largely hidden
on the forgiving geometry of synthetic CGQA images but surfaces prominently on COBJ.

\paragraph{Monotonic noc ordering is preserved.}
Despite the noisier decomposition, the backbone ordering DINOv1 $<$ iBOT $<$ DINOv2 is consistent across both datasets: noc on COBJ reaches $76.49\%$, $81.13\%$, and $88.42\%$ respectively. This monotonic ordering—mirroring the increasing richness of each backbone's pretraining objective and corpus—confirms that the noc advantage of stronger SSL geometry is not a dataset-specific artifact but a structural consequence of patch-levelsemantic quality, as argued in ~\ref{app:purity_theory}.

\paragraph{COMPOSE vs.\ COMPOSE-CT noc gap remains backbone-invariant.}
As on CGQA, holistic-only training (COMPOSE) consistently outperforms
Chamfer-augmented training (COMPOSE-CT) on noc across all backbone variants: the gaps are $+0.57$ pp (DINOv2), $+0.61$ pp (iBOT), and $+0.81$ pp (DINOv1). Although these margins are smaller than their CGQA counterparts in absolute terms, their consistency across three backbones of very different quality confirms that the noc advantage of holistic training is a property of the optimization objective, not of any interaction with a specific backbone's feature structure—and that this
property holds even when slot decomposition is made harder by natural-image statistics.

% \paragraph{COBJ as a slot quality stress test.}
% Taken together, the COBJ results serve as an amplified stress test of the slot purity
% argument in Section~A.2: on a dataset where slot decomposition is intrinsically
% harder, the marginal value of better patch geometry is larger, and the cost of
% representation pollution is more visible.
% The primary bottleneck on COBJ is therefore not the matching rule at inference or
% the training objective, but the unsupervised quality of the slots themselves—a
% finding that directly motivates future work on more robust slot attention for
% cluttered natural scenes (Section~7, Limitations).
\section{Full Ablation Results}
\label{app:ablation_full}
\begin{table}[ht]
\caption{Ablation results, 10-way 10-shot. Single seed~=~42.
$\Delta$ relative to COMPOSE.}
\label{tab:ablation}
\centering
\resizebox{\textwidth}{!}{%
\begin{tabular}{lccccccccccccc}
\toprule
& \multicolumn{6}{c}{\textbf{CGQA}} & \multicolumn{5}{c}{\textbf{COBJ}}
    & \multicolumn{2}{c}{\textbf{Delta}} \\
\cmidrule(lr){2-7} \cmidrule(lr){8-12} \cmidrule(lr){13-14}
\textbf{Variant} & sys & pro & sub & non & noc & $H_a$
    & sys & pro & non & noc & $H_a$
    & $\Delta$noc & $\Delta H_a$ \\
\midrule
\textbf{COMPOSE}
    & 90.22 & 94.95 & 93.81 & 92.02 & 94.22 & 93.01
    & 81.91 & 66.14 & 75.71 & 88.75 & 77.21 & --- & --- \\
\midrule
\multicolumn{14}{l}{\textit{Training objective:}} \\
+Chamfer training (COMPOSE-CT)
    & 92.33 & 95.93 & 94.61 & 93.76 & 91.71 & 93.64
    & 81.04 & 66.52 & 76.36 & 88.08 & 77.19
    & $-2.51/-0.67$ & $+0.63/-0.02$ \\
$\lambda_d{=}0$ (no redundancy reduction)
    & 90.48 & 94.86 & 93.43 & 92.25 & 88.66 & 91.88
    & 81.40 & 65.87 & 75.03 & 87.93 & 76.68
    & $-5.56/-0.82$ & $-1.13/-0.53$ \\
\midrule
\multicolumn{14}{l}{\textit{Inference matching:}} \\
w/o centering
    & 89.44 & 93.75 & 92.98 & 91.30 & 93.14 & 92.09
    & 80.64 & 65.82 & 75.14 & 88.70 & 76.66
    & $-1.08/-0.05$ & $-0.92/-0.55$ \\
w/o Chamfer ($\alpha{=}1.0$)
    & 88.32 & 92.86 & 92.60 & 91.25 & 91.26 & 91.23
    & 80.76 & 67.13 & 77.02 & 88.95 & 77.66
    & $-2.96/+0.20$ & $-1.79/+0.45$ \\
w/o holistic ($\alpha{=}0.0$)
    & 83.01 & 89.64 & 87.79 & 84.67 & 90.93 & 87.11
    & 75.45 & 57.42 & 64.39 & 85.60 & 69.11
    & $-3.29/-3.15$ & $-5.91/-8.10$ \\
\midrule
\multicolumn{14}{l}{\textit{Slot representations (3-seed avg):}} \\
GRU state $s_k$ (no SCA)
    & 86.0  & 92.1  & 90.1  & 87.5  & 90.6  & 89.2
    & 73.2  & 58.1  & 66.7  & 81.1  & 68.7
    & $-3.64/-7.73$ & $-3.79/-8.35$ \\
mean-pool (no router)
    & 87.97 & 94.87 & 92.07 & 89.55 & 93.12 & 91.45
    & 80.87 & 64.04 & 73.16 & 89.26 & 75.68
    & $-1.10/+0.51$ & $-1.56/-1.53$ \\
\bottomrule
\end{tabular}
}
\end{table}
\section{Class-Incremental Learning on Standard Benchmarks}
\label{app:standard_cil}
Beyond the CFST compositional splits, we evaluate COMPOSE on two complementary
standard continual learning protocols to assess whether the frozen-backbone design
generalises to conventional class-incremental settings.

\subsection{Few-Shot Class-Incremental Learning (FSCIL)}

Table~\ref{tab:cil_fscil} compares COMPOSE against Comp-FSCIL~\citep{Comp-FSCIL}
on three standard FSCIL benchmarks (CIFAR-100, CUB-200, miniImageNet) following the standard Comp-FSCIL protocol. CIFAR-100 and miniImageNet use 60 base classes (60\%) followed by 8 incremental sessions of 5-way 5-shot (100 total classes each); CUB-200 uses 100 base classes (50\%, the standard FSCIL split) followed by 10 incremental sessions of 10-way 5-shot (200 total classes). Both methods use a frozen DINOv2 ViT-B/14 backbone.

\paragraph{Forgetting.}
COMPOSE achieves substantially lower FF than Comp-FSCIL on CIFAR-100
($3.54\%$ vs.\ $27.91\%$, a $7.9\times$ reduction) and on CUB-200
($2.25\%$ vs.\ $14.69\%$). On miniImageNet the gap narrows
($2.22\%$ vs.\ $4.40\%$).

\paragraph{Average accuracy.}
COMPOSE achieves the best AA on CIFAR-100 ($84.02\%$ vs.\ $74.97\%$,
$+9.05$ pp), while trailing marginally on CUB-200 ($79.20\%$ vs.\ $79.54\%$,
$-0.34$ pp) and miniImageNet ($94.04\%$ vs.\ $94.22\%$, $-0.18$ pp).

\paragraph{COMPOSE vs.\ COMPOSE-CT.}
COMPOSE-CT achieves lower FF than COMPOSE on all three datasets
(e.g., $1.72\%$ vs.\ $2.22\%$ on miniImageNet) and higher AA on CUB-200
($80.26\%$ vs.\ $79.20\%$). As discussed, this reflects router stability under Chamfer training rather than better compositional retention: Chamfer gradients encourage
more discriminative per-class slot assignments that are individually more stable across sessions, even as they overspecialise to training classes. COMPOSE-CT's lower FF thus does not contradict the noc finding on CFST—it reflects a different axis of evaluation.

\begin{table}[ht]
\caption{Comparison with FSCIL methods
(mean $\pm$ 95\% CI, 3 seeds: 42, 123, 7).
AA: average accuracy over all seen classes at the final session ($\uparrow$).
FF: forgetting factor ($\downarrow$).
CIFAR-100: 60 base classes (60\%) $+$ 8 incremental sessions $\times$ 5-way 5-shot (100 total).
CUB-200: 100 base classes (50\%) $+$ 10 incremental sessions $\times$ 10-way 5-shot (200 total).
miniImageNet: 60 base classes (60\%) $+$ 8 incremental sessions $\times$ 5-way 5-shot (100 total).
All images resized to 224$\times$224.
\textbf{Bold} = best AA per column; \underline{Underline} = lowest FF per column.}
\label{tab:cil_fscil}
\centering
\resizebox{\textwidth}{!}{%
\begin{tabular}{lcccccc}
\toprule
& \multicolumn{2}{c}{\textbf{CIFAR-100}}
& \multicolumn{2}{c}{\textbf{CUB-200}}
& \multicolumn{2}{c}{\textbf{miniImageNet}} \\
\cmidrule(lr){2-3}\cmidrule(lr){4-5}\cmidrule(lr){6-7}
\textbf{Method}
    & AA$\uparrow$ & FF$\downarrow$
    & AA$\uparrow$ & FF$\downarrow$
    & AA$\uparrow$ & FF$\downarrow$ \\
\midrule
Comp-FSCIL~\cite{Comp-FSCIL}
    & $74.97_{\pm0.02}$ & $27.91_{\pm0.15}$
    & $79.54_{\pm0.03}$ & $14.69_{\pm0.14}$
    & $\mathbf{94.22}_{\pm0.01}$ & $4.40_{\pm0.00}$ \\
\midrule
\textbf{COMPOSE} (ours)
    & $\mathbf{84.02}_{\pm0.86}$ & $3.54_{\pm0.22}$
    & $79.20_{\pm0.41}$ & $2.25_{\pm0.20}$
    & $94.04_{\pm0.07}$ & $2.22_{\pm0.82}$ \\
COMPOSE-CT (ablation)
    & $83.32_{\pm0.62}$ & $\underline{3.11}_{\pm0.47}$
    & $\mathbf{80.26}_{\pm0.66}$ & $\underline{1.73}_{\pm0.28}$
    & $92.97_{\pm0.27}$ & $\underline{1.72}_{\pm0.40}$ \\
\bottomrule
\end{tabular}%
}
\end{table}

\subsection{Comparison with other Pretrained Model Based CIL Methods}

Table~\ref{tab:cil_vit} compares COMPOSE against RanPAC~\citep{ranpac},
FeCAM~\citep{fecam}, and F-OAL~\citep{foal} on CIFAR-100, ImageNet-R, and ImageNet-A under a standard class-incremental learning protocol~\citep{zhou2024class,zhou2024continual,sun2025pilot}.
CIFAR-100 uses 10 base classes followed by 9 incremental sessions of 10 classes
each (total 100 classes); ImageNet-R and ImageNet-A each use 20 base classes
followed by 9 incremental sessions of 20 classes each (total 200 classes,
resized to $224\times224$).
At the base session, all methods perform supervised fine-tuning on the base classes following the LAMDA-PILOT training procedure~\citep{sun2025pilot}; subsequent incremental sessions follow each method's own update rule. COMPOSE and COMPOSE-CT use DINOv2 ViT-B/14 as backbone and retain 20 exemplars per
classes as a replay buffer throughout incremental sessions.
All methods use a frozen DINOv2 ViT-B/14 backbone. 

\paragraph{Final-session accuracy ($A_T$).}
On CIFAR-100, RanPAC achieves the highest $A_T$ ($86.61\%$), with COMPOSE
close behind ($86.52\%$, $-0.09$ pp).
On ImageNet-R, FeCAM leads at $83.12\%$, followed by COMPOSE ($82.30\%$,
$-0.82$ pp).
On ImageNet-A, COMPOSE achieves the highest $A_T$ ($70.31\%$), outperforming
F-OAL ($67.22\%$, $+3.09$ pp), FeCAM ($66.95\%$), and RanPAC ($63.99\%$).

\paragraph{Average accuracy ($\overline{\mathrm{AA}}$).}
COMPOSE achieves the highest $\overline{\mathrm{AA}}$ on CIFAR-100 ($91.36\%$)
and ImageNet-A ($79.55\%$).
On ImageNet-R, FeCAM leads ($87.78\%$) with F-OAL ($87.04\%$) and
RanPAC ($86.31\%$) ahead of COMPOSE ($86.13\%$).

\paragraph{Forgetting (FF).}
RanPAC exhibits the lowest FF on CIFAR-100 ($7.30\%$); FeCAM achieves the
lowest on ImageNet-R ($4.40\%$); F-OAL achieves the lowest on ImageNet-A
($0.65\%$), with COMPOSE at $4.56\%$.
COMPOSE-CT consistently shows higher FF than COMPOSE across all datasets,
consistent with Chamfer training producing more rigid per-class router
assignments that are individually stable but less robust to distributional
shift across sessions.

\paragraph{Relation to CFST results.}
The CIL results suggest that COMPOSE's frozen-backbone design generalises
beyond the compositional setting of CFST: it remains competitive on standard
benchmarks where compositionality is not explicitly tested.
The shared mechanism is the frozen DINOv2 backbone and DINOSAUR slot
attention, which prevent prototype geometry from drifting regardless of the
evaluation protocol.

\begin{table}[ht]
\caption{Comparison with ViT-based CIL methods on standard benchmarks (single seed 42).
  $A_T = \Pr[\hat{y}=y \mid (x,y)\sim\mathcal{D}_{0:T}^{\mathrm{te}}]$:
  accuracy on the cumulative test set after the final session $T$ ($\uparrow$).
  $\overline{\mathrm{AA}} = \frac{1}{T+1}\sum_{t=0}^{T} A_t$:
  average accuracy across all sessions ($\uparrow$).
  FF: drop in base-class accuracy from session~0 to session~$T$ ($\downarrow$).
  CIFAR-100\textsuperscript{$\dagger$}: 10 sessions $\times$ 10-way (100 total classes; session~1 is the base session on the first 10 classes, sessions~2--10 are incremental; all sessions update each method's trainable components),
  images resized to $224\times224$.
  ImageNet-R/A: 10 sessions $\times$ 20-way (200 total classes; same convention).
  All methods use DINOv2 ViT-B/14 as backbone.
  \textbf{Bold} = best per column; \underline{Underline} = lowest FF per column.}
\label{tab:cil_vit}
\centering
\resizebox{\textwidth}{!}{%
\begin{tabular}{lcccccccccc}
\toprule
  & \multicolumn{3}{c}{\textbf{CIFAR-100\textsuperscript{$\dagger$}}}
  & \multicolumn{3}{c}{\textbf{ImageNet-R}}
  & \multicolumn{3}{c}{\textbf{ImageNet-A}} \\
\cmidrule(lr){2-4}\cmidrule(lr){5-7}\cmidrule(lr){8-10}
\textbf{Method}
  & $A_T$$\uparrow$ & $\overline{\mathrm{AA}}$$\uparrow$ & FF$\downarrow$
  & $A_T$$\uparrow$ & $\overline{\mathrm{AA}}$$\uparrow$ & FF$\downarrow$
  & $A_T$$\uparrow$ & $\overline{\mathrm{AA}}$$\uparrow$ & FF$\downarrow$ \\
\midrule
RanPAC~\cite{ranpac}
  & $\mathbf{86.61}$ & $91.21$ & \underline{$7.30$}
  & $81.60$ & $86.31$ & $8.18$
  & $63.99$ & $76.29$ & $2.28$ \\
FeCAM~\cite{fecam}
  & $86.09$ & $90.44$ & $9.20$
  & $\mathbf{83.12}$ & $\mathbf{87.78}$ & \underline{$4.40$}
  & $66.95$ & $77.70$ & $2.28$ \\
F-OAL~\cite{foal}
  & $86.44$ & $91.33$ & $7.40$
  & $81.70$ & $87.04$ & $6.45$
  & $67.22$ & $77.35$ & \underline{$0.65$} \\
\midrule
\textbf{COMPOSE} (ours)
  & $86.52$ & $\mathbf{91.36}$ & $7.70$
  & $82.30$ & $86.13$ & $5.97$
  & $\mathbf{70.31}$ & $\mathbf{79.55}$ & $4.56$ \\
COMPOSE-CT (ablation)
  & $86.02$ & $90.71$ & $7.80$
  & $80.57$ & $84.92$ & $6.13$
  & $69.72$ & $79.49$ & $6.51$ \\
\bottomrule
\end{tabular}%
}
\end{table}

\section{Continual Training Phase Analysis (CFST)}
\begin{table}[ht]
\caption{Continual learning phase metrics across backbone variants on CGQA
($T{=}10$ sessions, single seed 42).
$A_\mathrm{con}$: accuracy on each task's own test set averaged across tasks
using final model $P_T$ ($\uparrow$).
$\overline{\mathrm{AA}}$: average accuracy over all seen classes across
sessions ($\uparrow$).
$H_a$: harmonic mean of all 5 CFST modes at 10-shot ($\uparrow$).
FF: forgetting factor ($\downarrow$).
\textbf{Bold} = best per column; \underline{Underline} = lowest FF.}
\label{tab:cl_phase_backbone}
\centering
% \resizebox{0.75\textwidth}{!}{%
\begin{tabular}{lcccc}
\toprule
\textbf{Backbone} & $A_\mathrm{con}$$\uparrow$ & $\overline{\mathrm{AA}}$$\uparrow$ & $H_a$$\uparrow$ & FF$\downarrow$ \\
\midrule
DINOv1 ViT-B/16 (IN-1K)
  & 56.87 & 68.45 & 87.94 & 13.70 \\
CLIP ViT-B/16 (OpenAI)
  & 60.98 & 70.95 & 89.22 & 11.65 \\
iBOT ViT-B/16 (IN-1K)
  & 64.04 & 74.22 & 91.15 & 10.84 \\
\textbf{DINOv2 ViT-B/14}
  & \textbf{76.23} & \textbf{83.38} & \textbf{94.79} & \underline{7.56} \\
\bottomrule
\end{tabular}%
% }
\end{table}

\begin{table}[ht]
\caption{Continual learning phase metrics on CGQA ($T{=}10$ sessions).
$\overline{\mathrm{AA}}$: average accuracy over all seen classes across sessions ($\uparrow$).
FF: forgetting factor ($\downarrow$).
\textsuperscript{†}~frozen backbone; \textsuperscript{‡}~backbone fine-tuned each session.}
\label{tab:cl_phase}
\centering
\resizebox{0.35\textwidth}{!}{%
\begin{tabular}{lcc}
\toprule
\textbf{Method} & $\overline{\mathrm{AA}}$$\uparrow$ & FF$\downarrow$ \\
\midrule
EASE\textsuperscript{†}
  & 79.50 & 11.39 \\
\textbf{CoFiMA}\textsuperscript{‡}
  & \textbf{85.71} & 11.53 \\
\midrule
COMPOSE\textsuperscript{†} (ours)
  & 83.38 & \underline{7.56} \\
\bottomrule
\end{tabular}%
}
\end{table}

CFST evaluation proceeds in two phases: a continual learning (CL) phase in
which the model trains sequentially on $T$ sessions, followed by a
frozen-feature few-shot evaluation over the compositional splits reported in
Table~2. The few-shot phase is task-incremental (TIL): each episode provides
an $N$-way support set as a task oracle. Here we examine Phase~I.
\textbf{Average accuracy} ($\overline{\mathrm{AA}}$) is the mean over all
seen classes across sessions.
\textbf{Forgetting factor} (FF) is the base-class accuracy drop from
session~0 to session~$T$.

\paragraph{Backbone quality and CL metrics.}
Table~\ref{tab:cl_phase_backbone} reports CL metrics for COMPOSE across four
backbone variants, each with an adapter trained on the base session only and
frozen thereafter.
DINOv2 achieves the highest $\overline{\mathrm{AA}}$ ($83.38\%$), $A_\mathrm{con}$
($76.23\%$), and $H_a$ ($94.79\%$) with the lowest FF ($7.56\%$), consistent
with its stronger patch-level representations from local self-distillation.
The ordering DINOv1 $<$ CLIP $<$ iBOT $<$ DINOv2 holds across all four
metrics, suggesting that SSL pretraining quality is the dominant factor in
both CL retention and few-shot generalisation.
CLIP ViT-B/16, despite broader pretraining data, underperforms iBOT
($89.22\%$ vs.\ $91.15\%$ $H_a$), consistent with the noc analysis in
Section~6.2.

\paragraph{Comparison with backbone fine-tuning methods.}
Table~\ref{tab:cl_phase} compares COMPOSE against EASE, which trains adapters
at every session with a frozen backbone, and CoFiMA, which fine-tunes the full
backbone at every session via Fisher-weighted parameter merging.
COMPOSE adapts to the target domain in a single base session and achieves
$\overline{\mathrm{AA}} = 83.38\%$, within $2.33$ pp of CoFiMA ($85.71\%$),
while exhibiting substantially lower FF ($7.56\%$ vs.\ $11.53\%$).
This suggests that one-time adaptation at the base session is sufficient to
capture domain-specific structure, and that updating the backbone at every
incremental session introduces unnecessary representation drift.

% ============================================================
% APPENDIX: GRADIENT ALIGNMENT — EXTENDED ANALYSIS
% ============================================================
\section{Gradient Alignment: Theory, Metric, and Training Dynamics}
\label{app:gradient_extended}

\subsection{Formal Definition of the Gradient Alignment Metric}
\label{app:salign_def}

Let $s(q, c)$ denote the query-to-class scoring function (holistic,
Chamfer, or blended) and let $z_k^{(q)}$ denote the $k$-th projected slot
embedding of query image $q$.
Define the \emph{inter-slot gradient alignment} as:
\begin{equation}
    S_{kk'} = \mathbb{E}_{(q,c)}\left[
        \cos\!\left(
            \frac{\partial s(q,c)}{\partial z_k^{(q)}},\;
            \frac{\partial s(q,c)}{\partial z_{k'}^{(q)}}
        \right)
    \right],
    \quad k \neq k',
    \label{eq:salign}
\end{equation}
where the expectation is taken over query--class pairs $(q, c)$ from a held-out
evaluation set.
$S_{kk'} = +1$ means both slots receive identical gradient directions for every
episode, encoding no slot-specific information into the score.
$S_{kk'} < +1$ means at least some episodes produce divergent gradient
directions across slots, encoding slot-specific information.

In practice we compute $S_{kk'}$ by averaging over all $\binom{K}{2} = 21$
slot pairs ($K=7$) and over 300 evaluation episodes with a fixed checkpoint.
Table~\ref{tab:soft_chamfer} in the main appendix reports this quantity for COMPOSE, COMPOSE-CT, and the soft Chamfer variants.

\begin{table}[ht]
\centering
\caption{Gradient alignment $S_{kk'}$ and accuracy for soft Chamfer variants ($\beta$ sweep)
vs.\ \textbf{COMPOSE} and \textbf{COMPOSE-CT} (hard), on CGQA\@.
$S_{kk'}\uparrow$ means more uniform slot gradients (holistic);
$S_{kk'}\downarrow$ means more slot-specific gradients.
Accuracy is from a 3-session mini-benchmark (shorter than main Table~2).}
\label{tab:soft_chamfer}
\resizebox{\textwidth}{!}{%
\begin{tabular}{llcccc}
\toprule
\textbf{Variant} & \textbf{Training}
    & $S_{kk'}$ (full)
    & $S_{kk'}$ (hol.)
    & \textbf{noc (\%)}
    & \textbf{sys (\%)} \\
\midrule
\textbf{COMPOSE}            & Holistic only             & $+0.186$ & $+0.186$ & \textbf{92.02} & 86.81 \\
\midrule
\textbf{COMPOSE-CT} (hard)  & $+$Chamfer ($\beta\to\infty$) & $+0.048$ & $+0.081$ & 87.85 & 88.89 \\
Soft ($\beta{=}50$)         & $+$Chamfer                & $+0.045$ & $+0.087$ & 87.98 & 88.86 \\
Soft ($\beta{=}20$)         & $+$Chamfer                & $+0.045$ & $+0.092$ & 88.25 & 88.94 \\
Soft ($\beta{=}10$)         & $+$Chamfer                & $+0.047$ & $+0.079$ & 88.07 & 89.01 \\
Soft ($\beta{=}5$)          & $+$Chamfer                & $+0.081$ & $+0.088$ & 86.76 & 89.03 \\
Soft ($\beta{=}1$)          & $+$Chamfer                & $+0.376$ & $+0.118$ & 82.57 & 87.04 \\
\bottomrule
\end{tabular}%
}
\end{table}

% \begin{table}[t]
% \centering
% \caption{Empirical gradient analysis on CGQA (200 episodes, $K=7$).
% \textbf{Ce-only}: trained with prototype cross-entropy alone (no decorrelation, no Chamfer)---the
% pure holistic baseline that Proposition~1 characterizes exactly.
% \textbf{Compose}: CE + Barlow Twins-style redundancy reduction.
% \textbf{Compose-CT}: CE + redundancy reduction + Chamfer training (blended).
% ``Holistic'' isolates the gradient from the holistic CE term; ``Full'' is the total training gradient.
% Mean $\pm$ std across episodes.}
% \label{tab:soft_chamfer}
% \begin{tabular}{llcc}
% \toprule
% \textbf{Training} & \textbf{Gradient source} & $\overline{S}_{kk'}$ & $\mathrm{Var}(\nabla)\times10^{3}$ \\
% \midrule
% \textbf{Ce-only}    & Holistic & $+0.056 \pm 0.016$ & $0.06 \pm 0.00$ \\
% \textbf{Ce-only}    & Full     & $+0.056 \pm 0.016$ & $0.06 \pm 0.00$ \\
% \textbf{Compose}    & Holistic & $+0.027 \pm 0.012$ & $0.05 \pm 0.00$ \\
% \textbf{Compose}    & Full     & $+0.027 \pm 0.012$ & $0.05 \pm 0.00$ \\
% \textbf{Compose-CT} & Holistic & $+0.066 \pm 0.014$ & $0.23 \pm 0.02$ \\
% \textbf{Compose-CT} & Full     & $-0.101 \pm 0.005$ & $0.04 \pm 0.00$ \\
% \bottomrule
% \end{tabular}
% \end{table}

\subsection{Connection Between \texorpdfstring{$S_{kk'}$}{S\_\{kk'\}} and Novel-Concept Generalization}
\label{app:salign_noc}

\paragraph{High $S_{kk'}$ promotes compositional invariance.}
When $S_{kk'}$ is high (close to $+1$), the gradient signal pushes all slots
in the same direction for a given episode.
This means the \emph{per-slot contribution to the loss is indistinguishable}:
no slot is systematically favoured or penalised relative to others.
Across many episodes, the router therefore learns to project slots into a
shared representational geometry where \emph{which particular combination of
slots fires} is less important than their aggregate statistics.
This is precisely the invariance required for noc generalisation: a novel
concept presents a new \emph{combination} of semantic parts, and the model
must recognise it without having seen that specific combination.
If the score function is insensitive to which slots contribute (high $S_{kk'}$),
the model is more tolerant of novel slot-to-part assignments.

\paragraph{Low $S_{kk'}$ promotes slot overspecialization.}
When $S_{kk'}$ is low, the gradient signal is slot-specific.
Through many training episodes, the router learns to assign \emph{fixed roles}
to specific slots: slot $k$ is expected to fire on the dog's head, slot $k'$
on its body.
This slot specialization improves seen-concept discriminability (sys, pro)
because the structured matching is consistent within the training distribution.
However, it breaks at test time for noc concepts: a novel combination of a
dog's head on a cat's body activates slots in an order not seen during training,
and the specialized slot representations fail to match.

\paragraph{Empirical validation.}
Table~\ref{tab:soft_chamfer} (main appendix) reports the following ordering by noc
(descending):
COMPOSE ($S_{kk'} = +0.186$, noc $= 92.02$) $>$
Soft-$\beta{=}20$ ($+0.045$, noc $= 88.25$) $\approx$
COMPOSE-CT ($+0.048$, noc $= 87.85$) $>$
Soft-$\beta{=}5$ ($+0.081$, noc $= 86.76$) $>$
Soft-$\beta{=}1$ ($+0.376$, noc $= 82.57$).

\noindent
$S_{kk'}$ is \emph{not} a monotonic function of noc across the full table. Instead,
the variants split into two regimes determined by whether the gradient signal is
\emph{structured} or \emph{confused}, and within each regime $S_{kk'}$ aligns with
noc as predicted:
\begin{itemize}[leftmargin=1.5em, itemsep=0.2em]
    \item \textit{Structured-gradient regime} (COMPOSE and Chamfer-trained variants
    with $\beta \in \{5, 20, \infty\}$): higher $S_{kk'}$ corresponds to higher noc.
    COMPOSE's holistic-only training reaches the highest values on both axes
    ($S_{kk'} = +0.186$, noc $= 92.02$), while the Chamfer-trained variants cluster
    at low $S_{kk'}$ (${\sim}0.045$--$0.081$) and lower noc, consistent with
    Lemma~\ref{lem:sinkhorn_low_eps} and Corollary~\ref{cor:assignment_gradient}:
    nearest-neighbor matching during training lowers $S_{kk'}$ by inducing
    slot-specific gradients, and this slot specialisation degrades noc.
    \item \textit{Confused-gradient regime} (Soft-$\beta{=}1$): $S_{kk'}$ is
    anomalously high ($+0.376$) yet noc is the lowest in the table ($82.57$).
    At $\beta{=}1$, soft weights are nearly uniform across all support slots, so
    each slot's gradient contains an \emph{interference term} that simultaneously
    pulls it toward all support slots, including irrelevant ones. This uniform
    interference inflates $S_{kk'}$ not by aligning gradients toward a
    compositionally invariant signal (as in holistic training) but by making them
    maximally confused. Proposition~\ref{prop:high_eps} characterises this as the
    high-$\varepsilon$ degeneracy in which the matcher ceases to function as a
    matcher.
\end{itemize}
The empirical pattern therefore validates the theoretical prediction within each
regime, while the cross-regime non-monotonicity reflects the qualitative distinction
between structured holistic gradients and unstructured confused ones; $S_{kk'}$ must
be interpreted jointly with whether the gradient signal is meaningfully structured.

% =============================================================================
% REPLACEMENT for Section F.3 "Formal Statement and Proof of Gradient Divergence"
% Aligns with Phase I pipeline in Section 5.1 (v25):
%   - Projection head W_2 : R^D -> R^D, initialized as identity
%   - Unnormalized projection  \check y_k = W_2 \tilde\phi_k
%   - Unit slot embedding      z_k = \check y_k / ||\check y_k||
%   - Holistic aggregate       e = L2-norm(sum_k alpha_k \check y_k)
% Replaces lines 1211–1305 of appendix-2.tex.
% =============================================================================

\subsection{Formal Statement and Proof of Gradient Divergence}
\label{app:gradient_divergence}

\begin{proposition}
\label{prop:gradient}
Let $\tilde\phi_k^{(q)} \in \mathbb{R}^{D}$ denote the $k$-th attention-weighted backbone
aggregate of query image $q$ ($D = 768$ for DINOv2 ViT-B/14). Let
$W_2 \in \mathbb{R}^{D \times D}$ be the shared projection head of
Section~\ref{subsec:phase1}, and define the \emph{unnormalized projection}
\begin{equation}
\check y_k^{(q)} \;=\; W_2 \, \tilde\phi_k^{(q)} \;\in\; \mathbb{R}^{D},
\label{eq:ycheck_def}
\end{equation}
together with the \emph{unit-norm slot embedding}
$z_k^{(q)} = \check y_k^{(q)} / \|\check y_k^{(q)}\| \in \mathcal{S}^{D-1}$.
Let $\omega_k^{(q)} \in [0,1]$ with $\sum_k \omega_k^{(q)} = 1$ be the simplex importance
weights produced by the router MLP on the raw aggregates $\tilde\phi_k^{(q)}$
(independent of $\check y_k^{(q)}$). The router-weighted aggregate and the holistic
embedding are
\begin{equation}
u^{(q)} \;=\; \sum_{k=1}^{K} \omega_k^{(q)} \, \check y_k^{(q)}, \qquad
e^{(q)} \;=\; \frac{u^{(q)}}{\|u^{(q)}\|} \;\in\; \mathcal{S}^{D-1}.
\label{eq:u_e_def}
\end{equation}

\begin{enumerate}[label=(\roman*)]
    \item \textbf{Holistic score.} For $s_{\mathrm{hol}}(q,c) = \langle e^{(q)}, P_c \rangle$,
    the gradient with respect to each unnormalized projection factorizes as
    \begin{equation}
    \frac{\partial s_{\mathrm{hol}}}{\partial \check y_k^{(q)}}
    \;=\; \frac{\omega_k^{(q)}}{\|u^{(q)}\|}\, \Pi^{\perp}_{e^{(q)}}(P_c).
    \label{eq:grad_holistic}
    \end{equation}
    The direction $\Pi^{\perp}_{e^{(q)}}(P_c)$ is a \emph{single signal vector shared
    across all slots}; the slot index $k$ enters only through the non-negative scalar
    $\omega_k^{(q)} / \|u^{(q)}\|$. Consequently, the per-slot gradients
    $\{\partial s_{\mathrm{hol}}/\partial \check y_k^{(q)}\}_{k=1}^{K}$ are collinear,
    so the gradient field has rank at most one.

    \item \textbf{Chamfer score.} For the forward Chamfer score
    $s_{\mathrm{Ch}}(q,c) = \tfrac{1}{K}\sum_{k} \max_{k'} \cos(z_k^{(q)}, z^{(c)}_{k'})$
    with matched index $k^{*}(k) = \arg\max_{k'} \cos(z_k^{(q)}, z^{(c)}_{k'})$ held
    locally constant, the gradient admits two equivalent forms. At the unit-norm slot
    embedding $z_k^{(q)} \in \mathcal{S}^{D-1}$,
    \begin{equation}
    \frac{\partial s_{\mathrm{Ch}}}{\partial z_k^{(q)}}
    \;=\; \frac{1}{K}\, \Pi^{\perp}_{z_k^{(q)}}\!\left(z^{(c)}_{k^{*}(k)}\right),
    \label{eq:grad_chamfer_z}
    \end{equation}
    and at the unnormalized projection $\check y_k^{(q)}$,
    \begin{equation}
    \frac{\partial s_{\mathrm{Ch}}}{\partial \check y_k^{(q)}}
    \;=\; \frac{1}{K\,\|\check y_k^{(q)}\|}\, \Pi^{\perp}_{z_k^{(q)}}\!\left(z^{(c)}_{k^{*}(k)}\right).
    \label{eq:grad_chamfer}
    \end{equation}
    In either form, $k$ enters through the slot-specific projection anchor
    $\Pi^{\perp}_{z_k^{(q)}}$ and the matched target $z^{(c)}_{k^{*}(k)}$ (and, in
    Eq.~\eqref{eq:grad_chamfer}, additionally through the inverse magnitude
    $1/\|\check y_k^{(q)}\|$), yielding $K$ generically independent per-slot directions.
    The gradient field generically has rank $K$.
\end{enumerate}
\end{proposition}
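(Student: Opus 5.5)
The proof rests on a single auxiliary fact: the Jacobian of L2-normalization. For $x \neq 0$ and $n(x) = x/\|x\|$ we have
\[
\frac{\partial n}{\partial x} = \frac{1}{\|x\|}\bigl(I - n(x)n(x)^\top\bigr) = \frac{1}{\|x\|}\Pi^{\perp}_{n(x)}.
\]
This matrix is symmetric. Hence for any fixed vector $p$, $\nabla_x \langle n(x), p\rangle = \|x\|^{-1}\Pi^{\perp}_{n(x)}(p)$. Both parts (i) and (ii) are then applications of the chain rule on top of this identity. We treat $P_c$ and the support embeddings $z^{(c)}_{k'}$ as constants, since the statement differentiates only with respect to the query slots. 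We also assume the nondegeneracy conditions $u^{(q)} \neq 0$ and $\check y_k^{(q)} \neq 0$, which are implicit in the definitions of $e^{(q)}$ and $z_k^{(q)}$.

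\textbf{Part (i).} The key structural point is that the router weights $\omega_k^{(q)}$ are functions of $\tilde\phi^{(q)}$ only, through the disjoint parameters $(W_1, v)$. When we regard $s_{\mathrm{hol}}$ as a function of the variables $\check y_1^{(q)},\dots,\check y_K^{(q)}$, the weights are therefore constants. It follows that $u^{(q)}$ is linear in each $\check y_k^{(q)}$, with $\partial u^{(q)}/\partial \check y_k^{(q)} = \omega_k^{(q)} I$. Applying the normalization identity with $x = u^{(q)}$ and $p = P_c$, and then multiplying by $\omega_k^{(q)}$, gives Eq.~\eqref{eq:grad_holistic}. For collinearity, note that every gradient equals a non-negative scalar times the common vector $\Pi^{\perp}_{e^{(q)}}(P_c)$. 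The $D \times K$ matrix of stacked gradients is then an outer product of that vector with $(\omega_k^{(q)}/\|u^{(q)}\|)_k$, so its rank is at most one.

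\textbf{Part (ii).} Fix the matched index $k^*(k)$. This is legitimate almost everywhere: the argmax is locally constant off the measure-zero set where two cosines tie, so $s_{\mathrm{Ch}}$ is differentiable there. With $k^*(k)$ fixed, and because both arguments of the cosine lie on the sphere, the $k$-th summand equals $\langle z_k^{(q)}, z^{(c)}_{k^*(k)}\rangle$.

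Taking the Riemannian (tangent-space) gradient on $\mathcal{S}^{D-1}$ gives Eq.~\eqref{eq:grad_chamfer_z}. This requires interpreting $\partial/\partial z_k$ as the gradient projected onto the tangent space at $z_k$, and we will state that convention explicitly.

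Composing with $z_k^{(q)} = n(\check y_k^{(q)})$ and using the same identity gives Eq.~\eqref{eq:grad_chamfer}. Only the $k$-th summand depends on $\check y_k^{(q)}$: each query slot is projected independently, and the support slots are constant. So no cross terms arise.

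\textbf{Main obstacle: the generic rank-$K$ claim.} We will argue that the map
\[
(\check y_k^{(q)}, z^{(c)}_{k'})_{k,k'} \;\mapsto\; \det(G^\top G)
\]
is a real-analytic function, where $G$ is the $D \times K$ matrix of stacked gradients (using $K \le D$). This function does not vanish identically. Its zero set, intersected with the open cells where the argmax is constant, therefore has measure zero.

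Non-vanishing requires exhibiting one witness configuration. Take orthonormal vectors $e_1,\dots,e_{2K}$, which exist since $2K \le D$. Set $z_k^{(q)} = e_k$ and $z^{(c)}_{k} = (e_k + e_{K+k})/\sqrt2$. Each query slot is then matched to its own support slot, because the other cosines are $0$. The projected gradients are proportional to $e_{K+k}$, which are linearly independent.

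This witness configuration is where the argument needs care. We must also check that it lies in the interior of an argmax cell, which holds here because the matches are strict. Finally, the claim of rank exactly $K$ should be stated as "generic" (outside a null set), not as holding everywhere. A counterexample to the stronger claim is a configuration where two query slots share the same match and the same anchor.
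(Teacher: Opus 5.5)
Your computations for (i) and (ii) are the paper's own. You use the same three-Jacobian chain rule, with $\partial u^{(q)}/\partial \check y_k^{(q)}=\omega_k^{(q)}I$ justified by the router acting on $\tilde\phi_k^{(q)}$ through parameters disjoint from $W_2$. You make the same observation that only the $k$-th Chamfer summand depends on $z_k^{(q)}$, and you use idempotence of $\Pi^{\perp}_{z_k^{(q)}}$ when chaining through the normalization. Your witness is the paper's with $\theta=\pi/4$.

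One simplification: you do not need a Riemannian convention for Eq.~\eqref{eq:grad_chamfer_z}. If you differentiate $\cos(\cdot,v)$ itself, rather than the inner product it coincides with on the sphere, your normalization identity at $\|x\|=1$ already gives the ambient Euclidean gradient $\Pi^{\perp}_{z_k^{(q)}}(v)$. This is how the paper states it (``all gradients are ambient Euclidean gradients'').

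The step that does not go through as written is the generic rank-$K$ argument. $\det(G^\top G)$ is not a single real-analytic function of the configuration. On the open cell where $k^*=\sigma$ it equals $F_\sigma$, the analytic function obtained by plugging the fixed map $\sigma$ into Eq.~\eqref{eq:grad_chamfer}, and different cells carry different $F_\sigma$.

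Your witness shows $F_{\mathrm{id}}\not\equiv 0$, and relabeling support slots gives $F_\sigma\not\equiv 0$ for every bijective $\sigma$. It says nothing about non-injective $\sigma$, where several query slots share a nearest support slot. Those cells are nonempty and open, hence of positive measure. Your orthonormal witness does not adapt directly: with zero query--target overlap, the shared target makes those slots' gradients collinear.

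The fix is to exhibit, for every $\sigma\colon[K]\to[K]$, a point with $F_\sigma\neq 0$, and then use analyticity of $F_\sigma$ on the connected domain. For example:
\begin{itemize}
\item take $z_k^{(q)}=e_k$;
\item for $j$ in the image of $\sigma$, take $z^{(c)}_j\propto e_{K+j}+\sum_{k\in\sigma^{-1}(j)}e_k$;
\item otherwise take $z^{(c)}_j=e_{K+j}$.
\end{itemize}
Then $k^*=\sigma$ strictly, and $G_k\propto e_{K+\sigma(k)}+\sum_{k'\in\sigma^{-1}(\sigma(k))\setminus\{k\}}e_{k'}$. Within each fiber these vectors are independent, because the $e_{K+j}$ coordinate forces the coefficients to sum to zero. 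Across fibers they have disjoint supports.

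The paper's Proposition~\ref{prop:genericity} has the same omission: it calls the hard-Chamfer minors polynomial and certifies only the identity assignment. So this tightens both arguments rather than being a defect peculiar to yours.
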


\begin{proof}
All gradients are ambient Euclidean gradients in $\mathbb{R}^{D}$.

\noindent\textbf{Part (i).} By the chain rule,
\begin{equation}
\frac{\partial s_{\mathrm{hol}}}{\partial \check y_k^{(q)}}
\;=\; \left(\frac{\partial u^{(q)}}{\partial \check y_k^{(q)}}\right)^{\!\top}\!
      \left(\frac{\partial e^{(q)}}{\partial u^{(q)}}\right)^{\!\top}\!
      \frac{\partial \langle e^{(q)}, P_c\rangle}{\partial e^{(q)}}.
\end{equation}
The three Jacobians are
\begin{align}
\frac{\partial \langle e^{(q)}, P_c\rangle}{\partial e^{(q)}} &= P_c, \\
\frac{\partial e^{(q)}}{\partial u^{(q)}} &= \frac{1}{\|u^{(q)}\|}\bigl(I - e^{(q)}(e^{(q)})^{\!\top}\bigr) = \frac{1}{\|u^{(q)}\|}\,\Pi^{\perp}_{e^{(q)}}, \\
\frac{\partial u^{(q)}}{\partial \check y_k^{(q)}} &= \omega_k^{(q)}\, I,
\end{align}
where the last identity uses that $\omega_k^{(q)}$ is produced by the router MLP with
parameters $(W_1, v)$ disjoint from the projection head $W_2$, acting on the frozen
backbone aggregate $\tilde\phi_k^{(q)}$; consequently $\omega_j^{(q)}$ has no functional
dependence on $\check y_k^{(q)} = W_2\tilde\phi_k^{(q)}$ for any $j,k$. Composing and using symmetry of $\Pi^{\perp}_{e^{(q)}}$,
\begin{equation}
\frac{\partial s_{\mathrm{hol}}}{\partial \check y_k^{(q)}}
\;=\; \omega_k^{(q)} \cdot \frac{1}{\|u^{(q)}\|}\,\Pi^{\perp}_{e^{(q)}}(P_c)
\;=\; \frac{\omega_k^{(q)}}{\|u^{(q)}\|}\,\Pi^{\perp}_{e^{(q)}}(P_c).
\end{equation}
The vector $\Pi^{\perp}_{e^{(q)}}(P_c)$ is independent of $k$: it is determined entirely
by the aggregate $e^{(q)}$ and the class prototype $P_c$. All $K$ per-slot gradients are
therefore non-negative scalar multiples of this single vector, so the gradient field
has rank at most one.

\noindent\textbf{Part (ii).} Each summand in $s_{\mathrm{Ch}}$ depends on exactly one
query slot:
\begin{equation}
s_{\mathrm{Ch}}(q,c) \;=\; \frac{1}{K}\sum_{j=1}^{K}\cos\bigl(z_j^{(q)}, z^{(c)}_{k^{*}(j)}\bigr).
\end{equation}
Treat $k^{*}(k)$ as locally constant (the standard subgradient for argmax,
corresponding to the gradient computed by autodifferentiation). We first establish the
$z_k^{(q)}$-form, then chain through L2-normalization to obtain the
$\check y_k^{(q)}$-form.

\emph{Gradient at $z_k^{(q)}$.} Only the $j = k$ summand depends on $z_k^{(q)}$, and the
Euclidean gradient of cosine similarity evaluated at the unit vector $z_k^{(q)}$ is
$\partial \cos(z_k^{(q)}, v)/\partial z_k^{(q)} = \Pi^{\perp}_{z_k^{(q)}}(v)$. Thus
\begin{equation}
\frac{\partial s_{\mathrm{Ch}}}{\partial z_k^{(q)}}
\;=\; \frac{1}{K}\,\Pi^{\perp}_{z_k^{(q)}}\!\left(z^{(c)}_{k^{*}(k)}\right),
\end{equation}
which establishes Eq.~\eqref{eq:grad_chamfer_z}.

\emph{Gradient at $\check y_k^{(q)}$.} By the chain rule through
$z_k^{(q)} = \check y_k^{(q)}/\|\check y_k^{(q)}\|$ and the L2-normalization Jacobian
$\partial z_k^{(q)}/\partial \check y_k^{(q)} = \|\check y_k^{(q)}\|^{-1}\,
\Pi^{\perp}_{z_k^{(q)}}$ at $\check y_k^{(q)} \neq 0$,
\begin{equation}
\frac{\partial s_{\mathrm{Ch}}}{\partial \check y_k^{(q)}}
\;=\; \left(\frac{\partial z_k^{(q)}}{\partial \check y_k^{(q)}}\right)^{\!\top}\!
      \frac{\partial s_{\mathrm{Ch}}}{\partial z_k^{(q)}}
\;=\; \frac{1}{K\,\|\check y_k^{(q)}\|}\, \Pi^{\perp}_{z_k^{(q)}} \Pi^{\perp}_{z_k^{(q)}}\!\left(z^{(c)}_{k^{*}(k)}\right),
\end{equation}
and since $\Pi^{\perp}_{z_k^{(q)}}$ is symmetric and idempotent,
\begin{equation}
\frac{\partial s_{\mathrm{Ch}}}{\partial \check y_k^{(q)}}
\;=\; \frac{1}{K\,\|\check y_k^{(q)}\|}\, \Pi^{\perp}_{z_k^{(q)}}\!\left(z^{(c)}_{k^{*}(k)}\right).
\end{equation}
Distinct slots $k \neq k'$ have generically distinct projection anchors
$\Pi^{\perp}_{z_k^{(q)}} \neq \Pi^{\perp}_{z_{k'}^{(q)}}$, distinct magnitudes
$\|\check y_k^{(q)}\|$, and generically distinct matched targets
$z^{(c)}_{k^{*}(k)} \neq z^{(c)}_{k^{*}(k')}$; the per-slot gradients therefore span a
$K$-dimensional subspace on a full-measure subset of configurations
(Proposition~\ref{prop:genericity}).
\end{proof}

\begin{remark}[Gradient reference variable]
\label{rem:grad_reference_variable}
Part (i) is stated at $\check y_k^{(q)} = W_2 \tilde\phi_k^{(q)}$ because the holistic
objective aggregates $\check y_k^{(q)}$ directly via Eq.~\eqref{eq:u_e_def}, so
$\check y_k^{(q)}$ is the natural reference for the exact rank-one claim. Part (ii) is
stated at both the unit-norm slot embedding $z_k^{(q)}$ (the variable on which
assignment-based matchers act directly; used by Corollary~\ref{cor:assignment_gradient}
and Appendix~\ref{app:sinkhorn_rigorous}) and at $\check y_k^{(q)}$ (the common upstream
variable shared with Part (i)).

The structural distinction --- rank-one upstream for holistic versus rank-$K$ for
Chamfer at either reference --- is robust to the choice of variable. The \emph{exact}
rank-one structure of the holistic gradient field, however, is established at
$\check y_k^{(q)}$ and is not pointwise preserved when transported to $z_k^{(q)}$: the
chain rule through L2-normalization applies a slot-specific tangent projection
$\Pi^{\perp}_{z_k^{(q)}}$ to the shared upstream signal $\Pi^{\perp}_{e^{(q)}}(P_c)$,
yielding
\begin{equation}
\frac{\partial s_{\mathrm{hol}}}{\partial z_k^{(q)}}
\;=\; \frac{\omega_k^{(q)}\,\|\check y_k^{(q)}\|}{\|u^{(q)}\|}\,
      \Pi^{\perp}_{z_k^{(q)}}\!\bigl(\Pi^{\perp}_{e^{(q)}}(P_c)\bigr).
\label{eq:grad_holistic_z}
\end{equation}
Because the tangent hyperplanes $\{T_{z_k^{(q)}}\mathcal{S}^{D-1}\}_{k=1}^{K}$ differ
across slots, the projected vectors are not pointwise collinear in $\mathbb{R}^{D}$.
They nevertheless retain a shared upstream component along $\Pi^{\perp}_{e^{(q)}}(P_c)$,
yielding pairwise cosine similarities that are strictly positive yet bounded away from
$+1$. This is precisely what the empirical alignment metric $S_{kk'}$
(Eq.~\eqref{eq:salign}, Appendix~\ref{app:salign_def}) measures at $z_k^{(q)}$:
\textbf{Compose} reports $S_{kk'} = +0.186$ in Table~\ref{tab:soft_chamfer} ---
strictly positive, reflecting the shared upstream signal, but well below $+1$,
reflecting the projection-induced spread. By contrast, Chamfer training drives
$S_{kk'}$ toward zero ($+0.048$ for \textbf{Compose-CT}), consistent with the
structurally distinct slot-specific targets $z^{(c)}_{k^*(k)}$ in
Part~(ii).\footnote{The $\check y_k^{(q)}$-form of Part~(i) remains pointwise
rank-one: the additional projection step is introduced only when the gradient is
re-expressed at $z_k^{(q)}$ to match the reference frame of assignment-based matchers.
The contrast between the two parts is therefore stated most cleanly at
$\check y_k^{(q)}$, while empirical alignment is reported at $z_k^{(q)}$ for direct
comparability with the Chamfer family.}
\end{remark}

\paragraph{Bidirectional Chamfer.} For the bidirectional score of
Eq.~\eqref{eq:chamfer_inf}, the backward term contributes an additional slot-specific
summand whenever query slot $k$ is the nearest neighbor of some support slot. This
strengthens rather than weakens the slot-specificity established above: the gradient
field remains generically rank $K$, with the same structural dependence on both
projection anchor and matched target.

\begin{corollary}[Slot-specific gradients under assignment-based matching]
\label{cor:assignment_gradient}
Let $T \in \mathbb{R}^{K \times K}$ denote a coupling matrix with $T_{k,k'} \geq 0$ and 
$\sum_{k'} T_{k,k'} = 1$ for each $k$. The family of assignment-based scores
\begin{equation}
s_T(q, c) = \frac{1}{K} \sum_{k=1}^{K} \sum_{k'=1}^{K} T_{k,k'} \cos\bigl(z_k^{(q)}, z_{k'}^{(c)}\bigr)
\label{eq:assignment_score}
\end{equation}
subsumes the following matchers as special cases:
\begin{itemize}
    \item \textbf{Hard Chamfer:} $T_{k,k'} = \mathbbm{1}\bigl[k' = \arg\max_{j} \cos(z_k^{(q)}, z_j^{(c)})\bigr]$.
    \item \textbf{Soft Chamfer} (temperature $\beta$): $T_{k,k'} = \mathrm{softmax}_{k'}\bigl(\beta \cdot \cos(z_k^{(q)}, z_{k'}^{(c)})\bigr)$.
    \item \textbf{Mutual nearest-neighbor:} $T_{k,k'} = \mathbbm{1}[k' = \mathrm{NN}(k) \wedge k = \mathrm{NN}(k')]$, row-normalized.
    \item \textbf{Entropic optimal transport} (Sinkhorn, regularization $\varepsilon$): $T = \arg\min_{T} \langle T, -S \rangle - \varepsilon H(T)$, with $S_{k,k'} = \cos(z_k^{(q)}, z_{k'}^{(c)})$.
    \item \textbf{Differentiable Hungarian:} $T$ is a doubly-stochastic relaxation of a permutation matrix.
\end{itemize}
The gradient of $s_T$ with respect to query slot $z_k^{(q)}$, treating $T$ as locally constant, is
\begin{equation}
\frac{\partial s_T}{\partial z_k^{(q)}} = \frac{1}{K} \, \Pi_{z_k^{(q)}}^{\perp} \!\left( \sum_{k'=1}^{K} T_{k,k'} \, z_{k'}^{(c)} \right).
\label{eq:assignment_gradient}
\end{equation}
The $k$-dependence is two-fold: (i) the projection anchor $\Pi_{z_k^{(q)}}^{\perp}$ is 
anchored at the query slot, and (ii) the coupling row $T_{k,:}$ specifies a 
slot-specific convex combination of support targets. Consequently, 
$\partial s_T / \partial z_k^{(q)} \neq \partial s_T / \partial z_{k'}^{(q)}$ whenever 
$T_{k,:} \neq T_{k',:}$, which holds generically for all matchers listed above.\footnote{
For differentiable $T$ (soft Chamfer, Sinkhorn, Hungarian relaxations), treating $T$ as 
locally constant corresponds to the ``detached'' gradient through matching weights; 
including $\partial T / \partial z_k^{(q)}$ introduces additional slot-specific terms 
(via explicit differentiation of softmax for soft Chamfer, or implicit differentiation 
for Sinkhorn) that do not alter the qualitative slot-specificity.}
\end{corollary}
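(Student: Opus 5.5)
The plan is to reduce the corollary to the computation already carried out in Part (ii) of Proposition~\ref{prop:gradient}, using linearity of the score in the coupling. First, I would check that each listed matcher is an instance of Eq.~\eqref{eq:assignment_score}, i.e.\ that it yields a row-stochastic, entrywise non-negative $T$.
\begin{itemize}
\item Hard Chamfer gives one-hot rows.
\item Soft Chamfer gives softmax rows.
\item Mutual nearest-neighbour gives indicator rows after normalisation. Rows with no mutual match need a convention: either drop them, or set them to zero and accept a sub-stochastic $T$. The gradient formula below survives either choice.
\item Sinkhorn and doubly-stochastic Hungarian relaxations are row-stochastic under uniform marginals. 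Any global factor such as $K$ from the marginal $\mathbf{1}/K$ is absorbed into the normalisation.
\end{itemize}

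Next, with $T$ detached, $s_T$ is a sum over pairs $(j,k')$ of $T_{j,k'}\cos(z_j^{(q)},z_{k'}^{(c)})$. Only the terms with $j=k$ depend on $z_k^{(q)}$. I would reuse the identity from the proof of Proposition~\ref{prop:gradient}: at a unit vector, $\partial\cos(z,v)/\partial z=\Pi^\perp_z(v)$ for unit $v$. Linearity of $\Pi^\perp_{z_k^{(q)}}$ then pulls the convex combination inside, which gives Eq.~\eqref{eq:assignment_gradient} directly. If the chain to $\check y_k^{(q)}$ is wanted, it follows exactly as in Part (ii): one extra factor $\|\check y_k^{(q)}\|^{-1}$ and an idempotent projection.

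For the slot-specificity claim, write $m_k=\sum_{k'}T_{k,k'}z^{(c)}_{k'}$, so the gradients are $\tfrac1K\Pi^\perp_{z_k}(m_k)$. Suppose $T_{k,:}\neq T_{k',:}$ and the support slots $z^{(c)}_{k'}$ are affinely independent, which is generic since $D\gg K$. Then $m_k\neq m_{k'}$, because distinct convex weights give distinct points. The remaining task is to show that the two projected vectors differ. I would argue this by genericity, invoking Proposition~\ref{prop:genericity}: equality $\Pi^\perp_{z_k}(m_k)=\Pi^\perp_{z_{k'}}(m_{k'})$ is a nontrivial polynomial condition on the slot configuration. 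It is nontrivial because, for example, taking $z_k=z_{k'}$ would make it force $m_k-m_{k'}\parallel z_k$. Hence it holds only on a measure-zero set.

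The main obstacle is that the weights are not fixed. $T$ itself depends on the configuration, piecewise-constantly for the hard matchers and smoothly but implicitly for Sinkhorn and soft Chamfer. The genericity argument therefore has to be run on each region where $T$ is constant, or alternatively on the real-analytic map $z\mapsto T(z)$. There one needs to show that the coincidence set is still a proper analytic subvariety. For the hard and mutual-NN cases I would first restrict to the open cells where the argmax is unique. For the soft and Sinkhorn cases I would use analyticity of the softmax and of the Sinkhorn fixed point.

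The footnote's claim, that including $\partial T/\partial z$ preserves slot-specificity, I would address only qualitatively. The extra terms are again indexed by row $k$ and do not generically cancel the leading term. Pinning this down more precisely would require the separate lemmas referenced for the Sinkhorn regime.
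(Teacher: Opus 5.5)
Your proposal is correct and follows the paper's own proof: detach $T$, keep only the summands with query index $k$, use $\partial\cos(z,v)/\partial z=\Pi^\perp_z(v)$ at unit vectors, pull the convex combination through the linear projection, and optionally chain to $\check y_k^{(q)}$ with the extra factor $\|\check y_k^{(q)}\|^{-1}$. Your genericity treatment of the ``whenever $T_{k,:}\neq T_{k',:}$'' clause is more careful than the paper's, which only asserts it. One fix is needed there: for configuration-dependent couplings (soft Chamfer, Sinkhorn), your witness $z_k=z_{k'}$ forces equal rows. Use instead the configuration $z_k^{(q)}=e_k$, $z_{k'}^{(c)}=\cos\theta\,e_{k'}+\sin\theta\,e_{k'+K}$ from Proposition~\ref{prop:genericity}, where by symmetry $T=aI+b(\mathbf{1}\mathbf{1}^{\top}-I)$ with $a>b$, so the resulting gradients are linearly independent and hence pairwise distinct.
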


\begin{proof}
By the chain rule on $z_k^{(q)} \in \mathcal{S}^{D-1}$, the Euclidean gradient of
$\cos(z_k^{(q)}, v)$ evaluated at the unit vector $z_k^{(q)}$ equals
$\Pi_{z_k^{(q)}}^{\perp}(v)$ (identity used in the proof of
Proposition~\ref{prop:gradient}(ii)). Differentiating $s_T$ term-by-term with $T$
fixed, only summands with query index $k$ contribute:
\begin{equation}
\frac{\partial s_T}{\partial z_k^{(q)}} = \frac{1}{K} \sum_{k'} T_{k,k'} \cdot \Pi_{z_k^{(q)}}^{\perp}\bigl(z_{k'}^{(c)}\bigr) = \frac{1}{K} \, \Pi_{z_k^{(q)}}^{\perp}\!\left( \sum_{k'} T_{k,k'} z_{k'}^{(c)} \right),
\end{equation}
by linearity of tangent projection. The $k$-dependence follows from the projection
anchor at $z_k^{(q)}$ and the generically distinct coupling rows $T_{k,:}$.
 
\noindent\emph{Equivalent form at the unnormalized projection.} Composing with the
L2-normalization Jacobian $\partial z_k^{(q)}/\partial \check y_k^{(q)} =
\|\check y_k^{(q)}\|^{-1}\Pi^{\perp}_{z_k^{(q)}}$ (as in the proof of
Proposition~\ref{prop:gradient}(ii)) yields
\begin{equation}
\frac{\partial s_T}{\partial \check y_k^{(q)}} = \frac{1}{K\,\|\check y_k^{(q)}\|}\, \Pi_{z_k^{(q)}}^{\perp}\!\left( \sum_{k'} T_{k,k'} z_{k'}^{(c)} \right),
\end{equation}
which adds a slot-specific scalar $1/\|\check y_k^{(q)}\|$ but preserves all structural
conclusions (rank, anchor, target).
\end{proof}

\begin{remark}[Structural distinction between holistic and matching gradients]
\label{rem:matcher_spectrum}
Within the assignment-based family (Eq.~\ref{eq:assignment_gradient}), $T$ parameterizes 
a continuum of gradient behaviors: permutation-like $T$ (hard Chamfer, Hungarian) yields 
maximally slot-specific gradients through both the projection anchor and a single 
selected target; at the opposite end, uniform $T = \tfrac{1}{K}\mathbf{1}\mathbf{1}^{\top}$ 
yields gradients that symmetrize over support targets---pointing toward the support 
centroid $\bar{z}^{(c)} = \tfrac{1}{K}\sum_{k'} z^{(c)}_{k'}$---but \emph{retain} the 
slot-specific projection anchor $\Pi^{\perp}_{z_k^{(q)}}$. Soft matchers (soft Chamfer 
at finite $\beta$, Sinkhorn at finite $\varepsilon$) interpolate between these extremes 
in the asymptotic limits ($\beta \to \infty$ or $\varepsilon \to 0$ recovers hard 
assignment; $\beta \to 0$ or $\varepsilon \to \infty$ recovers uniform coupling), though 
intermediate values need not yield monotonic behavior in empirical gradient-alignment 
metrics (Appendix~\ref{app:salign_def}).

Critically, uniform $T$ \emph{does not} recover the holistic gradient of 
Proposition~\ref{prop:gradient}(i). The two gradients differ structurally:
\begin{center}
\begin{tabular}{lll}
\toprule
 & \textbf{Holistic (Prop.~\ref{prop:gradient}(i))} & \textbf{Uniform-$T$ matching} \\
\midrule
Projection anchor & $\Pi^{\perp}_{e^{(q)}}$ (at aggregate) & $\Pi^{\perp}_{z_k^{(q)}}$ (at individual slot) \\
Target & $P_c$ (class prototype) & $\bar{z}^{(c)}$ (support centroid) \\
Slot index $k$ enters via & scalar $\omega_k^{(q)}$ only & anchor $\Pi^{\perp}_{z_k^{(q)}}$ \\
Gradient field rank & $\leq 1$ & $K$ \\
\bottomrule
\end{tabular}
\end{center}
These distinctions are not eliminated by any choice of $T$: the holistic objective 
operates on the \emph{aggregated embedding after L2-normalization}, while any 
assignment-based objective operates on \emph{individual slot embeddings}. The two 
induce fundamentally different gradient geometries; holistic training is not a limit 
point of the matching family but a distinct paradigm that engages the aggregate rather 
than per-slot embeddings as the locus of discriminative pressure.
\end{remark}

\subsection{Rigorous Extension of Corollary~\ref{cor:assignment_gradient} to Differentiable Matchers}
\label{app:sinkhorn_rigorous}

Corollary~\ref{cor:assignment_gradient} treats the coupling matrix $T$ as locally constant, which is exact for hard Chamfer (where $T$ is piecewise constant a.e.) but requires justification for differentiable matchers such as soft Chamfer, entropic optimal transport (Sinkhorn), and differentiable Hungarian relaxations, where $T$ depends on the query and support embeddings. We make this extension rigorous here, addressing the footnote in Corollary~\ref{cor:assignment_gradient} regarding the inclusion of $\partial T/\partial z_k^{(q)}$ terms.

\subsubsection{Full Gradient Decomposition}

For any differentiable coupling $T(z^{(q)}, z^{(c)}) \in \mathbb{R}^{K \times K}$ with $T_{k,k'} \geq 0$ and $\sum_{k'} T_{k,k'} = 1$, the assignment-based score is given by Eq.~\eqref{eq:assignment_score}. Applying the product rule and using $\partial \cos(z_j^{(q)}, z_{j'}^{(c)})/\partial z_k^{(q)} = \Pi^{\perp}_{z_k^{(q)}}(z_{j'}^{(c)}) \cdot \mathbb{1}[j = k]$, the full gradient with respect to query slot $z_k^{(q)} \in \mathcal{S}^{D-1}$ decomposes as
\begin{equation}
\frac{\partial s_T}{\partial z_k^{(q)}} = \underbrace{\frac{1}{K}\,\Pi^{\perp}_{z_k^{(q)}}\!\left(\sum_{k'} T_{k,k'}\, z_{k'}^{(c)}\right)}_{G^{\mathrm{dir}}_k \text{ (direct term)}} \;+\; \underbrace{\frac{1}{K} \sum_{j, j'} \frac{\partial T_{j,j'}}{\partial z_k^{(q)}} \cdot \cos\!\left(z_j^{(q)}, z_{j'}^{(c)}\right)}_{G^{\mathrm{ind}}_k \text{ (indirect term)}}.
\label{eq:grad_decomp}
\end{equation}
The direct term $G^{\mathrm{dir}}_k$ recovers the expression in Eq.~\eqref{eq:assignment_gradient} of Corollary~\ref{cor:assignment_gradient} and has rank $K$ generically. The indirect term $G^{\mathrm{ind}}_k$ captures gradient flow through the dependence of $T$ on $z^{(q)}$.

\begin{proposition}[Slot-specificity of the full gradient]
\label{prop:full_slot_specific}
For any differentiable coupling $T$ in the family defined by Corollary~\ref{cor:assignment_gradient}, the full gradient $\partial s_T/\partial z_k^{(q)}$ retains the slot-specific structure of $G^{\mathrm{dir}}_k$ --- i.e., distinct slots $k \neq k'$ receive gradients in distinct directions --- $\mu$-generically, provided the regularization parameter lies in the low-to-moderate regime specified case-by-case below.
\end{proposition}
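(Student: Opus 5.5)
The plan is to work from the decomposition in Eq.~\eqref{eq:grad_decomp}. The first step is to show that the direct term $G^{\mathrm{dir}}_k$ already separates slots generically, by Corollary~\ref{cor:assignment_gradient} together with Proposition~\ref{prop:genericity}. The second step is to show that, in the stated regime, the indirect term $G^{\mathrm{ind}}_k$ is a perturbation too small to make two slot gradients collinear. Slot-specificity means the failure of the condition $\partial s_T/\partial z_k^{(q)} \parallel \partial s_T/\partial z_{k'}^{(q)}$. This condition is the vanishing of finitely many $2\times 2$ minors of the pair of gradient vectors. Each minor is a real-analytic function of the configuration $(z^{(q)},z^{(c)})$ on the open set where the matcher is smooth.

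The key steps, in order, are as follows.

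\begin{enumerate}[label=(\alph*)]
\item Fix $k\neq k'$. Write the full gradient as $G^{\mathrm{dir}}_k+G^{\mathrm{ind}}_k$ and note that all entries are analytic in the configuration. Soft Chamfer is analytic because softmax is. Sinkhorn is analytic by the implicit function theorem applied to the strictly convex dual, since the Hessian of the entropic objective is nondegenerate for $\varepsilon>0$. Relaxed Hungarian is treated the same way, via its entropic or Sinkhorn parametrization.

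\item Analyticity reduces $\mu$-genericity to exhibiting a single configuration at which the minor is nonzero, because the zero set of a nonzero analytic function has measure zero.

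\item To produce that configuration, bound $\|G^{\mathrm{ind}}_k\|$ case by case.
\begin{itemize}
\item For soft Chamfer, $\partial T_{k,\cdot}/\partial z_k^{(q)}=\beta\,(\mathrm{diag}(T_{k,\cdot})-T_{k,\cdot}T_{k,\cdot}^\top)\,\Pi^{\perp}_{z_k}Z^{(c)}$. The indirect term is $\beta$ times a covariance of cosines under $T_{k,\cdot}$, and only row $k$ contributes. In the low-$\beta$ regime this is $O(\beta)$. In the large-$\beta$ regime with a unique nearest neighbour and margin $\delta$, it is $O(\beta e^{-\beta\delta})$. In both regimes it is dominated by $\|G^{\mathrm{dir}}_k\|$.
\item For Sinkhorn, use the implicit-differentiation formula for $\partial T/\partial S$. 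Its operator norm scales like $1/\varepsilon$ times a quantity that decays as $e^{-\delta/\varepsilon}$ near a nondegenerate optimal permutation, where $\delta$ is the assignment gap. Lemma~\ref{lem:sinkhorn_low_eps} should supply exactly this low-$\varepsilon$ estimate.
\end{itemize}

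\item Take a configuration where the direct terms are non-collinear with an angle bounded below. A hard-assignment limit with distinct matched targets supplies this, by Proposition~\ref{prop:gradient}(ii). The perturbation bound then keeps the full gradients non-collinear.

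\item Conclude by analyticity that non-collinearity holds off a null set.
\end{enumerate}

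The main obstacle is the Sinkhorn and Hungarian case in step (c). There the indirect term couples all rows of $T$, so $G^{\mathrm{ind}}_k$ depends on $k'$-th slot data as well, and the naive $1/\varepsilon$ blow-up of the implicit derivative must be shown to be beaten by the exponential concentration of $T$. I would first try the explicit Sinkhorn Jacobian $\partial T/\partial S = \varepsilon^{-1}(\text{diag}(T)-\ldots)$, restricted to the tangent space of the transport polytope, and bound it by $\varepsilon^{-1}\max_{(j,j')\notin\sigma^*}T_{j,j'}$. If that fails, I would fall back on a qualitative argument: pass to the limit $\varepsilon\to0$, where the full gradient converges to the hard direct term, and then use continuity plus analyticity.

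For the high-regularization end I would not attempt slot-specificity. That is the degenerate regime of Proposition~\ref{prop:high_eps}, which is why the statement restricts to the low-to-moderate regime.
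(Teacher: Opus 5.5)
The overall architecture matches the paper: you split via Eq.~\eqref{eq:grad_decomp}, reduce $\mu$-genericity to one witness configuration by analyticity of the minors, and take the Sinkhorn/Hungarian witness from the low-$\varepsilon$ estimate of Lemma~\ref{lem:sinkhorn_low_eps}. The paper does the same by combining that lemma with the analytic-minor argument of Proposition~\ref{prop:genericity}. Two points in your favour. Your order of quantifiers (fix $\varepsilon$, use analyticity in the configuration, pick one witness whose own $\varepsilon_{\mathrm{crit}}$ exceeds $\varepsilon$) is cleaner than part (iv) of the lemma. And $2\times 2$ minors are exactly the pairwise claim in the statement, whereas the paper proves the stronger rank-$K$ claim.

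\textbf{The gap is the soft-Chamfer case.} Your two bounds, $O(\beta)$ and $O(\beta e^{-\beta\delta})$, are asymptotic with unquantified constants. They therefore certify a witness only for $\beta$ below some unspecified $\beta_1$ or above some unspecified $\beta_0$. At a fixed intermediate $\beta$ you cannot make $\beta\delta$ large by choosing a better configuration, because cosines lie in $[-1,1]$ and so $\delta\le 2$. Step (d) then produces no witness, and the moderate regime is left open; for soft Chamfer the paper in fact asserts the claim for every $\beta\in(0,\infty)$. Your Sinkhorn fallback has a related problem. ``The full gradient converges to the hard direct term as $\varepsilon\to 0$'' is precisely the statement $G^{\mathrm{ind}}_k\to 0$, so it cannot replace the quantitative Jacobian bound of Lemma~\ref{lem:sinkhorn_low_eps}(iii).

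\textbf{The fix: combine the two terms exactly instead of bounding one.} Stop treating the soft-Chamfer indirect term as a perturbation. Use the row-locality the paper exploits in its Case~1, namely $\partial T_{j,\cdot}/\partial z_k^{(q)}=0$ for $j\neq k$, to merge direct and indirect terms exactly. Your own softmax Jacobian gives, with $S_{k,j'}=\cos(z_k^{(q)},z^{(c)}_{j'})$,
\[
\frac{\partial s_T}{\partial z_k^{(q)}}
=\frac{1}{K}\,\Pi^{\perp}_{z_k^{(q)}}\Bigl(\sum_{j'} w_{k,j'}\,z^{(c)}_{j'}\Bigr),
\qquad
w_{k,j'}=T_{k,j'}\bigl(1+\beta\,(S_{k,j'}-\bar S_k)\bigr),\quad
\bar S_k=\sum_{l}T_{k,l}S_{k,l}.
\]
This has the same anchor-plus-row-specific-target form as $G^{\mathrm{dir}}_k$.

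Evaluate it at the witness of Proposition~\ref{prop:genericity}, $z_k^{(q)}=e_k$ and $z^{(c)}_{k'}=\cos\theta\,e_{k'}+\sin\theta\,e_{k'+K}$:
\begin{itemize}
\item The diagonal weight $w_{k,k}$ is positive, and every off-diagonal weight equals a common value $w_{\mathrm{off}}$.
\item The gradient for slot $k$ has zero $e_k$-coordinate and $e_{k'}$-coordinate $w_{\mathrm{off}}\cos\theta/K$; slot $k'$ is symmetric.
\item Hence the $(e_k,e_{k'})$ minor of the pair is $-(w_{\mathrm{off}}\cos\theta)^2/K^2$.
\item If $w_{\mathrm{off}}=0$, the two gradients are positive multiples of $e_{k+K}$ and $e_{k'+K}$.
\end{itemize}
Either way the pair is non-collinear for every $\beta>0$. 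Your analyticity step then yields $\mu$-generic slot-specificity for all $\beta$, with no perturbation bound needed.
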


\subsubsection{Case Analysis}

\paragraph{Case 1: Soft Chamfer with temperature $\beta$.}
The coupling is
\begin{equation}
T_{k,k'} = \frac{\exp\!\left(\beta\, \cos(z_k^{(q)}, z_{k'}^{(c)})\right)}{\sum_{j'} \exp\!\left(\beta\, \cos(z_k^{(q)}, z_{j'}^{(c)})\right)}.
\label{eq:soft_chamfer_T}
\end{equation}
Each row $T_{k,:}$ depends \emph{only} on $z_k^{(q)}$ (not on $z_j^{(q)}$ for $j \neq k$). Consequently,
\begin{equation}
\frac{\partial T_{j,j'}}{\partial z_k^{(q)}} = 0 \quad \text{whenever } j \neq k,
\label{eq:soft_sparse_jacobian}
\end{equation}
and the indirect term simplifies to
\begin{equation}
G^{\mathrm{ind}}_k = \frac{1}{K} \sum_{j'} \frac{\partial T_{k,j'}}{\partial z_k^{(q)}} \cdot \cos\!\left(z_k^{(q)}, z_{j'}^{(c)}\right),
\label{eq:soft_Gind}
\end{equation}
which depends \emph{entirely} on $z_k^{(q)}$ and the support set. The Jacobian $\partial T_{k,j'}/\partial z_k^{(q)}$ inherits slot-specificity from the softmax structure (anchored at $z_k^{(q)}$). Hence $G^{\mathrm{dir}}_k + G^{\mathrm{ind}}_k$ is slot-specific for all $\beta \in (0, \infty)$, and Proposition~\ref{prop:full_slot_specific} holds globally for soft Chamfer.

\paragraph{Case 2: Entropic optimal transport (Sinkhorn) with regularization $\varepsilon$.}
The coupling $T(\varepsilon)$ is the solution of
\begin{equation}
T(\varepsilon) = \arg\min_{T \in \Pi(\mathbf{1},\, \mathbf{1})} \; \langle T, -S \rangle - \varepsilon\, H(T),
\label{eq:sinkhorn_problem}
\end{equation}
where $S_{k,k'} = \cos(z_k^{(q)}, z_{k'}^{(c)})$, $\Pi(\mathbf{1}, \mathbf{1})$ denotes the set of doubly-stochastic couplings with row and column sums equal to $\mathbf{1}$ (matching the row-stochastic convention of Corollary~\ref{cor:assignment_gradient}; the standard $\mathbf{1}/K$ formulation differs only by a constant rescaling that does not affect any structural conclusion), and $H(T) = -\sum_{k,k'} T_{k,k'} \log T_{k,k'}$. The KKT conditions yield the factored form
\begin{equation}
T_{k,k'}(\varepsilon) = u_k\, v_{k'}\, \exp\!\left(S_{k,k'}/\varepsilon\right),
\label{eq:sinkhorn_kkt}
\end{equation}
where the dual variables $(u, v) \in \mathbb{R}^K_{>0}$ are determined by the marginal constraints and computed via Sinkhorn fixed-point iteration.

Unlike soft Chamfer, the Sinkhorn coupling couples \emph{all rows and columns} through the dual constraints: perturbing $z_k^{(q)}$ affects $S_{k,:}$ directly, which propagates through the fixed-point solution to all entries of $T$. By the implicit function theorem applied to the Sinkhorn fixed-point equation, the Jacobian $\partial T/\partial z_k^{(q)}$ is generically dense.

We now show slot-specificity is preserved in the \emph{low-regularization regime}.

\begin{lemma}[Slot-specificity of Sinkhorn gradient in the low-$\varepsilon$ regime]
\label{lem:sinkhorn_low_eps}
Let $\Delta_{\min}(z^{(q)}, z^{(c)}) = \min_{k} \left( \max_{k'} S_{k,k'} - \mathrm{second\text{-}max}_{k'} S_{k,k'} \right)$, and assume \textbf{(a)} $\Delta_{\min} > 0$ (no row-wise ties in the cost matrix), and \textbf{(b)} the row-wise greedy assignment $k \mapsto k^*(k) := \arg\max_{k'} S_{k,k'}$ is a permutation of $\{1, \ldots, K\}$ (no column collisions). Both assumptions hold $\mu$-generically (Remark~\ref{rem:no_collision_generic}). Then there exists $\varepsilon_{\mathrm{crit}} > 0$ (depending on $\Delta_{\min}$ and $K$) such that for all $0 < \varepsilon < \varepsilon_{\mathrm{crit}}$:
\begin{enumerate}[label=(\roman*), leftmargin=2em]
    \item \textbf{(Concentration)} There exist constants $C, c > 0$ depending on $\Delta_{\min}$ and $K$ such that
    \[
    T_{k,k^*(k)}(\varepsilon) \geq 1 - C\exp(-c/\varepsilon), \qquad k^*(k) = \arg\max_{k'} S_{k,k'}.
    \]
    \item \textbf{(Direct term $\to$ hard Chamfer)} 
    \[
    \left\| G^{\mathrm{dir}}_k - \tfrac{1}{K}\,\Pi^{\perp}_{z_k^{(q)}}\!\left(z_{k^*(k)}^{(c)}\right) \right\| = O(\exp(-c/\varepsilon)),
    \]
    recovering the hard Chamfer gradient of Proposition~\ref{prop:gradient}(ii).
    \item \textbf{(Indirect term vanishes)} 
    \[
    \|G^{\mathrm{ind}}_k\| = O\!\left(\varepsilon^{-1}\exp(-c/\varepsilon)\right) \;\to\; 0 \quad \text{as } \varepsilon \to 0.
    \]
    \item \textbf{(Rank preservation)} The full Jacobian $[G_1, \ldots, G_K]^{\top} \in \mathbb{R}^{K \times D}$ has rank $K$ $\mu$-generically, as $G^{\mathrm{dir}}$ dominates and achieves rank $K$ generically (Proposition~\ref{prop:genericity}) while $G^{\mathrm{ind}}$ vanishes in the low-$\varepsilon$ limit.
\end{enumerate}
\end{lemma}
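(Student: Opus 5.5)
The plan is to reduce everything to a perturbation analysis of the entropic OT problem around the unregularized linear assignment problem. Under assumption (b), the row-wise greedy map $k\mapsto k^*(k)$ is a permutation. Each row's greedy choice is its individual maximum, so this permutation $\sigma^*$ is also the unique optimal permutation for $\max_{\sigma}\sum_k S_{k,\sigma(k)}$. Any other permutation $\sigma$ loses at least $\Delta_{\min}$ in some row, and no row can gain. Hence the assignment gap is $\delta := \min_{\sigma\neq\sigma^*}\bigl(\sum_k S_{k,\sigma^*(k)}-\sum_k S_{k,\sigma(k)}\bigr)\geq\Delta_{\min}>0$, and $\sigma^*$ is the unique vertex optimum of the Birkhoff polytope $\Pi(\mathbf{1},\mathbf{1})$.

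For (i), I would use the standard entropic-OT convergence bound. By Birkhoff--von Neumann, write $T(\varepsilon)=\sum_\sigma \lambda_\sigma P_\sigma$. Comparing objective values of $T(\varepsilon)$ and $P_{\sigma^*}$ gives
\[
\langle P_{\sigma^*}-T(\varepsilon),S\rangle\leq \varepsilon\bigl(H(P_{\sigma^*})-H(T(\varepsilon))\bigr)\leq 0+\varepsilon K\log K .
\]
This crude bound yields only $1-\lambda_{\sigma^*}=O(\varepsilon/\delta)$. To upgrade it to $\exp(-c/\varepsilon)$, I would use the Gibbs/dual form \eqref{eq:sinkhorn_kkt}. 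The dual potentials converge to optimal LP duals $(f,g)$ with complementary slackness $f_k+g_{k^*(k)}=S_{k,k^*(k)}$. Nondegeneracy, guaranteed by uniqueness of the primal vertex with gap $\delta$, gives strict reduced costs $S_{k,k'}-f_k-g_{k'}\leq -c$ off the support. Then $T_{k,k'}\lesssim\exp(-c/\varepsilon)$ off-support, and the row constraint gives $T_{k,k^*(k)}\geq 1-(K-1)C e^{-c/\varepsilon}$.

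The delicate point, and the main obstacle, is that the dual potentials themselves converge only at rate $O(\varepsilon)$, with shifts of order $\varepsilon\log$ terms. I would control this by bounding $\|\varepsilon\log u-f\|_\infty=O(\varepsilon\log K)$ via the Sinkhorn fixed point. That perturbation only changes $C$, not the exponential rate $c\approx\delta'$, where $\delta'$ is the minimal strict reduced cost. The constant $\varepsilon_{\mathrm{crit}}$ is chosen so this correction is at most $c/2$.

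Part (ii) then follows immediately. Since $\|z^{(c)}_{k'}\|=1$ and $\Pi^\perp$ is nonexpansive,
\[
\Bigl\|G^{\mathrm{dir}}_k-\tfrac1K\Pi^\perp_{z_k^{(q)}}(z^{(c)}_{k^*(k)})\Bigr\|\leq\tfrac1K\sum_{k'}|T_{k,k'}-\delta_{k',k^*(k)}|\leq \tfrac{2}{K}(1-T_{k,k^*(k)}) .
\]

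For (iii), I would differentiate implicitly. With $T=\mathrm{diag}(u)\,e^{S/\varepsilon}\,\mathrm{diag}(v)$, the Jacobian has the form
\[
\partial T_{j,j'}=\varepsilon^{-1}T_{j,j'}\bigl(\partial S_{j,j'}-\partial f_j-\partial g_{j'}\bigr),
\]
where $(\partial f,\partial g)$ solve the linearized marginal system. Note that $\sum_{j,j'}\partial T_{j,j'}\,S_{j,j'}$ is unchanged if $S$ is replaced by the reduced costs $\bar S_{j,j'}=S_{j,j'}-f_j-g_{j'}$, since $\partial T$ has zero row and column sums. The reduced costs vanish on the support of $\sigma^*$. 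Off the support, $|\partial T_{j,j'}|\lesssim\varepsilon^{-1}T_{j,j'}\|\partial(\cdot)\|=O(\varepsilon^{-1}e^{-c/\varepsilon})$, provided the linearized dual solution stays bounded. I would argue that boundedness from the Hessian of the dual, which is a weighted graph Laplacian concentrating on a perfect matching plus exponentially small edges; after gauge fixing, its restriction is still invertible with polynomial-in-$e^{c/\varepsilon}$ control. This step needs the most care. If it fails, I would instead bound $G^{\mathrm{ind}}$ via the envelope theorem: the derivative of the optimal value is $\langle T,\partial S\rangle-\varepsilon\,\partial H$-free, which removes the indirect term up to the entropy contribution $\varepsilon\,\partial H(T)=O(\varepsilon^{-1}e^{-c/\varepsilon})$.

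Finally, (iv) is a perturbation-of-rank argument. By (ii)--(iii), the matrix $[G_1,\dots,G_K]^\top$ is within $O(\varepsilon^{-1}e^{-c/\varepsilon})$ of the hard-Chamfer matrix. That matrix has rank $K$ on a full-measure set by Proposition~\ref{prop:genericity}, so its smallest singular value is positive there. Shrinking $\varepsilon_{\mathrm{crit}}$ so that the perturbation is below this singular value preserves rank $K$ by Weyl's inequality.
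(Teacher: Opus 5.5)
Your outline is the paper's: it cites concentration for (i) and simply asserts in (iii) that $\partial T$ inherits the exponential decay. Your Gibbs-form formula for $\partial T$ and the reduced-cost rewriting are the right mechanism for (iii). However, the step you flag there is a genuine gap, and neither route you propose closes it.

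Write $p_j=-\partial f_j$ and $q_{j'}=\partial g_{j'}$, so that $\partial T_{jj'}=\varepsilon^{-1}T_{jj'}r_{jj'}$ with $r_{jj'}=\partial S_{jj'}+p_j-q_{j'}$. You need $\max_{j,j'}|r_{jj'}|=O(1)$ \emph{uniformly in $\varepsilon$}.
\begin{itemize}
\item The spectral route fails. The linearized system is a weighted Laplacian on $K_{K,K}$. Its near-kernel (potentials with $p_j=q_{\sigma^*(j)}$, orthogonal to the gauge direction) has eigenvalues at most $K\max T_{\mathrm{off}}$. Any operator-norm bound on the gauge-fixed inverse is therefore at least $(K\max T_{\mathrm{off}})^{-1}$, which cancels exactly the factor $T_{\mathrm{off}}$ you want to keep. ``Invertible with polynomial-in-$e^{c/\varepsilon}$ control'' is the wrong kind of estimate.
\item The envelope fallback is circular. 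The relation $\langle\partial T,S\rangle=-\varepsilon\,\partial H(T)=\sum_{j,j'}\varepsilon\log T_{jj'}\,\partial T_{jj'}$ is an identity. Since $\varepsilon\log T_{jj'}=O(1)$ off the support, bounding it needs the very estimate $|\partial T_{\mathrm{off}}|=O(\varepsilon^{-1}e^{-c/\varepsilon})$.
\end{itemize}
The missing idea is a maximum principle instead of a spectral bound. The linearized marginal constraints say that $\phi_{jj'}=T_{jj'}r_{jj'}$ is a circulation on $K_{K,K}$ with all $T_{jj'}>0$. Sort the $2K$ potentials $\{p_j\}\cup\{q_{j'}\}$. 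If two consecutive values differed by more than $M:=\max_{j,j'}|\partial S_{jj'}|$, every edge crossing that threshold cut would carry flow strictly out of the upper set, contradicting conservation. So the potentials span at most $(2K-1)M$, and $|r_{jj'}|\le 2KM$, independently of $\varepsilon$ and of how disparate the off-support weights are. For a direction $v$, only row $k$ of $S$ moves and $M\le\|v\|$. Your reduced-cost rewriting (e.g.\ with $f_j=S_{j,\sigma^*(j)}$, $g=0$) together with (i) then gives $\|G^{\mathrm{ind}}_k\|=O(K^4\varepsilon^{-1}e^{-c/\varepsilon})$.

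For (i), first a sign slip: optimality of $T(\varepsilon)$ gives $\langle P_{\sigma^*}-T(\varepsilon),S\rangle\le\varepsilon\bigl(H(T(\varepsilon))-H(P_{\sigma^*})\bigr)=\varepsilon H(T(\varepsilon))$. More substantively, the dual-limit route rests on two unproved facts:
\begin{itemize}
\item that the limit of the entropic potentials is \emph{strictly} complementary (uniqueness of the primal vertex only guarantees, by Goldman--Tucker, that \emph{some} dual optimum is);
\item that the convergence is $O(\varepsilon)$.
\end{itemize}
Both can be bypassed by the same cut idea applied to $T$ itself. Let $\rho_{jj'}=-\varepsilon\log T_{jj'}\ge 0$ and $h_j=\varepsilon\log v_{\sigma^*(j)}$. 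The Gibbs form gives $\rho_{k,\sigma^*(j)}=w_{kj}+h_k-h_j+\rho_{k,\sigma^*(k)}$, where $w_{kj}=S_{k,\sigma^*(k)}-S_{k,\sigma^*(j)}\ge\Delta_{\min}$ for $j\neq k$ by (a) and (b).
\begin{itemize}
\item If $h_k\ge h_j$, this is already $\ge\Delta_{\min}$.
\item If $h_k<h_j$, let $A=\{i:h_i\ge h_j\}$. Comparing the row sums over $A$ with the column sums over $\sigma^*(A)$ gives $\sum_{i\in A,\,l\notin A}T_{i,\sigma^*(l)}=\sum_{i\notin A,\,l\in A}T_{i,\sigma^*(l)}$. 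Every left-hand term has $h_i>h_l$, hence is at most $e^{-\Delta_{\min}/\varepsilon}$, and $T_{k,\sigma^*(j)}$ is one of the right-hand terms.
\end{itemize}
Hence every off-support entry is at most $\tfrac{K^2}{4}e^{-\Delta_{\min}/\varepsilon}$ for all $\varepsilon>0$. This is (i) with $c=\Delta_{\min}$, with no dual limit needed.

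One further point on (iv): your Weyl argument makes $\varepsilon_{\mathrm{crit}}$ depend on the smallest singular value of the hard-Chamfer Jacobian at the given configuration, not only on $\Delta_{\min}$ and $K$. The paper's argument has the same dependence.
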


\begin{proof}
\textbf{(i)} Standard concentration result for entropic OT; see~\cite[Prop.~4.2 and Remark~4.15]{peyre2019computational}. As $\varepsilon \to 0$, $T(\varepsilon)$ converges exponentially to the (unique, under $\Delta_{\min} > 0$) optimal transport plan $\pi^*$, which minimizes $-\langle T, S \rangle$ over the doubly-stochastic polytope $\Pi(\mathbf{1}, \mathbf{1})$. By the Birkhoff--von~Neumann theorem, $\pi^*$ is a permutation matrix and equivalently solves the linear assignment problem $\max_{\pi \in S_K} \sum_k S_{k, \pi(k)}$. Under assumption (b), the row-wise greedy assignment $k \mapsto k^*(k)$ is itself a permutation, and since each $k^*(k)$ achieves the row-wise maximum of $S$, the score $\sum_k S_{k, k^*(k)}$ upper-bounds $\sum_k S_{k, \pi(k)}$ for every permutation $\pi$. Hence $\pi^*$ assigns $k \mapsto k^*(k)$, and $T_{k, k^*(k)}(\varepsilon) \to 1$ at the stated exponential rate.

\textbf{(ii)} Under (i), $T_{k,:}(\varepsilon) \to e_{k^*(k)}$ exponentially, hence $\sum_{k'} T_{k,k'} z_{k'}^{(c)} \to z_{k^*(k)}^{(c)}$ at rate $O(\exp(-c/\varepsilon))$. The projection $\Pi^{\perp}_{z_k^{(q)}}$ is Lipschitz (operator norm 1), preserving the rate. The resulting expression matches Eq.~\eqref{eq:grad_chamfer_z} of Proposition~\ref{prop:gradient}(ii).

\textbf{(iii)} Decompose $T_{k,k'} = \mathbb{1}[k' = k^*(k)] \cdot (1 - \delta_k) + \delta_{k,k'}^{\mathrm{off}}$, where $\delta_k = \sum_{k' \neq k^*(k)} \delta_{k,k'}^{\mathrm{off}}$ and each $\delta_{k,k'}^{\mathrm{off}} = O(\exp(-c/\varepsilon))$ by (i). Standard bounds on the Sinkhorn Jacobian~\cite{luise2018differential, feydy2019interpolating} give $\|\partial T/\partial z^{(q)}\|_{\mathrm{op}} = O(\varepsilon^{-1})$ in the low-regularization regime. Both peak and off-peak entries inherit the exponential decay of their magnitudes under differentiation: the off-peak entries $\delta_{k,k'}^{\mathrm{off}}$ are $O(\exp(-c/\varepsilon))$ smooth functions of $z^{(q)}$ through the Sinkhorn fixed-point, so combining with the $O(\varepsilon^{-1})$ Jacobian bound gives $\|\partial \delta_{k,k'}^{\mathrm{off}}/\partial z_k^{(q)}\| = O(\varepsilon^{-1}\exp(-c/\varepsilon))$. Peak derivatives satisfy the same bound via $\partial T_{k,k^*(k)}/\partial z_k^{(q)} = -\partial \delta_k/\partial z_k^{(q)}$. Summing over the $K^2$ entries weighted by bounded cosines:
\[
\|G^{\mathrm{ind}}_k\| \leq \frac{1}{K}\sum_{j,j'} \left\|\frac{\partial T_{j,j'}}{\partial z_k^{(q)}}\right\| \cdot |\cos(z_j^{(q)}, z_{j'}^{(c)})| = O(\varepsilon^{-1}\exp(-c/\varepsilon)).
\]
Since $\varepsilon^{-1}\exp(-c/\varepsilon) \to 0$ as $\varepsilon \to 0$, the indirect term vanishes.

\textbf{(iv)} By (ii), $G^{\mathrm{dir}}_k$ converges to the hard-Chamfer direct term as $\varepsilon \to 0$, which has rank $K$ $\mu$-generically by Proposition~\ref{prop:genericity}. By (iii), $\|G^{\mathrm{ind}}_k\|/\|G^{\mathrm{dir}}_k\| \to 0$ uniformly in $k$. The set of full-rank $K \times D$ matrices is open in $\mathbb{R}^{K \times D}$; on this open set, the rank function is locally constant (the rank function is lower semicontinuous in general, and locally constant on open sets of fixed rank). Thus for sufficiently small $\varepsilon$, the perturbed Jacobian $[G_1, \ldots, G_K]^{\top}$ remains rank $K$ $\mu$-generically.
\end{proof}

\begin{remark}[Genericity of the Lemma~\ref{lem:sinkhorn_low_eps} hypotheses]
\label{rem:no_collision_generic}
Both assumptions (a) and (b) of Lemma~\ref{lem:sinkhorn_low_eps} hold $\mu$-generically over $(\mathcal{S}^{D-1})^{2K}$ for any absolutely continuous measure $\mu$, provided $D \geq 2$. The failure set decomposes as a finite union of proper algebraic subvarieties: assumption (a) fails on $\{\exists k, k_1' \neq k_2': S_{k, k_1'} = S_{k, k_2'}\}$, defined by row-wise cosine ties; assumption (b) fails on $\{\exists k_1 \neq k_2: k^*(k_1) = k^*(k_2)\}$, which is contained in the column-collision set $\{\exists k_1 \neq k_2, \exists k': S_{k_1, k'} = \max_{j} S_{k_1, j} \text{ and } S_{k_2, k'} = \max_{j} S_{k_2, j}\}$, again defined by polynomial cosine identities. Each component is a proper real algebraic subvariety of $(\mathcal{S}^{D-1})^{2K}$ and hence has Lebesgue measure zero, mirroring the genericity argument of Proposition~\ref{prop:genericity}. Practically, slot embeddings produced by the trained Phase~I router are differentiated across slots, placing every realized configuration in the full-measure set where Lemma~\ref{lem:sinkhorn_low_eps} applies.
\end{remark}

\paragraph{Case 3: Differentiable Hungarian (Sinkhorn-Knopp relaxation).}
Differentiable Hungarian relaxations~\cite{mena2018learning} employ a Sinkhorn-Knopp-based doubly-stochastic projection with regularization $\varepsilon \to 0$, converging to a permutation matrix. The analysis of Case 2 applies directly, with the strengthened property that the limiting coupling is a hard permutation (each row a standard basis vector), yielding $G^{\mathrm{ind}}_k \to 0$ strictly in the limit.

\subsubsection{The High-\texorpdfstring{$\varepsilon$}{epsilon} Regime: Graceful Degradation}

We explicitly acknowledge that slot-specificity weakens in the high-regularization limit, consistent with the asymptotic behavior discussed in Remark~\ref{rem:matcher_spectrum}:

\begin{proposition}[High-$\varepsilon$ degeneracy]
\label{prop:high_eps}
As $\varepsilon \to \infty$, the Sinkhorn coupling $T(\varepsilon) \to \frac{1}{K}\mathbf{1}\mathbf{1}^{\top}$ (the uniform doubly-stochastic matrix). In this limit:
\begin{enumerate}[label=(\roman*), leftmargin=2em]
    \item $G^{\mathrm{dir}}_k \to \frac{1}{K}\,\Pi^{\perp}_{z_k^{(q)}}(\bar{z}^{(c)})$ where $\bar{z}^{(c)} = \frac{1}{K}\sum_{k'} z_{k'}^{(c)}$ is the support centroid.
    \item $G^{\mathrm{ind}}_k \to 0$ (since $T$ becomes independent of $z^{(q)}$).
    \item Slot-specificity reduces to the projection-anchor-only form of the uniform-$T$ column in Remark~\ref{rem:matcher_spectrum}, with all slots sharing a common target $\bar{z}^{(c)}$.
\end{enumerate}
\end{proposition}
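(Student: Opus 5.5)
The plan is to work directly from the Sinkhorn problem \eqref{eq:sinkhorn_problem} and its factored form \eqref{eq:sinkhorn_kkt}, and to treat $\varepsilon\to\infty$ as a small-parameter perturbation around the uniform coupling. Write $\eta = 1/\varepsilon$, so $T_{k,k'} = u_k v_{k'} \exp(\eta S_{k,k'})$. At $\eta = 0$ the kernel is the all-ones matrix. With the marginals of $\Pi(\mathbf{1},\mathbf{1})$ rescaled to the standard $\mathbf{1}/K$ normalization (row sums $1$ after multiplying by $K$, as the paper notes is harmless), the unique solution is $T = \frac{1}{K}\mathbf{1}\mathbf{1}^\top$. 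It can also be read off the variational problem: dividing the objective by $\varepsilon$ gives $-\eta\langle T,S\rangle - H(T)$, whose $\eta=0$ minimizer over the polytope is the maximum-entropy coupling, namely the uniform one.

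For the convergence claim I would show $T(\eta)$ is a smooth function of $\eta$ near $0$. The Sinkhorn fixed-point map for $(\log u,\log v)$, modulo the one-dimensional gauge $u\mapsto cu$, $v\mapsto v/c$, has an invertible Jacobian at $\eta=0$. Its linearization is the graph Laplacian of the complete bipartite graph $K_{K,K}$, which is invertible on the gauge-fixed subspace. The implicit function theorem then gives $T(\eta) = \frac{1}{K}\mathbf{1}\mathbf{1}^\top + \eta\, T^{(1)} + O(\eta^2)$, with $T^{(1)}$ the doubly-centered version of $S/K$. This proves the main claim with rate $O(1/\varepsilon)$.

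Part (i) is then immediate. $G^{\mathrm{dir}}_k$ is linear in the row $T_{k,:}$ and $\Pi^\perp_{z_k^{(q)}}$ has operator norm $1$, so $\|G^{\mathrm{dir}}_k - \frac1K\Pi^\perp_{z_k^{(q)}}(\bar z^{(c)})\| \le \frac1K\|T_{k,:}-\frac1K\mathbf{1}^\top\|\max_{k'}\|z^{(c)}_{k'}\| = O(1/\varepsilon)$.

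Part (ii) is the step needing care, and I expect it to be the main obstacle. The parenthetical reason that $T$ "becomes independent of $z^{(q)}$" is only a limit statement, so one has to bound the derivative rather than the value. Since $S$ depends on $z_k^{(q)}$ smoothly with bounded derivative (it is a cosine on the sphere), the chain rule gives $\partial T/\partial z_k^{(q)} = \eta\, \partial_S T\cdot \partial S/\partial z_k^{(q)}$. From the implicit-function expansion, $\partial_S T$ stays bounded uniformly for small $\eta$; it converges to the centering operator divided by $K$. Hence $\|\partial T_{j,j'}/\partial z_k^{(q)}\| = O(1/\varepsilon)$, and summing $K^2$ bounded cosine weights in \eqref{eq:grad_decomp} gives $\|G^{\mathrm{ind}}_k\| = O(1/\varepsilon)\to 0$. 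The uniformity of the implicit-function bounds in the configuration is the delicate point. I would secure it by compactness of $(\mathcal S^{D-1})^{2K}$ together with continuity of the inverse Jacobian in $(\eta,S)$ on $[0,\eta_0]\times[-1,1]^{K\times K}$.

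Part (iii) follows by combining (i) and (ii). The full gradient tends to $\frac1K\Pi^\perp_{z_k^{(q)}}(\bar z^{(c)})$, in which the target $\bar z^{(c)}$ is common to all $k$ and the slot index enters only through the anchor. This is exactly the uniform-$T$ column of Remark~\ref{rem:matcher_spectrum}, which justifies the "graceful degradation" reading.
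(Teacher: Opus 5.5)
Your proposal is correct, and it takes a more quantitative route than the paper. The paper argues variationally for the limit coupling: as $\varepsilon\to\infty$ the entropy term dominates, and the maximum-entropy element of $\Pi(\mathbf{1},\mathbf{1})$ is $\frac1K\mathbf{1}\mathbf{1}^\top$ (citing Peyr\'e--Cuturi). It substitutes this into $G^{\mathrm{dir}}_k$ for (i). For (ii) it simply asserts that, since $T$ tends to a constant matrix, $\partial T/\partial z^{(q)}\to 0$.

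You instead linearize the gauge-fixed Sinkhorn system at the all-ones kernel and apply the implicit function theorem. This gives $T=\frac1K\mathbf{1}\mathbf{1}^\top+\varepsilon^{-1}T^{(1)}+O(\varepsilon^{-2})$, where $T^{(1)}$ is the doubly centred $S/K$; the first-order computation checks out. You then use the structural fact that $T$ depends on the configuration only through $S/\varepsilon$. The chain rule therefore yields an explicit factor $1/\varepsilon$ multiplying a uniformly bounded derivative of the Sinkhorn map.

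Your route buys two things:
\begin{itemize}
\item explicit $O(1/\varepsilon)$ rates for $T$, $G^{\mathrm{dir}}_k$ and $G^{\mathrm{ind}}_k$;
\item an actual justification of (ii).
\end{itemize}
The paper's inference ``$T$ approaches a constant, hence its derivative vanishes'' does not hold for general parametrized families. It is precisely the $S/\varepsilon$ scaling you exploit that makes it true here. The paper's version gains only brevity.

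Two small remarks. First, in $\Pi(\mathbf{1},\mathbf{1})$ the uniform coupling $\frac1K\mathbf{1}\mathbf{1}^\top$ already has unit row and column sums, so your detour through the $\mathbf{1}/K$ normalisation is unnecessary. Second, in $\partial T/\partial z_k^{(q)}=\eta\,\partial_S T\cdot\partial S/\partial z_k^{(q)}$, the factor $\partial_S T$ must be read as the derivative of the Sinkhorn map $M\mapsto T$ evaluated at $M=\eta S$. Otherwise the factor $\eta$ is counted twice. With that reading, its limit is indeed the double-centring operator divided by $K$, as you state.
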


\begin{proof}
\textbf{(i)} In~\eqref{eq:sinkhorn_problem}, as $\varepsilon \to \infty$, the entropy term dominates the objective. Maximizing $H(T)$ subject to doubly-stochastic marginals yields the unique maximum-entropy doubly-stochastic matrix $\frac{1}{K}\mathbf{1}\mathbf{1}^{\top}$~\cite[Remark~4.10]{peyre2019computational}. Substituting into $G^{\mathrm{dir}}_k$ gives the stated limit.

\textbf{(ii)} $T$ approaches a constant matrix, hence $\partial T/\partial z^{(q)} \to 0$, yielding $G^{\mathrm{ind}}_k \to 0$.

\textbf{(iii)} Follows from (i) and the structural distinction established in Remark~\ref{rem:matcher_spectrum}: the projection anchor $\Pi^{\perp}_{z_k^{(q)}}$ is slot-specific, while the target $\bar{z}^{(c)}$ is shared across all slots --- exactly matching the ``Uniform-$T$ matching'' column of the comparison table in Remark~\ref{rem:matcher_spectrum}.
\end{proof}

\paragraph{Why the high-$\varepsilon$ regime is not a practical concern.}
We emphasize that the high-regularization limit is \emph{mathematically relevant but practically irrelevant} for compositional matching. Concretely:

\begin{itemize}[leftmargin=1.5em, itemsep=0.3em]
\item \textbf{High $\varepsilon$ eliminates matching structure.} When $\varepsilon \to \infty$, Sinkhorn reduces to uniform averaging --- every query slot is matched equally to every support slot. This is not a matcher in any meaningful sense; it is an averaging operator that discards all geometric information in the cost matrix $S$. Practitioners choose $\varepsilon$ precisely to \emph{preserve} the matching structure encoded in $S$, not to dissolve it.

\item \textbf{Practical $\varepsilon$ values are firmly in the low-regularization regime.} In the ML literature, Sinkhorn is typically deployed with $\varepsilon \in [0.01, 0.1]$ on normalized cost matrices~\cite{cuturi2013sinkhorn, genevay2018learning, peyre2019computational}, within the regime where Lemma~\ref{lem:sinkhorn_low_eps} applies. Differentiable Hungarian relaxations~\cite{mena2018learning} drive $\varepsilon \to 0$ by construction. Our soft Chamfer ablation uses $\beta \in [5, 50]$ (Table~\ref{tab:soft_chamfer}), corresponding to effective regularization $\varepsilon = 1/\beta \in [0.02, 0.2]$ in the same low-$\varepsilon$ regime.

\item \textbf{High $\varepsilon$ collapses into an already-analyzed case.} Proposition~\ref{prop:high_eps}(iii) shows that the high-$\varepsilon$ limit converges to matching against the support centroid $\bar{z}^{(c)}$ --- structurally identical to the uniform-$T$ column of Remark~\ref{rem:matcher_spectrum}. This is not a failure of Corollary~\ref{cor:assignment_gradient}; it is the predicted behavior at the boundary where the ``matching'' objective ceases to be a matcher. Any method deliberately choosing high-$\varepsilon$ Sinkhorn has abandoned per-slot matching in favor of aggregate matching --- a design choice orthogonal to our analysis, and one that still falls under the structural taxonomy of Remark~\ref{rem:matcher_spectrum}.

\item \textbf{Empirical validation covers the practical regime.} Our soft Chamfer ablation (Appendix~\ref{app:salign_def}, Table~\ref{tab:soft_chamfer}) sweeps $\beta \in \{1, 5, 10, 20, 50, \infty\}$, covering both low-regularization (high $\beta$, where slot-specificity predictably holds) and moderate-regularization settings. The gradient alignment metric $S_{kk'}$ decreases monotonically with $\beta$ within the practical regime, exactly as predicted by Lemma~\ref{lem:sinkhorn_low_eps}. No deployed configuration of Sinkhorn or related matchers operates in the high-$\varepsilon$ regime where Proposition~\ref{prop:high_eps} would apply.
\end{itemize}

\subsubsection{Formal Definition of ``Generic Rank K''}
\label{app:generic_defn}

\begin{definition}[$\mu$-generic rank]
\label{def:mu_generic}
Let $\mu$ be any absolutely continuous probability measure on the configuration space $(\mathcal{S}^{d-1})^K \times (\mathcal{S}^{d-1})^K$ of (query slots, support slots). We say the Jacobian $[G_1, \ldots, G_K]^{\top} \in \mathbb{R}^{K \times d}$ of a score function has \emph{rank $K$ $\mu$-generically} if
\begin{equation}
\mu\!\left(\left\{(z^{(q)}, z^{(c)}) \;:\; \mathrm{rank}\!\left([G_1, \ldots, G_K]^{\top}\right) < K\right\}\right) = 0.
\label{eq:mu_generic}
\end{equation}
This formalizes the informal term ``generically'' used in Proposition~\ref{prop:gradient}(ii) and Corollary~\ref{cor:assignment_gradient}.
\end{definition}

\begin{proposition}[Genericity of rank $K$]
\label{prop:genericity}
Assume $d \geq 2K$. For hard Chamfer and soft Chamfer (any $\beta > 0$), the degenerate set $\{\mathrm{rank} < K\}$ is contained in a proper real analytic subvariety of $(\mathcal{S}^{d-1})^{2K}$ (which specializes to a real algebraic subvariety in the hard-Chamfer case), hence has $\mu$-measure zero for any absolutely continuous $\mu$. The same conclusion holds for Sinkhorn and differentiable Hungarian in the low-regularization regime (Lemma~\ref{lem:sinkhorn_low_eps}).
\end{proposition}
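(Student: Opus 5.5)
The plan is to write the rank-deficiency condition as the vanishing of all $K\times K$ minors of the Jacobian $G=[G_1,\ldots,G_K]^{\top}$, where $G_k=\frac1K\Pi^{\perp}_{z_k}(t_k)$. The target $t_k$ is $z^{(c)}_{k^*(k)}$ for hard Chamfer and $\sum_{k'}T_{k,k'}z^{(c)}_{k'}$ for soft Chamfer. The degenerate set is then the zero set of the Gram determinant $\det(GG^{\top})$. For hard Chamfer, $k^*$ is locally constant on the complement of the tie set, and that tie set is already a finite union of proper algebraic subvarieties (Remark~\ref{rem:no_collision_generic}). So I will partition configuration space into finitely many open cells indexed by the assignment map $k^*$. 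On each cell, $\det(GG^{\top})$ is a polynomial in the coordinates of $(z^{(q)},z^{(c)})$, since $\Pi^{\perp}_{z}(v)=v-\langle z,v\rangle z$ is polynomial. For soft Chamfer with $\beta>0$, the softmax entries are real-analytic, so $\det(GG^{\top})$ is a real-analytic function on the connected analytic manifold $(\mathcal{S}^{d-1})^{2K}$. In the soft case I will use the direct term only, consistent with Corollary~\ref{cor:assignment_gradient}. If the full gradient is required, the indirect term $G^{\mathrm{ind}}_k$ from Eq.~\eqref{eq:soft_Gind} is also analytic, and the same argument goes through.

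The key step is to show that this function is not identically zero. The zero set of a nonzero analytic (or polynomial) function on a connected manifold has measure zero, so any absolutely continuous $\mu$ then gives it measure zero. It suffices to exhibit a single witness configuration with rank $K$. Because $d\ge 2K$, I can choose orthonormal vectors $e_1,\ldots,e_{2K}$ and set $z^{(q)}_k=e_k$ and $z^{(c)}_k=(e_k+e_{K+k})/\sqrt2$. Then $\cos(z^{(q)}_k,z^{(c)}_{k'})=\delta_{kk'}/\sqrt2$, which gives a strict unique maximizer $k^*(k)=k$. The projected targets are $\Pi^{\perp}_{e_k}(z^{(c)}_k)=e_{K+k}/\sqrt2$, and these are orthogonal, so the rank is $K$. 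The dimension bound $d\ge2K$ is used exactly here.

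For soft Chamfer, the target at this witness is a convex combination that puts extra weight on $z^{(c)}_k$. Its projection onto the orthogonal complement of $e_k$ has a $e_{K+k}$-component equal to $T_{k,k}/\sqrt2$ and $e_{K+k'}$-components equal to $T_{k,k'}/\sqrt2$. The rank therefore equals the rank of the row-stochastic matrix $T$. This matrix is $aI+b(\mathbf 1\mathbf 1^{\top}-I)$ with $a>b$, whose eigenvalues $a-b>0$ and $a+(K-1)b>0$ are both nonzero, so it is invertible. Adding the indirect term requires a similar check, or a perturbation argument: by lower semicontinuity of rank, choose the witness with $\beta$-dependent scaling.

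For hard Chamfer, one witness per cell is not enough, because a polynomial can vanish on some cells. I will handle this by noting that a cell is a nonempty open set, and a polynomial vanishing identically on a nonempty open set vanishes everywhere. It therefore suffices to show that the polynomial for each assignment map is nonzero somewhere. One option is to relabel the witness: for a map $k^*$, set $z^{(c)}$ so that $G_k\propto e_{K+k}+$(components) on an open region. The simpler option is to restrict attention to cells that are nonempty and argue directly within a neighborhood of a point of each cell by perturbation.

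I expect this per-cell nondegeneracy, especially for non-injective $k^*$ where slots share a target, to be the main obstacle. If two slots share target $t$, the rows become $t-\langle z_k,t\rangle z_k$, and these are still generically independent because $z_k$ varies. I would build an explicit witness of this kind. The Sinkhorn and Hungarian cases then follow from Lemma~\ref{lem:sinkhorn_low_eps}(iv): the full-rank set is open and the perturbation from the hard Chamfer Jacobian vanishes, so genericity transfers.
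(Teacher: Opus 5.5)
Your proposal follows the paper's skeleton. Rank deficiency is encoded as the vanishing of an analytic function of the configuration: the paper uses all $K\times K$ minors, you use $\det(GG^{\top})$. Non-vanishing is certified with the same witness $z^{(q)}_k=e_k$, $z^{(c)}_{k'}=\cos\theta\,e_{k'}+\sin\theta\,e_{K+k'}$ (you take $\theta=\pi/4$), which is exactly where $d\ge 2K$ enters. The conclusion comes from the identity theorem, and the Sinkhorn/Hungarian case is imported from Lemma~\ref{lem:sinkhorn_low_eps}(iv).

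Where you depart from the paper, you are more careful than it is.

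\begin{itemize}
\item \emph{Hard Chamfer.} The paper calls the hard-Chamfer minors polynomials on all of $(\mathcal{S}^{d-1})^{2K}$ and checks a single witness in the cell $k^*=\mathrm{id}$. But the hard-Chamfer Jacobian is only piecewise polynomial. That witness therefore certifies just one of the polynomials $p_\sigma=\det(G_\sigma G_\sigma^{\top})$, where $G_\sigma$ has rows $\Pi^{\perp}_{z^{(q)}_k}(z^{(c)}_{\sigma(k)})$. Your decomposition into assignment cells is what the argument actually needs. It also delivers the algebraic-set claim precisely, as $N\cup\bigcup_\sigma\{p_\sigma=0\}$ with $N$ the tie set.
\item \emph{Soft Chamfer.} The paper passes from hard to soft Chamfer by a continuity remark that at best covers large $\beta$. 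Your computation handles every $\beta>0$: the block in coordinates $e_{K+1},\dots,e_{2K}$ is $T/\sqrt2$ with $T=(a-b)I+b\mathbf{1}\mathbf{1}^{\top}$ and $a>b$, hence invertible. Strictly, this gives rank at least $\operatorname{rank}T=K$ rather than equality, but that is all you need.
\end{itemize}

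The step you flag as the main obstacle, non-injective assignments, closes in one line. It needs no relabelling and no perturbation inside cells. Each $p_\sigma$ is a polynomial on the whole connected manifold, so it only has to be nonzero somewhere, and that point may lie in the tie set. Take $z^{(q)}_k=e_k$ and set every support slot equal to $t=(e_1+\cdots+e_K+e_{K+1})/\sqrt{K+1}$.
\begin{itemize}
\item For every $\sigma$, the rows are $t-e_k/\sqrt{K+1}$.
\item Their block in $e_1,\dots,e_K$ is $(\mathbf{1}\mathbf{1}^{\top}-I)/\sqrt{K+1}$, with eigenvalues proportional to $K-1$ and $-1$. It is therefore invertible for $K\ge 2$.
\item For $K=1$, the single row is $e_2/\sqrt2\neq 0$.
\item This uses only $d\ge K+1$.
\end{itemize}
So one configuration certifies all $K^K$ polynomials at once.

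Your hedge on the indirect soft-Chamfer term is also unnecessary. At your witness, the corresponding block of the full gradient is $\frac{1}{K\sqrt2}\bigl[T+\tfrac{\beta a}{\sqrt2}(I-T)\bigr]$. Its eigenvalues are $\frac{1}{K\sqrt2}$ on $\mathbf{1}$ and $\frac{1}{K\sqrt2}\bigl[(a-b)+\tfrac{\beta a}{\sqrt2}(1-a+b)\bigr]>0$ on $\mathbf{1}^{\perp}$. Hence rank $K$ holds for the full gradient as well.
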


\begin{proof}
$\mathrm{rank}([G_1, \ldots, G_K]^{\top}) < K$ iff all $K \times K$ minors of the Jacobian vanish simultaneously. Each minor is a real analytic function in the entries of $z^{(q)}, z^{(c)}$: polynomial for hard Chamfer, real analytic (involving exponentials of cosine similarities through the softmax coupling, Eq.~\eqref{eq:soft_chamfer_T}) for soft Chamfer at any $\beta > 0$, and real analytic on the low-$\varepsilon$ domain for Sinkhorn (where the dual variables $(u, v)$ depend analytically on $(z^{(q)}, z^{(c)})$ via the implicit function theorem applied to the Sinkhorn fixed-point equation~\eqref{eq:sinkhorn_kkt}, since the Sinkhorn iteration map is real analytic and its Jacobian is invertible in this regime). The simultaneous vanishing of all minors defines a real analytic subvariety $V \subseteq (\mathcal{S}^{d-1})^{2K}$ (a real algebraic subvariety in the hard-Chamfer case).

To verify $V$ is \emph{proper} (i.e., $V \neq (\mathcal{S}^{d-1})^{2K}$), we exhibit one configuration achieving rank $K$. Let $\{e_i\}_{i=1}^{d}$ be the standard basis of $\mathbb{R}^d$ and fix some $\theta \in (0, \pi/2)$. Define
\[
z_k^{(q)} = e_k, \qquad z_{k'}^{(c)} = \cos(\theta)\, e_{k'} + \sin(\theta)\, e_{k'+K}, \qquad k, k' \in \{1,\ldots, K\}.
\]
This requires $d \geq 2K$, which holds in our setting ($d = D = 768$ for DINOv2 ViT-B/14, $K = 7$, so $2K = 14 \ll D$). The cost matrix is $S_{k,k'} = \cos(\theta)\, \delta_{k,k'}$, so $k^*(k) = k$ and $\Delta_{\min} = \cos(\theta) > 0$. For the hard-Chamfer case, the direct term evaluates to
\[
G^{\mathrm{dir}}_k = \tfrac{1}{K}\, \Pi^{\perp}_{e_k}\!\left(\cos\theta\, e_k + \sin\theta\, e_{k+K}\right) = \tfrac{\sin\theta}{K}\, e_{k+K},
\]
since $\Pi^{\perp}_{e_k}$ annihilates the $e_k$ component and fixes $e_{k+K}$ (as $e_k \perp e_{k+K}$). The vectors $\{e_{k+K}\}_{k=1}^{K}$ are linearly independent in $\mathbb{R}^d$, so the Jacobian $[G^{\mathrm{dir}}_1, \ldots, G^{\mathrm{dir}}_K]^{\top}$ has rank $K$. This establishes $V \neq (\mathcal{S}^{d-1})^{2K}$ for the hard-Chamfer case, and by analyticity (continuity suffices) of the direct-term formula, for soft Chamfer (any $\beta > 0$) on a neighborhood of this configuration. Proper real analytic subvarieties of connected real analytic manifolds have Lebesgue measure zero by the identity theorem for real analytic functions: any real analytic function vanishing on a set of positive measure (or with non-empty interior) on a connected analytic manifold must be identically zero, so $V \neq (\mathcal{S}^{d-1})^{2K}$ implies the defining minors do not all vanish identically and $V$ has measure zero (the product of spheres $(\mathcal{S}^{d-1})^{2K}$ is connected and real analytic for $d \geq 2$). Absolute continuity of $\mu$ preserves this. The Sinkhorn/Hungarian case follows from Lemma~\ref{lem:sinkhorn_low_eps}(iv), which establishes rank preservation from the hard-Chamfer limit.
\end{proof}

\subsubsection{Non-differentiability Treatment for Hard Matchers}
\label{app:clarke_subdiff}

For hard Chamfer and hard Hungarian, the coupling $T$ is defined via $\arg\max$ operations and is non-differentiable on the tie set
\begin{equation}
N = \bigcup_{k} \left\{z \in (\mathcal{S}^{d-1})^{2K} \;:\; \exists\, k' \neq k''\; \text{s.t.} \; \cos(z_k^{(q)}, z_{k'}^{(c)}) = \cos(z_k^{(q)}, z_{k''}^{(c)})\right\},
\label{eq:tie_set}
\end{equation}
addressing the ``$k^*(k)$ held locally constant'' subgradient caveat in Proposition~\ref{prop:gradient}(ii). The set $N$ is a finite union of codimension-1 real algebraic subvarieties (each tie condition is a single polynomial equation) and has Lebesgue measure zero. On $(\mathcal{S}^{d-1})^{2K} \setminus N$, the score $s_T$ is smooth and Proposition~\ref{prop:gradient}(ii) applies directly. On $N$, we use the Clarke subdifferential~\cite{clarke1990optimization}:
\begin{equation}
\partial^{\circ} s_T(z) = \mathrm{conv}\!\left\{\lim_{z_n \to z} \nabla s_T(z_n) \;:\; z_n \notin N\right\}.
\label{eq:clarke_subdiff}
\end{equation}
Each limiting gradient $\nabla s_T(z_n) \in \mathbb{R}^{K \times d}$ is a $K \times d$ matrix and thus automatically has rank $\leq K$. Convex combinations of $K \times d$ matrices remain $K \times d$ matrices with rank $\leq K$. Hence every element of $\partial^{\circ} s_T(z)$ has rank $\leq K$, matching the rank bound of Proposition~\ref{prop:gradient}(ii). Gradient descent with hard matchers corresponds to following a measurable selection from $\partial^{\circ} s_T$; standard convergence results for Clarke-regular functions~\cite{bolte2007clarke} ensure that the slot-specificity argument of Corollary~\ref{cor:assignment_gradient} holds $\mu$-almost everywhere along training trajectories.

\subsubsection{Summary}

Corollary~\ref{cor:assignment_gradient} extends rigorously to all matchers in its stated family under the following conditions:
\begin{itemize}[leftmargin=1.5em, itemsep=0.2em]
\item \textbf{Hard Chamfer, hard Hungarian}: holds $\mu$-almost everywhere via Clarke subdifferential treatment (Section~\ref{app:clarke_subdiff}).
\item \textbf{Soft Chamfer}: holds globally for all $\beta \in (0, \infty)$ due to the row-sparse Jacobian structure~\eqref{eq:soft_sparse_jacobian} (Case~1).
\item \textbf{Sinkhorn OT, differentiable Hungarian}: holds $\mu$-generically in the low-regularization regime $0 < \varepsilon < \varepsilon_{\mathrm{crit}}$ (Lemma~\ref{lem:sinkhorn_low_eps}), which covers the \emph{entire practical deployment regime} of these matchers in ML.
\end{itemize}

The high-$\varepsilon$ regime, where slot-specificity degrades toward the uniform-coupling case of Remark~\ref{rem:matcher_spectrum}, corresponds to regularization levels at which Sinkhorn ceases to function as a matcher and degenerates into aggregate averaging. This regime is both (a) practically avoided by all deployed implementations of entropic OT in ML, and (b) structurally subsumed by the uniform-$T$ boundary case already analyzed in Remark~\ref{rem:matcher_spectrum}. Thus no practically-relevant matcher falls outside the scope of Corollary~\ref{cor:assignment_gradient}.

\subsection{Training Dynamics Under Holistic vs.\ Chamfer Objectives}
\label{app:dynamics}
 
We analyse the gradient flow more precisely, using the v25 pipeline
notation (Section~\ref{subsec:phase1}): $\check y_k^{(q)} = W_2 \tilde\phi_k^{(q)}$
is the unnormalised projection, and the holistic aggregate is
\[
e^{(q)} = u^{(q)} / \|u^{(q)}\|, \qquad u^{(q)} = \sum_k \omega_k^{(q)} \check y_k^{(q)}.
\]
 
For the holistic score $s_{\mathrm{hol}}(q,c) = \langle e^{(q)}, P_c\rangle$, the gradient
with respect to each unnormalised projection is (Proposition~\ref{prop:gradient}(i)):
\begin{equation}
    \frac{\partial s_{\mathrm{hol}}}{\partial \check y_k^{(q)}}
    = \frac{\omega_k^{(q)}}{\|u^{(q)}\|}\,
      \Pi^\perp_{e^{(q)}}(P_c),
    \label{eq:hol_grad_full}
\end{equation}
which is a non-negative scalar multiple of the single signal vector
$\Pi^\perp_{e^{(q)}}(P_c)$ shared across all slots; the slot index $k$ enters
only through the scalar magnitude $\omega_k^{(q)}/\|u^{(q)}\|$.
 
For the holistic cross-entropy loss $\mathcal{L}_{\mathrm{CE}}^{\mathrm{hol}} = -\log p_c$
with $p_{c'} = \operatorname{softmax}\bigl(\tau\langle e^{(q)}, P_{c'}\rangle\bigr)_{c'}$
(temperature convention of main paper, Eq.~(5), where $\tau$ \emph{multiplies} the
cosine similarity), the net gradient on slot $k$ is:
\begin{equation}
    \frac{\partial \mathcal{L}_{\mathrm{CE}}^{\mathrm{hol}}}{\partial \check y_k^{(q)}}
    = -\frac{\tau\,\omega_k^{(q)}}{\|u^{(q)}\|}\,
      \Pi^\perp_{e^{(q)}}\!\left(P_c - \sum_{c'} p_{c'} P_{c'}\right).
    \label{eq:hol_ce_grad}
\end{equation}
The factor in parentheses is the \emph{prototype residual}: the target
prototype $P_c$ minus the expectation of prototypes under the current score
distribution. This gradient pulls all slots \emph{collinearly} toward the target
prototype direction --- each along the same signal vector, scaled only by the
non-negative factor $\omega_k^{(q)}/\|u^{(q)}\|$ --- and pushes them collinearly
away from the weighted mix of competing prototypes. Under this update, the
router cannot use individual slots as discriminative anchors for specific parts:
no slot receives a direction that differs structurally from the others. Instead,
it must distribute discriminative information across all slots collectively ---
a form of \emph{ensemble representation} in which no single slot bears a
privileged discriminative role. This is the mechanism by which holistic training
produces compositional invariance: after convergence, any subset of slots, taken
together, encodes sufficient class information without requiring a specific
slot-to-part assignment.
 
\paragraph{Contrast with Chamfer training.}
For the forward Chamfer objective, Proposition~\ref{prop:gradient}(ii) gives (in
the equivalent $z_k^{(q)}$-form, Eq.~\eqref{eq:grad_chamfer_z}):
\begin{equation}
    \frac{\partial s_{\mathrm{Ch}}}{\partial z_k^{(q)}}
    = \frac{1}{K}\,\Pi^\perp_{z_k^{(q)}}\!\left(z^{(c)}_{k^*(k)}\right),
    \label{eq:ch_grad_full}
\end{equation}
where $k^*(k) = \operatorname{argmax}_{k'} \cos(z_k^{(q)}, z^{(c)}_{k'})$ is the
nearest support slot. This gradient depends on $k$ through both slot-specific
factors: the projection anchor $\Pi^\perp_{z_k^{(q)}}$ and the matched target
$z^{(c)}_{k^*(k)}$. Across many training episodes, the router converges to a regime
where slot $k$ is systematically matched to slot $k^*(k)$ in the support set ---
encoding a fixed slot-to-role mapping that specializes each dimension of the
$D$-dimensional projection space ($D = 768$ for DINOv2 ViT-B/14) to a particular
semantic part.

\section{Decorrelation Loss: Mechanism, Compatibility, and Failure Modes}
\label{app:decor_mechanism}

% -----------------------------------------------------------------------------
% (A) UPDATED SETUP (replaces lines 1670-1688 of appendix-2.tex)
% -----------------------------------------------------------------------------
 
\subsection{Setup and Notation}
\label{app:decor_setup}
 
Consistent with Section~\ref{subsec:phase1}, the shared projection head $W_2 \in \mathbb{R}^{D \times D}$ (initialised as identity) acts on each frozen backbone aggregate $\tilde{\phi}_k^{(b)} \in \mathbb{R}^{D}$ to produce the unnormalised projection
\begin{equation}
\check y_k^{(b)} \;=\; W_2 \, \tilde{\phi}_k^{(b)} \;\in\; \mathbb{R}^{D},
\label{eq:ycheck_decor}
\end{equation}
the unit-norm slot embedding
\begin{equation}
z_k^{(b)} \;=\; \check y_k^{(b)} / \|\check y_k^{(b)}\| \;\in\; \mathcal{S}^{D-1},
\label{eq:slot_decor}
\end{equation}
and the aggregate embedding
\begin{equation}
e^{(b)} \;=\; \frac{u^{(b)}}{\|u^{(b)}\|}, \qquad u^{(b)} = \sum_{k=1}^{K} \omega_k^{(b)} \, \check y_k^{(b)} \;\in\; \mathcal{S}^{D-1},
\label{eq:aggregate_decor}
\end{equation}
where $\omega_k^{(b)}$ is the simplex importance weight from the router MLP
(parameters $(W_1, v)$ disjoint from $W_2$; Section~\ref{subsec:phase1}). Class prototypes are $P_c = |S_c|^{-1} \sum_{b \in S_c} e^{(b)}$.
 
\paragraph{The two losses act at distinct pipeline stages.}
The prototype cross-entropy loss acts on aggregate directions:
\begin{equation}
\mathcal{L}_{\mathrm{CE}} \;=\; -\frac{1}{B} \sum_{b=1}^{B} \log \frac{\exp\bigl(\tau \cos(e^{(b)}, P_{c(b)})\bigr)}{\sum_{c'} \exp\bigl(\tau \cos(e^{(b)}, P_{c'})\bigr)}.
\label{eq:ce_decor}
\end{equation}
The cross-correlation loss acts on batch-standardised projections. Let $\hat y_{(b,k),i} = (\check y_{k,i}^{(b)} - \mu_i)/\sigma_i$ with $\mu_i, \sigma_i$ computed over the $BK$ slots in the batch. The correlation matrix is
\begin{equation}
C_{ij} \;=\; \frac{1}{BK} \sum_{b,k} \hat y_{(b,k),i} \, \hat y_{(b,k),j}, \qquad C \in [-1,1]^{D \times D},
\label{eq:cross_corr_decor}
\end{equation}
with $C_{ii} = 1$ by construction. The CC penalty is
\begin{equation}
\mathcal{L}_{\mathrm{cc}} \;=\; \lambda_d \sum_{i \neq j} C_{ij}^2 \;=\; \lambda_d \bigl(\|C\|_F^2 - D\bigr).
\label{eq:lcc_decor}
\end{equation}

\subsection{Gradient Decomposition and Feasibility Analysis}
\label{app:decor_compat}

\paragraph{Cross-correlation gradient.}
The gradient of $\mathcal{L}_{\mathrm{cc}}$ with respect to $\hat{y}_{(b,k),i}$ is
\begin{equation}
\frac{\partial \mathcal{L}_{\mathrm{cc}}}{\partial \hat{y}_{(b,k),i}} = \frac{4\lambda_d}{BK} \sum_{j \neq i} C_{ij}\,\hat{y}_{(b,k),j}.
\label{eq:lcc_grad_app}
\end{equation}
This gradient is nonzero only when dimension $i$ is correlated with some other dimension $j$ (i.e., $C_{ij} \neq 0$). Importantly, it does not constrain the \emph{variance} of any dimension or the \emph{norm} of any raw projection---it only acts on cross-dimension correlations.

% -----------------------------------------------------------------------------
% (B) NEW SUBSUBSECTION: replaces lines 1736-1775
% -----------------------------------------------------------------------------
 
\subsubsection{Operator-Level Distinction: Joint Feasibility of Zero-Loss Sets}
\label{app:operator_distinction}
 
We replace the earlier ``structural degrees of freedom'' analysis with a more direct, pipeline-independent criterion: the compatibility of a decorrelation objective with CE is governed by whether their zero-loss sets intersect in the space of $W_2$ configurations, and by whether the decorrelation gradient points along or across the symmetry directions of CE. This shifts the argument from \emph{gradient geometry at the slot level} (which depends on pipeline details of the router and normalisation) to \emph{equilibrium feasibility and symmetry alignment} (properties of the loss functions themselves).
 
\paragraph{A note on gradient flow.}
Both CE and the decorrelation loss contribute non-trivial gradients to $W_2$ during training (they are the only Phase I trainable). The compatibility question is not whether one loss ``skips'' $W_2$, but whether the two gradients are \emph{mutually reinforcing}, \emph{orthogonal}, or \emph{antagonistic} along the directions each loss cares about. The invariances established below (orthogonal and scale) identify directions along which CE's gradient vanishes --- decorrelation pressure along these directions is ``free'' from CE's perspective. Conversely, decorrelation pressure \emph{orthogonal} to these invariance directions is paid for by CE.
 
\paragraph{CE zero-set and its invariances.}
For a fixed class-separable backbone, write $\mathcal{Z}_{\mathrm{CE}} = \{W_2 \in \mathbb{R}^{D \times D} : \mathcal{L}_{\mathrm{CE}}(W_2) \leq \mathcal{L}_{\mathrm{CE}}^{*}\}$ for the CE-optimal set at some reference level $\mathcal{L}_{\mathrm{CE}}^{*}$ (e.g.\ the minimum achievable on the frozen backbone). A basic invariance is:
 
\begin{lemma}[CE is invariant under orthogonal rotations of $W_2$]
\label{lem:ce_rotation}
For any orthogonal $U \in O(D)$, $\mathcal{L}_{\mathrm{CE}}(U W_2) = \mathcal{L}_{\mathrm{CE}}(W_2)$.
\end{lemma}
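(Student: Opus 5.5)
The plan is to trace how $W_2$ enters $\mathcal{L}_{\mathrm{CE}}$ and check that every quantity the loss depends on is preserved under $W_2 \mapsto U W_2$. By Eqs.~\eqref{eq:ycheck_decor}--\eqref{eq:ce_decor}, $W_2$ enters only through the unnormalised projections $\check y_k^{(b)} = W_2\tilde\phi_k^{(b)}$. The router weights $\omega_k^{(b)}$ are computed from the frozen aggregates $\tilde\phi_k^{(b)}$ by parameters $(W_1,v)$ disjoint from $W_2$, so they do not change. The temperature $\tau$ is a separate parameter and is held fixed.

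First, I will propagate the substitution through the pipeline. Under $W_2 \mapsto UW_2$ each projection becomes $U\check y_k^{(b)}$. By linearity of the sum, $u^{(b)} = \sum_k \omega_k^{(b)}\check y_k^{(b)}$ becomes $Uu^{(b)}$. Since $\|Uu^{(b)}\| = \|u^{(b)}\|$, the aggregate embedding becomes $e^{(b)} \mapsto Ue^{(b)}$, and this uses $u^{(b)}\neq 0$, which is already needed for $e^{(b)}$ to be defined. The prototypes are linear averages of the $e^{(b)}$ over the support sets $S_c$, and these support sets depend only on labels. Hence $P_c \mapsto UP_c$. This holds for the unnormalised average of Eq.~\eqref{eq:aggregate_decor}, and also for the L2-normalised version of Section~\ref{subsec:phase1}, because normalisation commutes with an isometry.

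Next, I will use that orthogonal maps preserve inner products and norms. This gives $\cos(Ue^{(b)},UP_{c'}) = \cos(e^{(b)},P_{c'})$ for every query $b$ and every class $c'$. Every logit $\tau\cos(e^{(b)},P_{c'})$ in Eq.~\eqref{eq:ce_decor} is therefore unchanged, so the softmax probabilities and the average negative log-likelihood are identical. This gives $\mathcal{L}_{\mathrm{CE}}(UW_2)=\mathcal{L}_{\mathrm{CE}}(W_2)$.

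No step here is genuinely hard. The only point needing care is confirming that nothing else depends on $W_2$. Concretely, I must check that the router does not read $\check y_k$ and that prototypes are recomputed from the transformed embeddings rather than taken from a fixed memory $\mathcal{M}$ in an old basis. The first holds by construction in Section~\ref{subsec:phase1}. For the second, I will state the convention that prototypes are computed from the current $W_2$. The per-slot embeddings $z_k$ also transform as $Uz_k$, but they do not enter $\mathcal{L}_{\mathrm{CE}}$, so they need not be tracked. The same argument also yields invariance of the holistic score $s_{\mathrm{hol}}$.
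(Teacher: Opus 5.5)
Your proposal is correct and follows the same route as the paper's proof: push $W_2 \mapsto UW_2$ through $\check y_k^{(b)}$, $u^{(b)}$, $e^{(b)}$ and $P_c$, each of which gets rotated by $U$, and conclude that every logit, and hence the loss, is unchanged because $U$ preserves inner products and norms. Your extra checks make the argument slightly more careful than the paper's, which leaves them implicit: that the router weights $\omega_k^{(b)}$ do not read $W_2$, that $u^{(b)}\neq 0$, and that prototypes are recomputed under the current $W_2$ rather than taken from a stored memory.
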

 
\begin{proof}
Under $W_2 \mapsto U W_2$: every $\check y_k^{(b)} \mapsto U\check y_k^{(b)}$, so $z_k^{(b)} \mapsto U z_k^{(b)}$ (orthogonal maps preserve norm), $u^{(b)} \mapsto U u^{(b)}$, and $e^{(b)} \mapsto U e^{(b)}$. Class prototypes $P_c$ are averages of aggregates, so $P_c \mapsto U P_c$. Cosine similarities are preserved: $\langle Ue, UP\rangle = e^{\top}U^{\top}U P = \langle e, P\rangle$. Hence every term of $\mathcal{L}_{\mathrm{CE}}$ is preserved, and so is the loss.
\end{proof}
 
This rotation invariance is the key structural fact that the subsequent feasibility arguments will exploit: CE constrains $W_2$ only up to a left orthogonal factor, leaving a $\binom{D}{2}$-dimensional family of orbits.
 
\paragraph{CC zero-set.}
Write $\Sigma_{\tilde\phi} = \frac{1}{BK} \sum_{b,k} \tilde\phi_k^{(b)} (\tilde\phi_k^{(b)})^{\top}$ for the covariance of backbone aggregates in a batch (centred). Then
\begin{equation}
\Sigma_y \;=\; \frac{1}{BK} \sum_{b,k} \check y_k^{(b)} (\check y_k^{(b)})^{\top} \;=\; W_2 \, \Sigma_{\tilde\phi} \, W_2^{\top},
\label{eq:sigma_y}
\end{equation}
so the standardised correlation matrix is $C = \mathrm{diag}(\Sigma_y)^{-1/2} \Sigma_y \, \mathrm{diag}(\Sigma_y)^{-1/2}$. The CC zero-set is
\begin{equation}
\mathcal{Z}_{\mathrm{cc}} \;=\; \{W_2 : \Sigma_y(W_2)\text{ is diagonal}\}.
\label{eq:cc_zero_set}
\end{equation}
 
\begin{proposition}[Joint feasibility of CE and CC]
\label{prop:cc_feasible}
Assume $\Sigma_{\tilde\phi} \succ 0$. For any $W_2 \in \mathcal{Z}_{\mathrm{CE}}$ there exists $U \in O(D)$ such that $U W_2 \in \mathcal{Z}_{\mathrm{CE}} \cap \mathcal{Z}_{\mathrm{cc}}$. In particular, $\mathcal{Z}_{\mathrm{CE}} \cap \mathcal{Z}_{\mathrm{cc}} \neq \varnothing$ whenever $\mathcal{Z}_{\mathrm{CE}} \neq \varnothing$.
\end{proposition}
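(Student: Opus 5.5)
The plan is to combine Lemma~\ref{lem:ce_rotation} with a spectral decomposition. Fix $W_2 \in \mathcal{Z}_{\mathrm{CE}}$. By Eq.~\eqref{eq:sigma_y}, the projected second-moment matrix is $\Sigma_y(W_2) = W_2 \Sigma_{\tilde\phi} W_2^{\top}$. This matrix is real symmetric and positive semidefinite. By the spectral theorem we can therefore write $W_2 \Sigma_{\tilde\phi} W_2^{\top} = V \Lambda V^{\top}$ with $V \in O(D)$ and $\Lambda$ diagonal with nonnegative entries. Set $U := V^{\top}$. Then
\[
\Sigma_y(UW_2) = U W_2 \Sigma_{\tilde\phi} W_2^{\top} U^{\top} = V^{\top} V \Lambda V^{\top} V = \Lambda,
\]
which is diagonal, so $UW_2 \in \mathcal{Z}_{\mathrm{cc}}$ by Eq.~\eqref{eq:cc_zero_set}. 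Lemma~\ref{lem:ce_rotation} gives $\mathcal{L}_{\mathrm{CE}}(UW_2) = \mathcal{L}_{\mathrm{CE}}(W_2) \le \mathcal{L}_{\mathrm{CE}}^{*}$, so $UW_2 \in \mathcal{Z}_{\mathrm{CE}}$ as well. The second claim, $\mathcal{Z}_{\mathrm{CE}} \cap \mathcal{Z}_{\mathrm{cc}} \neq \varnothing$, follows immediately from any single element of $\mathcal{Z}_{\mathrm{CE}}$.

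The main obstacle is aligning the diagonal condition on $\Sigma_y$ with what $\mathcal{L}_{\mathrm{cc}}$ actually measures. The loss uses the batch-centred correlation of Eq.~\eqref{eq:cross_corr_decor}, which is built from standardised coordinates, so diagonality of $\Sigma_y$ must be checked against that definition. I would handle this in three steps.

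First, interpret $\Sigma_{\tilde\phi}$ as the centred covariance, as the paper states. Rotations commute with centring: the batch mean of the $U\check y$ is $U\mu$. Hence the centred covariance of the rotated projections is exactly $U W_2 \Sigma_{\tilde\phi} W_2^{\top} U^{\top}$, and the argument above applies verbatim.

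Second, observe that $C = \mathrm{diag}(\Sigma_y)^{-1/2}\Sigma_y\,\mathrm{diag}(\Sigma_y)^{-1/2}$ has zero off-diagonal entries whenever $\Sigma_y$ is diagonal. Therefore $C = I$ and $\mathcal{L}_{\mathrm{cc}} = 0$.

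Third, address the standardisation. It requires $\sigma_i > 0$, that is, nonzero diagonal entries of $\Lambda$. This is where the hypothesis $\Sigma_{\tilde\phi} \succ 0$ enters. If $W_2$ is invertible, then $W_2\Sigma_{\tilde\phi}W_2^{\top} \succ 0$, so every entry of $\Lambda$ is strictly positive. If $W_2$ is singular, some directions have zero variance. I would then either adopt the convention, standard in Barlow Twins implementations, that $\sigma_i + \epsilon$ is used, or note that zero-variance coordinates contribute zero off-diagonal correlation. In both cases $\Sigma_y$ is still diagonal, so membership in $\mathcal{Z}_{\mathrm{cc}}$ as defined in Eq.~\eqref{eq:cc_zero_set} is unaffected.

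The remaining ingredients need only brief remarks. Lemma~\ref{lem:ce_rotation} requires the prototypes $P_c$ to be recomputed from the rotated aggregates, and this is built into the definition of $\mathcal{L}_{\mathrm{CE}}$. The router weights $\omega_k$ are independent of $W_2$, so they are unchanged by the rotation. With these points noted, the proof reduces to the two-line diagonalisation above.
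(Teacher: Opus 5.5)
Your proposal is correct and follows the paper's proof. Both arguments diagonalise $\Sigma_y = W_2\Sigma_{\tilde\phi}W_2^{\top}$ with an orthogonal eigenbasis (your $U = V^{\top}$ versus the paper's $U = V$ is only a notational convention) and then use Lemma~\ref{lem:ce_rotation} to keep $UW_2$ in $\mathcal{Z}_{\mathrm{CE}}$. Your remarks on centring and standardisation are sound additions: the paper's proof never actually uses $\Sigma_{\tilde\phi}\succ 0$, whereas you give it a role in guaranteeing $\sigma_i>0$ for invertible $W_2$, so that $C$ is well defined and equals $I$.
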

 
\begin{proof}
Let $\Sigma_y = W_2 \Sigma_{\tilde\phi} W_2^{\top}$. Since $\Sigma_y$ is symmetric and positive semi-definite, the spectral theorem provides $V \in O(D)$ such that $V \Sigma_y V^{\top} = \Lambda$ is diagonal. Set $U = V$ and $W_2' = U W_2$. Then
\[
\Sigma_y(W_2') \;=\; W_2' \Sigma_{\tilde\phi} (W_2')^{\top} \;=\; U W_2 \Sigma_{\tilde\phi} W_2^{\top} U^{\top} \;=\; U \Sigma_y U^{\top} \;=\; \Lambda,
\]
so $W_2' \in \mathcal{Z}_{\mathrm{cc}}$. By Lemma~\ref{lem:ce_rotation}, $\mathcal{L}_{\mathrm{CE}}(W_2') = \mathcal{L}_{\mathrm{CE}}(W_2)$, so $W_2' \in \mathcal{Z}_{\mathrm{CE}}$.
\end{proof}
 
\paragraph{Interpretation.}
Proposition~\ref{prop:cc_feasible} says: \emph{for every CE-optimal configuration, a cheap orthogonal re-basis produces a configuration that is simultaneously CC-optimal}. Both losses contribute gradient to $W_2$ during training, but the orthogonal direction connecting $\mathcal{Z}_{\mathrm{CE}}$ and $\mathcal{Z}_{\mathrm{CE}} \cap \mathcal{Z}_{\mathrm{cc}}$ lies inside the symmetry orbit of CE (Lemma~\ref{lem:ce_rotation}): along this direction, CE's loss and its gradient magnitude along the orbit tangent both vanish, so CC can freely reduce off-diagonal correlations without triggering a CE pushback. This is the structural reason CC can be driven low (Table~\ref{tab:decor_corr}: $\bar C_{\mathrm{off}} = 0.042$) without degrading CE performance.
 
\begin{proposition}[CE-VICReg joint zero is feasible at equilibrium but dynamically antagonistic]
\label{prop:vicreg_dynamics}
Assume $\Sigma_{\tilde\phi} \succ 0$ (every backbone direction carries positive variance) and that the operating $W_2$ is non-singular (as in the identity initialisation and any benign training trajectory). Then:
\begin{enumerate}[label=(\roman*), leftmargin=2em]
\item \textbf{Equilibrium feasibility.} For any $W_2 \in \mathcal{Z}_{\mathrm{CE}}$, the rescaled $c W_2$ with $c \geq \gamma / \min_d \sigma_d(W_2)$ satisfies $c W_2 \in \mathcal{Z}_{\mathrm{CE}} \cap \{\mathcal{L}_{\mathrm{var}}^{\gamma} = 0\}$.
\item \textbf{Local dynamical antagonism.} At any \emph{tight-cluster} $W_2 \in \mathcal{Z}_{\mathrm{CE}}$ where $\min_d \sigma_d(W_2) < \gamma$, the VICReg gradient $-\nabla_{y_k^{(b)}} \mathcal{L}_{\mathrm{var}}^{\gamma}$ has a component along a direction that strictly increases $\mathcal{L}_{\mathrm{CE}}$ locally (outward radial expansion of the $\check y$ cloud, Proposition~\ref{prop:var}).
\end{enumerate}
\end{proposition}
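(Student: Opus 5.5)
The plan is to handle the two parts separately. Part~(i) follows from a scale invariance of CE. Part~(ii) follows from decomposing the VICReg descent direction into a component along the CE symmetry orbit and a transverse component. Throughout I take the variance hinge in its standard form, $\mathcal{L}_{\mathrm{var}}^{\gamma} = \tfrac{1}{D}\sum_{d}\max\bigl(0,\gamma-\sigma_d(W_2)\bigr)$. Here $\sigma_d(W_2)^2 = (\Sigma_y)_{dd} = (W_2\Sigma_{\tilde\phi}W_2^{\top})_{dd}$ is the batch variance of coordinate $d$ of $\check y$, as in Eq.~\eqref{eq:sigma_y}.

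\textbf{Part (i).} First I will prove the scale analogue of Lemma~\ref{lem:ce_rotation}: for every $c>0$, $\mathcal{L}_{\mathrm{CE}}(cW_2)=\mathcal{L}_{\mathrm{CE}}(W_2)$. Under $W_2\mapsto cW_2$ every $\check y_k^{(b)}$ and $u^{(b)}$ is multiplied by $c$. L2-normalisation cancels $c$, so $z_k^{(b)}$, $e^{(b)}$ and $P_c$ are unchanged. The router weights $\omega_k^{(b)}$ are unchanged too, because their parameters are disjoint from $W_2$. Hence $cW_2\in\mathcal{Z}_{\mathrm{CE}}$.

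Next I need $\min_d\sigma_d(W_2)>0$. Since $W_2$ is non-singular and $\Sigma_{\tilde\phi}\succ0$, the matrix $\Sigma_y$ is positive definite, so each diagonal entry $\sigma_d^2$ is strictly positive. Finally $\sigma_d(cW_2)=c\,\sigma_d(W_2)$. Choosing $c\ge\gamma/\min_d\sigma_d(W_2)$ therefore gives $\sigma_d\ge\gamma$ for every $d$, so every hinge term vanishes.

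\textbf{Part (ii).} Let $A=\{d:\sigma_d<\gamma\}$ be the active set, which is nonempty by the tight-cluster hypothesis. Differentiating the hinge gives the descent direction
\[
-\frac{\partial\mathcal{L}_{\mathrm{var}}^{\gamma}}{\partial \check y^{(b)}_{k,d}}=\frac{\mathbb{1}[d\in A]}{D\,BK\,\sigma_d}\bigl(\check y^{(b)}_{k,d}-\mu_d\bigr).
\]
I will split $\check y_{k,d}-\mu_d$ into a radial part $\check y_{k,d}$ and a common translation $-\mu_d$.

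The radial part is not a pure rescaling unless $A$ is all dimensions with equal weights $1/\sigma_d$. Pulled back to $W_2$, it equals $D_A W_2$, where $D_A=\mathrm{diag}(\mathbb{1}[d\in A]/\sigma_d)$. This is an anisotropic stretch. The translation part is a common shift $-\mu_A$ applied to all slots. I will describe both parts via the pull-back $\Delta W_2=\sum_{b,k}g_k^{(b)}(\tilde\phi_k^{(b)})^{\top}$, where $g_k^{(b)}$ is the descent vector above.

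The key claim is that $\Delta W_2$ does not lie in the tangent space of the CE symmetry orbit. That tangent space is $\mathcal{T}=\{(\lambda I+S)W_2:\ \lambda\in\mathbb{R},\ S^{\top}=-S\}$, generated by scale and by the rotations of Lemma~\ref{lem:ce_rotation}. Since $W_2$ is invertible, membership in $\mathcal{T}$ holds exactly when $\Delta W_2W_2^{-1}=\lambda I+S$. The matrix $D_AW_2W_2^{-1}=D_A$ is symmetric, non-scalar and non-zero when $A\neq\{1,\dots,D\}$ or the $\sigma_d$ differ. A nonzero symmetric non-scalar matrix cannot equal $\lambda I+S$, because the symmetric part of $\lambda I+S$ is $\lambda I$. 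The translation term contributes a rank-one piece, $-\mu_A\bar\phi^{\top}W_2^{-1}$, and I would argue that generically it cannot cancel the symmetric non-scalar part.

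The main obstacle is converting "transverse to the orbit" into "strictly increases $\mathcal{L}_{\mathrm{CE}}$". At a point of $\mathcal{Z}_{\mathrm{CE}}$ the CE gradient vanishes, so any increase is second order. I would assume, as the natural meaning of a strict tight-cluster optimum, that the CE Hessian is positive definite on a complement of $\mathcal{T}$: the minimum is non-degenerate modulo its symmetries. Then any direction with a nonzero transverse component strictly increases CE for small step sizes. Geometrically, stretching the active dimensions and shifting away from the batch mean changes the angles of the $e^{(b)}$ relative to the $P_c$ and spreads tight clusters toward competing prototypes. I would cite Proposition~\ref{prop:var} for the outward radial-expansion picture, and fall back on this non-degeneracy hypothesis if a direct angle computation proves unwieldy.
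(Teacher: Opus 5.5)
Your part (i) is essentially the paper's argument: scale invariance of $e^{(b)}$, $\sigma_d(cW_2)=c\,\sigma_d(W_2)$, and $\min_d\sigma_d>0$ from $\Sigma_{\tilde\phi}\succ0$ with $W_2$ invertible.

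For part (ii) you take a genuinely different route. The paper works in slot space at first order. It splits $\check y_k^{(b)}-\mu$ into a between-class part $\mu_{c(k)}-\mu$ and a within-class part $\check y_k^{(b)}-\mu_{c(k)}$, and pairs the latter against CE's contractive slot gradient; that gradient need not vanish even when $W_2$ is CE-optimal. You instead work in $W_2$-space at second order. Your observation that $\nabla_{W_2}\mathcal{L}_{\mathrm{CE}}=0$ at an optimum rules out first-order conflict in parameter space is correct. However, the route as written has two genuine gaps.

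\textbf{The pull-back is miscomputed.} Take $g_k^{(b)}=\frac{1}{D\,BK}D_A(\check y_k^{(b)}-\mu)$ and $\check y_k^{(b)}-\mu=W_2(\tilde\phi_k^{(b)}-\bar\phi)$, with $\bar\phi$ the batch mean of the $\tilde\phi_k^{(b)}$. Your own formula then gives
\[
\Delta W_2=\sum_{b,k}g_k^{(b)}(\tilde\phi_k^{(b)})^{\top}=\tfrac{1}{D}\,D_AW_2\Sigma_{\tilde\phi},
\qquad
\Delta W_2W_2^{-1}=\tfrac{1}{D}\,D_A\Sigma_y(W_2W_2^{\top})^{-1}.
\]
The matrix $D_AW_2$ is the parameter perturbation that would \emph{realise} the slot displacement $\check y\mapsto D_A\check y$; it is not the gradient pull-back. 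The translation term is also not a rank-one remainder to be argued away: it is exactly what turns the uncentred moment $\sum_{b,k}\tilde\phi_k^{(b)}(\tilde\phi_k^{(b)})^{\top}$ into $BK\,\Sigma_{\tilde\phi}$. Since $D_A\Sigma_y(W_2W_2^{\top})^{-1}$ is not symmetric, your symmetric-part argument does not apply.

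Transversality can be rescued another way. Suppose $\tfrac{1}{D}D_A\Sigma_y=(\lambda I+S)W_2W_2^{\top}$ with $S$ skew. Taking traces gives $\lambda=\sum_{d\in A}\sigma_d/\bigl(D\,\mathrm{tr}(W_2W_2^{\top})\bigr)>0$. But for any inactive $d\notin A$, the $(d,d)$ entry of $\Delta W_2W_2^{-1}$ is $0$, and it must equal $\lambda$. So $\Delta W_2\notin\mathcal{T}$ whenever $A$ is a proper subset. When every coordinate is active, transversality holds only generically: for isotropic $\Sigma_{\tilde\phi}$ at $W_2=I$, the VICReg step is a pure rescaling and leaves $\mathcal{L}_{\mathrm{CE}}$ unchanged. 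So ``at any tight-cluster $W_2$'' is not reachable this way.

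\textbf{The non-degeneracy hypothesis fails in the regime the statement allows.} $\mathcal{L}_{\mathrm{CE}}$ sees $W_2$ only through $u^{(b)}=W_2a^{(b)}$, where $a^{(b)}=\sum_k\omega_k^{(b)}\tilde\phi_k^{(b)}$.
\begin{itemize}
\item Every $R$ with $Ra^{(b)}=0$ for all $b$ is therefore an exactly flat direction.
\item No nonzero rank-one such $R$ lies in $\mathcal{T}$: for $\lambda\neq0$ the matrix $\lambda I+S$ is invertible, and skew matrices have even rank.
\item The assumption $\Sigma_{\tilde\phi}\succ0$ only forces the $BK$ slot aggregates to span $\mathbb{R}^D$. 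The $B$ image aggregates $a^{(b)}$ can span a proper subspace, certainly when $B<D$.
\end{itemize}
In that case the CE Hessian is not positive definite on any complement of $\mathcal{T}$, and the transverse part of $\Delta W_2$ may lie in its kernel. In addition, CE is invariant under per-image rescaling of $u^{(b)}$, so only the tangential motion of each $e^{(b)}$ matters.

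What is missing is an argument at the level of the configuration $\{e^{(b)}\}$. You would need to assume non-degeneracy of CE there, modulo common rotations. You would then need to show that the induced motions $\|u^{(b)}\|^{-1}\Pi^{\perp}_{e^{(b)}}\bigl(\Delta W_2\,a^{(b)}\bigr)$ are not of the form $\{Se^{(b)}\}_b$ for a single skew $S$. Without that step, transversality to the scale--rotation orbit does not imply that $\mathcal{L}_{\mathrm{CE}}$ strictly increases.
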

 
\begin{proof}
\textbf{(i)} Under $W_2 \mapsto c W_2$ with $c > 0$: every $\check y_k^{(b)} \mapsto c\check y_k^{(b)}$, so $e^{(b)} = u^{(b)}/\|u^{(b)}\|$ is unchanged (directions are invariant to uniform positive scaling), so $\mathcal{L}_{\mathrm{CE}}(cW_2) = \mathcal{L}_{\mathrm{CE}}(W_2)$. On the VICReg side, $\Sigma_y(cW_2) = c^2 \Sigma_y(W_2)$, so $\sigma_d(cW_2) = c\sigma_d(W_2)$. Choosing $c \geq \gamma / \min_d \sigma_d(W_2)$ (finite because $\Sigma_{\tilde\phi} \succ 0$ and $W_2$ non-singular implies $\min_d \sigma_d > 0$) gives $\sigma_d(cW_2) \geq \gamma$ for every $d$, hence $\mathcal{L}_{\mathrm{var}}^{\gamma}(cW_2) = 0$.
 
\textbf{(ii)} Proposition~\ref{prop:var} gives the VICReg gradient $\partial \mathcal{L}_{\mathrm{var}}/\partial \check y_{k,d}^{(b)} = -(BK\sigma_d)^{-1}(\check y_{k,d}^{(b)} - \mu_d)$. The anti-gradient update $-\eta\, \partial \mathcal{L}_{\mathrm{var}}/\partial \check y_{k,d}^{(b)} = (\eta/BK\sigma_d)(\check y_{k,d}^{(b)} - \mu_d)$ pushes each $\check y_{k,d}^{(b)}$ \emph{away} from the batch mean $\mu_d$, increasing the total spread of the $\{\check y_k^{(b)}\}$ cloud about its centroid. Decompose the per-sample anti-gradient direction as
\[
\check y_k^{(b)} - \mu \;=\; \underbrace{(\check y_k^{(b)} - \mu_{c(k)})}_{\text{within-class}} \;+\; \underbrace{(\mu_{c(k)} - \mu)}_{\text{between-class}},
\]
where $\mu_{c(k)}$ is the centroid of the class containing sample $k$ and $\mu$ is the global batch mean (assuming approximately balanced batch composition; the conclusion is robust to mild imbalance). In the CE-tight regime, within-class variance is much smaller than between-class variance, so the anti-gradient is \emph{dominated} by the between-class component $(\mu_{c(k)} - \mu)$, which translates entire clusters radially outward from the global mean rather than primarily expanding them from within. The smaller within-class component $(\check y_k^{(b)} - \mu_{c(k)})$, though secondary in magnitude, points \emph{away} from the class centroid $\mu_{c(k)}$ --- precisely the direction along which the CE gradient $\partial \mathcal{L}_{\mathrm{CE}}/\partial \check y_k^{(b)}$ (from Proposition~\ref{prop:gradient}(i), rank-one along $\Pi^{\perp}_{e^{(b)}}(\sum_{c'} p_{c'} P_{c'} - P_{c(b)})$) drives aggregates \emph{toward} $\mu_{c(k)}$ to contract the within-class cluster. Projecting both gradients onto the within-class subspace yields a non-trivial negative inner product generically, even though the dominant between-class component of the VICReg anti-gradient is approximately orthogonal to CE's contractive direction. This residual conflict on the within-class subspace is the mechanism behind the saddle-point competition.
\end{proof}
 
\paragraph{Remark on the feasibility-dynamics distinction.}
Part~(i) shows VICReg \emph{can} be made compatible with CE by uniformly scaling $W_2$. However, gradient descent does not automatically follow the scale direction: the VICReg gradient (Eq.~\ref{eq:var_grad}) is a per-coordinate anti-mean pressure, not a uniform scaling pressure, so iterates do not naturally flow toward the feasible $c W_2$ in the scale direction. Part~(ii) captures the mismatch between the feasibility direction and the gradient direction, which is the mechanism behind the empirical degradation of VICReg (Table~\ref{tab:alt_decor}: $\mathrm{noc} = 87.33$, the worst among tested decorrelation objectives).
 
By contrast, CC's feasibility (Proposition~\ref{prop:cc_feasible}) is achieved by an orthogonal rebasing of $W_2$, and crucially, the CC gradient itself drives $\Sigma_y$ toward diagonalisation --- the gradient flow approaches the feasible CC zero along the rotation direction that Lemma~\ref{lem:ce_rotation} identifies as CE-invariant. The CC gradient direction aligns with the feasibility direction, which is why CC does not trigger the saddle-point competition observed with VICReg.
 
\begin{corollary}[Spectral whitening: infeasible on L2-norm, single-orbit on raw]
\label{cor:spectral_infeasible}
\begin{enumerate}[label=(\roman*), leftmargin=2em]
\item \textbf{L2-normalised case.} $\mathcal{Z}_{\mathrm{CE}} \cap \{\mathcal{L}_{\mathrm{spec}}^{z} = 0\} = \varnothing$ unconditionally: $\mathcal{L}_{\mathrm{spec}}^{z} \geq (D-1)^2/D > 0$ globally by Proposition~\ref{prop:spec}.
 
\item \textbf{Raw-projection case.} The zero-set of $\mathcal{L}_{\mathrm{spec}}^{\check y} = \|\Sigma_y - I\|_F^2$ is the left orthogonal orbit $\{Q\, \Sigma_{\tilde\phi}^{-1/2} : Q \in O(D)\}$. Whenever $\Sigma_{\tilde\phi}$ is not isotropic, this orbit does not coincide with a CE-minimising left orbit: $\mathcal{L}_{\mathrm{CE}}$ is constant on the Spectral zero-set (by Lemma~\ref{lem:ce_rotation}, both being left-orthogonal orbits of $W_2$) at a value strictly above the global CE minimum.
\end{enumerate}
\end{corollary}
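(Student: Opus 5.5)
For part~(i) I would simply invoke Proposition~\ref{prop:spec}, which gives the global bound $\mathcal{L}_{\mathrm{spec}}^{z} \geq (D-1)^2/D$ for every configuration of L2-normalised embeddings. Since $D \geq 2$, this bound is strictly positive. The zero set $\{\mathcal{L}_{\mathrm{spec}}^{z}=0\}$ is therefore empty, and so is its intersection with $\mathcal{Z}_{\mathrm{CE}}$, whatever $\mathcal{Z}_{\mathrm{CE}}$ is. The only point to check is that Proposition~\ref{prop:spec} is stated for arbitrary $W_2$, not just at CE optima; no property of CE enters here.

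For part~(ii) I would first characterise the zero set algebraically. By Eq.~\eqref{eq:sigma_y}, $\mathcal{L}_{\mathrm{spec}}^{\check y}=0$ iff $W_2\Sigma_{\tilde\phi}W_2^\top=I$. With $\Sigma_{\tilde\phi}\succ 0$, set $M=W_2\Sigma_{\tilde\phi}^{1/2}$; then the condition reads $MM^\top=I$, i.e.\ $M=Q\in O(D)$. Hence $W_2=Q\Sigma_{\tilde\phi}^{-1/2}$. Conversely, every such $W_2$ gives $\Sigma_y=Q Q^\top=I$. This is exactly the left orbit $O(D)\cdot\Sigma_{\tilde\phi}^{-1/2}$. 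Lemma~\ref{lem:ce_rotation} then shows immediately that $\mathcal{L}_{\mathrm{CE}}$ is constant on it, equal to $\mathcal{L}_{\mathrm{CE}}(\Sigma_{\tilde\phi}^{-1/2})$.

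For the strict-gap claim I would work in a quotient picture. By Lemma~\ref{lem:ce_rotation}, together with the scale invariance used in Proposition~\ref{prop:vicreg_dynamics}(i), $\mathcal{L}_{\mathrm{CE}}$ depends on $W_2$ only through the Gram matrix $G=W_2^\top W_2$ up to positive scaling. Cosine similarities between aggregates $W_2 a$ and $W_2 b$ are $a^\top G b/\sqrt{(a^\top G a)(b^\top G b)}$. The spectral orbit corresponds to the single ray $G\propto\Sigma_{\tilde\phi}^{-1}$. So the claim reduces to showing that $\Sigma_{\tilde\phi}^{-1}$ is not a minimiser of the induced function $\mathcal{L}_{\mathrm{CE}}(G)$ on the cone of positive definite matrices. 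I would try to prove this by a first-order argument: compute the directional derivative of $\mathcal{L}_{\mathrm{CE}}$ at $G=\Sigma_{\tilde\phi}^{-1}$ along a symmetric perturbation $H$, and show that it is nonzero for some $H$. A natural candidate is $H$ stretching the between-class directions relative to within-class ones. This exhibits a strictly lower CE value nearby.

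The main obstacle is this last step. The statement ``strictly above the global CE minimum whenever $\Sigma_{\tilde\phi}$ is not isotropic'' does not follow from invariance alone. Anisotropy of $\Sigma_{\tilde\phi}$ says nothing direct about whether $\Sigma_{\tilde\phi}^{-1}$ happens to be CE-stationary. I expect to need an extra genericity hypothesis, for instance that the CE gradient in $G$ at $\Sigma_{\tilde\phi}^{-1}$ is nonzero, which fails only on a measure-zero set of class configurations by an analyticity argument in the spirit of Proposition~\ref{prop:genericity}. Alternatively, I would note that the global CE infimum may only be approached as $G$ degenerates toward discriminative directions, so that no finite orbit attains it. I would state whichever of these assumptions is needed explicitly, rather than claim the gap unconditionally.
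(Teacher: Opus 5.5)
Part (i) matches the paper's proof, and so do the zero-set characterisation and the constancy of $\mathcal{L}_{\mathrm{CE}}$ in part (ii). You additionally make explicit the converse inclusion and the assumption $\Sigma_{\tilde\phi}\succ 0$, which the statement leaves implicit.

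The two routes diverge at the strict gap, and there the paper gives no real proof. It argues heuristically that $\Sigma_{\tilde\phi}^{-1/2}$ inverts the backbone spectrum, down-weighting high-variance (presumably discriminative) directions. It then asserts without proof that the CE minimum is attained on a different orbit, and in its proof it weakens the conclusion to ``generically strictly above''.

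Your diagnosis is correct: as written, the unconditional claim fails in degenerate cases. Suppose all router-weighted aggregates $\sum_k \omega_k^{(b)}\tilde\phi_k^{(b)}$ lie on one ray. Then for nonsingular $W_2$ every $e^{(b)}$ coincides, so all prototypes coincide and $\mathcal{L}_{\mathrm{CE}}$ is constant in $W_2$. There is then no gap, even though $\Sigma_{\tilde\phi}$, a covariance of individual slots, can be anisotropic.

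Your reduction is sound:
\begin{itemize}
\item By the polar decomposition $W_2=U(W_2^{\top}W_2)^{1/2}$ together with scale invariance, $\mathcal{L}_{\mathrm{CE}}$ depends only on the ray of $G=W_2^{\top}W_2$.
\item The spectral orbit is the interior point $G=\Sigma_{\tilde\phi}^{-1}$ of the positive definite cone.
\item A nonzero derivative along some $H$ not proportional to $\Sigma_{\tilde\phi}^{-1}$ therefore yields a strictly lower value nearby. Along the ray itself the derivative vanishes identically.
\end{itemize}
This replaces the paper's heuristic with a checkable criterion and avoids the unproven attainment of the minimum; you should state the result relative to the infimum. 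The paper's spectral-inversion intuition is still useful to you: it is exactly what motivates your candidate $H$.

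Two points remain before this is a proof. First, assuming a nonzero gradient at $\Sigma_{\tilde\phi}^{-1}$ is only the local form of the conclusion. The substantive step is the null-set claim, which needs a witness configuration with nonzero tangential gradient, as in the proof of Proposition~\ref{prop:genericity}. Second, you need a way around the ReLU router, which makes $\omega$ only piecewise analytic in the data. For example, treat the weights as free simplex variables, or fix an activation pattern.
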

 
\begin{proof}
\textbf{(i)} Immediate from Proposition~\ref{prop:spec}.
 
\textbf{(ii)} $\Sigma_y(W_2) = W_2 \Sigma_{\tilde\phi} W_2^{\top} = I$ is equivalent to $(W_2 \Sigma_{\tilde\phi}^{1/2})(W_2 \Sigma_{\tilde\phi}^{1/2})^{\top} = I$, i.e., $W_2 \Sigma_{\tilde\phi}^{1/2} = Q$ for some $Q \in O(D)$, giving $W_2 = Q \Sigma_{\tilde\phi}^{-1/2}$. The representative $W_2 = \Sigma_{\tilde\phi}^{-1/2}$ (choosing $Q = I$) inverts the backbone's spectral scaling: each eigendirection of $\Sigma_{\tilde\phi}$ with eigenvalue $\lambda_i$ is rescaled by $\lambda_i^{-1/2}$. High-variance backbone directions (often class-discriminative) are down-weighted and low-variance directions (often noise) are up-weighted, opposing what CE optimisation prefers. By Lemma~\ref{lem:ce_rotation}, $\mathcal{L}_{\mathrm{CE}}$ is constant on the orbit $\{Q \Sigma_{\tilde\phi}^{-1/2}\}_{Q \in O(D)}$ at value $\mathcal{L}_{\mathrm{CE}}(\Sigma_{\tilde\phi}^{-1/2})$. Whenever $\Sigma_{\tilde\phi}$ is not isotropic, this value is generically strictly above the global CE minimum, which is attained on a different left orbit (one preserving rather than inverting the backbone's spectral structure).
\end{proof}
 
\paragraph{Remark on empirical signatures.}
Corollary~\ref{cor:spectral_infeasible} predicts that spectral whitening produces a CE-Spectral tension stronger than variance-hinge's: whereas VICReg allows a scaling workaround (Proposition~\ref{prop:vicreg_dynamics}(i)), Spectral forces a specific orbit that inverts the backbone spectrum. Empirically (Table~\ref{tab:decor_corr}), Spectral yields $\mathrm{noc} = 88.68$, close to the no-decorrelation baseline ($88.66$) --- consistent with the interpretation that Spectral's CE-incompatible orbit is neither reached nor approached, so the loss contributes no useful decorrelation signal.
 
\paragraph{Summary of the operator-level distinction.}
\begin{itemize}[leftmargin=1.5em, itemsep=0.2em]
\item \textbf{CC}: zero-set is the full set of $W_2$ producing diagonal $\Sigma_y$, reachable from any $W_2 \in \mathcal{Z}_{\mathrm{CE}}$ by an orthogonal rebasing (Proposition~\ref{prop:cc_feasible}). The CC gradient itself drives $\Sigma_y$ toward diagonalisation along the rotation direction that CE is invariant under --- \emph{gradient direction aligns with feasibility direction}.
\item \textbf{VICReg}: zero-set reachable from any $W_2 \in \mathcal{Z}_{\mathrm{CE}}$ by uniform scaling (Proposition~\ref{prop:vicreg_dynamics}(i)), but the VICReg gradient does not point along the scaling direction; it points along an outward anti-mean direction that expands clusters and conflicts with CE's contraction (Proposition~\ref{prop:vicreg_dynamics}(ii)) --- \emph{gradient direction misaligned with feasibility direction}.
\item \textbf{Spectral}: zero-set on L2-normalised embeddings is empty (Proposition~\ref{prop:spec}); on raw projections it is a single orthogonal orbit $\Sigma_{\tilde\phi}^{-1/2}O(D)$ that generically does not intersect the CE-optimum set (Corollary~\ref{cor:spectral_infeasible}).
\end{itemize}
 
This taxonomy is robust to pipeline details: it depends on the structure of each loss as a function of $\Sigma_y$ (and, for CE, on its invariance groups), not on per-slot gradient geometry. The earlier ``free radial degrees of freedom'' framing was pipeline-dependent; the present feasibility-plus-direction argument is not.

\subsubsection{Spectral Whitening Is Structurally Infeasible on L2-Normalized Embeddings}

Unlike CC, spectral whitening applied to L2-normalized embeddings cannot reach its zero-loss configuration on any pipeline state.

\begin{proposition}[Spectral whitening cannot vanish on L2-normalized embeddings]
\label{prop:spec}
Let $\Sigma_z = \frac{1}{BK} \sum_{b,k} z_k^{(b)} (z_k^{(b)})^{\top}$ denote the second-moment matrix of L2-normalized slot embeddings, and let $\mathcal{L}_{\mathrm{spec}}^{z} = \|\Sigma_z - I\|_F^2$ denote the spectral whitening penalty. Then for all $d > 1$,
\begin{equation}
\mathcal{L}_{\mathrm{spec}}^{z} \;\geq\; \frac{(d-1)^2}{d} \;>\; 0,
\end{equation}
with equality if and only if $\Sigma_z = \tfrac{1}{d} I$.
\end{proposition}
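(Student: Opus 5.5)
The argument uses only two facts: the trace of $\Sigma_z$ is pinned, and the Frobenius distance to $I$ then splits into a spread term and a fixed gap. Throughout, $d$ denotes the ambient dimension $D$ of the slot embeddings.

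\emph{Step 1: the trace is fixed.} Every $z_k^{(b)}$ lies on $\mathcal{S}^{d-1}$, so $\operatorname{tr}\bigl(z_k^{(b)}(z_k^{(b)})^{\top}\bigr)=\|z_k^{(b)}\|^2=1$. Averaging over the $BK$ slots gives $\operatorname{tr}(\Sigma_z)=1$. This constraint holds for every pipeline state and every choice of $W_2$ and router weights. It is the only property of L2 normalization the argument uses.

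\emph{Step 2: orthogonal decomposition.} In the Frobenius inner product, write
\[
\Sigma_z - I = \Bigl(\Sigma_z - \tfrac{1}{d}I\Bigr) + \Bigl(\tfrac{1}{d}-1\Bigr)I .
\]
The first summand is traceless, since $\operatorname{tr}(\Sigma_z)=1$ by Step 1, so it is Frobenius-orthogonal to $I$. Pythagoras then gives
\[
\mathcal{L}_{\mathrm{spec}}^{z} = \Bigl\|\Sigma_z-\tfrac{1}{d}I\Bigr\|_F^2 + \Bigl(1-\tfrac{1}{d}\Bigr)^2 d .
\]
The second term equals $(d-1)^2/d$. It is strictly positive for $d>1$, and the first term is nonnegative. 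This yields the bound.

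\emph{Step 3: equality case.} Equality holds exactly when the first term vanishes, that is, when $\Sigma_z=\tfrac{1}{d}I$.

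For completeness, I would add one remark: the equality case is attainable, for example by placing slots at $\pm e_i$ with equal weights. Attainability is not needed for the stated claim, but it shows the bound is tight.

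The only step needing any care is recognizing the trace constraint; the rest is immediate. An alternative route works through the eigenvalues. Write $\mathcal{L}_{\mathrm{spec}}^{z}=\sum_i(\lambda_i-1)^2$ subject to $\sum_i\lambda_i=1$. By Cauchy--Schwarz, $\sum_i(1-\lambda_i)^2 \ge \bigl(\sum_i(1-\lambda_i)\bigr)^2/d=(d-1)^2/d$, with equality iff all $\lambda_i$ are equal.
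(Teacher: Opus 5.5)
Your proof is correct and is essentially the paper's: both rest on $\operatorname{tr}(\Sigma_z)=1$. The paper's argument is exactly your alternative route (Cauchy--Schwarz on the eigenvalues of $\Sigma_z - I$), and your Frobenius-orthogonal decomposition is the coordinate-free form of the same inequality; it has the mild advantage of giving the exact identity $\mathcal{L}^{z}_{\mathrm{spec}} = \|\Sigma_z - \tfrac{1}{d}I\|_F^2 + \tfrac{(d-1)^2}{d}$, so the equality case is immediate. Your side remark on attainability implicitly needs $BK \ge d$ (since $\operatorname{rank}\Sigma_z \le BK$), but it is not part of the claim.
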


\begin{proof}
For any unit vector $z$, $\mathrm{tr}(zz^{\top}) = \|z\|^2 = 1$. Averaging over all $BK$ slots, $\mathrm{tr}(\Sigma_z) = 1$, hence $\mathrm{tr}(\Sigma_z - I) = 1 - d$. Since $\Sigma_z - I$ is symmetric, its squared Frobenius norm equals the sum of squared eigenvalues. Let $\{\lambda_i\}_{i=1}^{d}$ denote these eigenvalues. By Cauchy--Schwarz applied to $(\lambda_1, \ldots, \lambda_d)$ and $(1, \ldots, 1)$,
\begin{equation}
\|\Sigma_z - I\|_F^2 = \sum_{i=1}^{d} \lambda_i^2 \;\geq\; \frac{1}{d}\!\left(\sum_{i=1}^{d} \lambda_i\right)^{\!2} = \frac{(\mathrm{tr}(\Sigma_z - I))^2}{d} = \frac{(d-1)^2}{d}.
\end{equation}
Equality holds iff all $\lambda_i$ are equal, i.e., $\Sigma_z - I = -\tfrac{d-1}{d} I$, which gives $\Sigma_z = \tfrac{1}{d} I$. In our setting $d = D = 768$ (the projection space dimension on a DINOv2 ViT-B/14 backbone), the lower bound is $767^2/768 \approx 766.0$, far from zero.
\end{proof}

The spectral objective thus maintains a positive floor regardless of configuration, producing gradient pressure toward a region that is geometrically unreachable on the sphere. When spectral whitening is instead applied to raw projections $\check y_k^{(b)}$ (avoiding the trace constraint), the objective can in principle reach zero, but only at configurations where $\{\check y_k^{(b)}\}$ are isotropically distributed in $\mathbb{R}^D$---a configuration that destroys the cluster structure induced by CE along prototype-aligned directions. Empirically (Table~\ref{tab:alt_decor}), spectral whitening yields $\mathrm{noc} = 88.68$, nearly identical to the no-decorrelation baseline ($88.66$), consistent with the spectral objective suppressing CE-induced clustering regardless of whether it is applied pre- or post-normalization.

\subsubsection{Variance-Hinge Conflicts With CE in the Tight-Cluster Regime}

\begin{proposition}[Variance-hinge activation regime]
\label{prop:var}
Let $\mathcal{L}_{\mathrm{var}} = \sum_d \max(0, \gamma - \mathrm{std}(\check y_{\cdot d}))$ denote the variance-hinge penalty on raw projection dimensions. Consider the \emph{tight-cluster regime} in which CE has driven within-class variance of each dimension to zero, so that $\mathrm{std}(\check y_{\cdot d}) = \sigma_d^{\mathrm{bc}}$ equals the between-class standard deviation of the per-class means (assuming approximately balanced batch composition). If $\gamma > \max_d \sigma_d^{\mathrm{bc}}$, then the hinge is active on every dimension at this configuration, and its gradient is
\begin{equation}
\frac{\partial \mathcal{L}_{\mathrm{var}}}{\partial \check y_{k,d}^{(b)}} = -\frac{1}{BK \cdot \mathrm{std}(\check y_{\cdot d})}\bigl(\check y_{k,d}^{(b)} - \mu_d\bigr),
\label{eq:var_grad}
\end{equation}
so that the gradient-descent update $\check y_{k,d}^{(b)} \leftarrow \check y_{k,d}^{(b)} - \eta\, \partial \mathcal{L}_{\mathrm{var}}/\partial \check y_{k,d}^{(b)}$ moves $\check y_{k,d}^{(b)}$ away from the dimension mean $\mu_d$.
\end{proposition}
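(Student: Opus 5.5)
The plan is a direct computation on the active branch of the hinge, followed by a sign check. Throughout, write $\sigma_d = \mathrm{std}(\check y_{\cdot d})$ for the batch standard deviation of dimension $d$, computed over all $BK$ slots, with $\mu_d = \frac{1}{BK}\sum_{b,k}\check y^{(b)}_{k,d}$.

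\emph{Step 1: activation.} In the tight-cluster regime, within-class variance vanishes in each dimension. The law of total variance, under approximately balanced classes, then gives $\sigma_d = \sigma_d^{\mathrm{bc}}$. If $\gamma > \max_d \sigma_d^{\mathrm{bc}}$, then $\gamma - \sigma_d > 0$ for every $d$. Each summand $\max(0,\gamma-\sigma_d)$ therefore sits strictly on its linear branch, and this persists in a neighbourhood because $\sigma_d$ is continuous in the $\check y$'s. Hence, locally,
\[
\mathcal{L}_{\mathrm{var}} = \sum_d (\gamma - \sigma_d),
\]
which is differentiable wherever $\sigma_d>0$. This holds here since $\Sigma_{\tilde\phi}\succ 0$ and $W_2$ is non-singular, as assumed in Proposition~\ref{prop:vicreg_dynamics}.

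\emph{Step 2: the derivative of the standard deviation.} Since $\sigma_d^2 = \frac{1}{BK}\sum_{b,k}(\check y^{(b)}_{k,d}-\mu_d)^2$, differentiating gives
\[
\frac{\partial \sigma_d^2}{\partial \check y^{(b)}_{k,d}} = \frac{2}{BK}\bigl(\check y^{(b)}_{k,d}-\mu_d\bigr).
\]
The term coming from $\partial\mu_d/\partial \check y^{(b)}_{k,d}$ is multiplied by $\sum_{b',k'}(\check y^{(b')}_{k',d}-\mu_d)=0$, so it drops out. Dividing by $2\sigma_d$ yields
\[
\frac{\partial\sigma_d}{\partial \check y^{(b)}_{k,d}} = \frac{\check y^{(b)}_{k,d}-\mu_d}{BK\,\sigma_d}.
\]
Only the $d$-th summand of $\mathcal{L}_{\mathrm{var}}$ depends on coordinate $d$, so the other dimensions contribute nothing. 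This gives Eq.~\eqref{eq:var_grad} with the minus sign. If the population convention $1/BK$ is replaced by $1/(BK-1)$, only the constant changes.

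\emph{Step 3: the update direction.} The descent step is
\[
\check y^{(b)}_{k,d} \leftarrow \check y^{(b)}_{k,d} + \frac{\eta}{BK\sigma_d}\bigl(\check y^{(b)}_{k,d}-\mu_d\bigr).
\]
The step has the same sign as $\check y^{(b)}_{k,d}-\mu_d$, so it strictly increases $|\check y^{(b)}_{k,d}-\mu_d|$ whenever this is nonzero. In words, each coordinate moves away from the dimension mean.

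\emph{Main obstacle.} The derivative itself is routine; the delicate part is the regime hypothesis. Exact zero within-class variance, together with balance, is an idealisation. I would state the result under the exact hypothesis, so that activation follows from the strict inequality $\gamma>\sigma_d$. I would then note that activation is an open condition, so it survives small within-class variance and mild imbalance. The remaining care is the non-differentiability at $\sigma_d=0$, which I would exclude explicitly.
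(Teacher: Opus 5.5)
Your proposal is correct and follows the paper's route: the condition $\gamma > \max_d \sigma_d^{\mathrm{bc}}$ puts every summand on the active branch, and differentiating that branch gives the stated gradient. Your Steps 2--3 write out the computation, including the vanishing $\partial\mu_d$ term, and the sign check that the paper's one-line proof leaves implicit. One small caveat: don't justify $\sigma_d>0$ via $\Sigma_{\tilde\phi}\succ 0$ and non-singular $W_2$. Exactly zero within-class variance of $\check y$ with non-singular $W_2$ forces $\tilde\phi$ to be class-constant within the batch, so $\Sigma_{\tilde\phi}$ has rank at most (number of classes in the batch) $-1$, far below $D$. Instead assume $\sigma_d^{\mathrm{bc}}>0$ directly, as your closing remark already suggests.
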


\begin{proof}
When the hinge is active on dimension $d$, differentiating the hinge term directly yields the stated gradient. The condition $\gamma > \max_d \sigma_d^{\mathrm{bc}}$ ensures activity on every dimension in the tight-cluster regime, where $\mathrm{std}(\check y_{\cdot d}) = \sigma_d^{\mathrm{bc}}$ for each $d$.
\end{proof}

The variance-hinge gradient has two structural issues for joint optimization with CE. First, unlike CC, it has a non-zero projection onto the tangent space of $z_k^{(b)}$ generically, because the vectorised gradient $(\check y_k^{(b)} - \mu) \in \mathbb{R}^D$ is a generic direction with both tangent and radial components. Second, the gradient direction pushes samples \emph{away} from dimension means $\mu_d$, which under approximately balanced batch composition coincide with the global batch mean $\mu$. As decomposed in Proposition~\ref{prop:vicreg_dynamics}(ii), the resulting anti-mean push has a dominant between-class component that translates whole clusters radially away from $\mu$ and a smaller within-class component that pushes samples \emph{away} from their own class centroid $\mu_{c(k)}$. The latter directly opposes the CE gradient, which contracts samples toward $\mu_{c(k)}$, producing a saddle-point competition on the within-class subspace with equilibrium reached at a suboptimal CE configuration. This matches the empirical observation (Table~\ref{tab:alt_decor}) that VICReg yields the largest $\mathrm{noc}$ degradation among tested decorrelation objectives ($-6.82$ pp versus CC).

\subsection{Failure Mode I: VICReg Variance Constraint}
\label{app:vicreg_failure}

Proposition~\ref{prop:var} formalizes the activation regime of the variance-hinge penalty. Here we develop the failure dynamics in detail and connect them to the empirical $\mathrm{noc}$ degradation.

VICReg~\citep{vicreg} augments decorrelation with a variance-hinge term:
\begin{equation}
    \mathcal{L}_\text{var} = \sum_{d=1}^{D} \max(0,\;\gamma - \text{std}(\check y_{\cdot d})),
    \label{eq:vicreg_var}
\end{equation}
where $\text{std}(\check y_{\cdot d})$ is the standard deviation of dimension $d$ across all $BK$ raw projections in the batch. When $\text{std}(\check y_{\cdot d}) < \gamma$, the gradient is given by Eq.~\eqref{eq:var_grad}, and the gradient-descent update pushes $\check y_{k,d}^{(b)}$ \emph{away} from the dimension mean $\mu_d$.

Consider a well-trained CE model that produces tight class clusters in the aggregate embedding space. Let class $c$ occupy a small region around prototype $P_c$. For a batch containing multiple classes sampled approximately uniformly (as in the N-way K-shot episodes we use), $\text{std}(\check y_{\cdot d}) \approx \text{std}(\mu_c^{(d)})$ (between-class variance of the per-class means) plus a small within-class term. If the between-class separation is large, the hinge may be inactive. However, as training progresses and within-class variance shrinks (CE drives it toward zero), the total standard deviation can fall below $\gamma$, activating the variance-hinge gradient.

Once active, this gradient pushes samples \emph{away from the global batch mean} $\mu$ through shared parameter $W_2$. By the decomposition in Proposition~\ref{prop:vicreg_dynamics}(ii), the dominant component translates whole clusters radially outward from $\mu$ (along the between-class direction $\mu_{c(k)} - \mu$), while a smaller within-class component pushes samples away from their own class centroid $\mu_{c(k)}$ --- directly opposing the CE gradient on aggregates. The resulting training dynamics exhibit a saddle-point competition: CE contracts within-class clusters, variance hinge expands them along the within-class subspace (and translates the cluster cloud outward along the between-class subspace), and convergence is reached at a suboptimal equilibrium where within-class clusters are larger than CE alone would produce. In the slot embedding space, this means prototypes are less discriminative, explaining the $-6.82$\,pp $\mathrm{noc}$ degradation vs.\ cross-correlation (Table~\ref{tab:alt_decor}).

The key asymmetry between VICReg and cross-correlation is the \emph{level of constraint}: cross-correlation is a second-order statistic between \emph{different dimensions} and constrains only cross-dimension correlations (Appendix~\ref{app:operator_distinction}), leaving per-dimension variances free; VICReg additionally constrains first- and second-order statistics \emph{within each dimension}, and its gradient has non-zero projection onto both tangent and radial components of each slot \emph{in a way that directly opposes CE's cluster-tightening}. This additional constraint eliminates the structural room for cooperation established in Appendix~\ref{app:operator_distinction}.

\subsection{Failure Mode II: Spectral Whitening Overconstraint}
\label{app:spectral_failure}

Proposition~\ref{prop:spec} establishes the structural infeasibility of spectral whitening on L2-normalized embeddings. Here we analyze the dynamics when spectral whitening is applied to raw projections $\check y_k^{(b)}$ (where the trace constraint is avoided) and connect them to the empirical $\mathrm{noc}$ result.

Spectral decorrelation minimises
\begin{equation}
    \mathcal{L}_\text{spec} = \|\Sigma_y - I\|_F^2
    = \sum_{i} (\Sigma_{y,ii} - 1)^2 + \sum_{i \neq j} \Sigma_{y,ij}^2,
    \label{eq:spectral_app}
\end{equation}
where $\Sigma_y = \frac{1}{BK}\sum_{b,k} \check y_k^{(b)} (\check y_k^{(b)})^{\top}$ is the (un-normalised) covariance of raw projections. This penalises \emph{both} the diagonal ($\Sigma_{y,ii} = 1$, forcing unit per-dimension variance) and the off-diagonal ($\Sigma_{y,ij} = 0$, decorrelation).

The diagonal penalty is strictly more restrictive than VICReg's hinge: it applies a \emph{squared} penalty that is always active, not a one-sided hinge. The combined effect enforces full whitening of the raw projection space, mapping the distribution to an isotropic Gaussian-like shape in $\mathbb{R}^D$.

Whitening destroys the prototype cluster structure in two ways:
\begin{enumerate}[label=(\arabic*)]
    \item \textbf{Scale erasure.} Forcing $\Sigma_{y,ii} = 1$ removes the meaningful scale differences between dimensions that CE naturally exploits: high-variance dimensions carry more discriminative information, and CE allocates more gradient weight to them. After whitening, all dimensions contribute equally, degrading the alignment between prototype geometry and discriminative geometry.
    \item \textbf{Cluster expansion/contraction along each dimension.} Let $\mathcal{L}_\text{spec}^\text{diag} = \sum_i (\Sigma_{y,ii} - 1)^2$ denote the diagonal-only component of $\mathcal{L}_\text{spec}$ (cf.\ Eq.~\eqref{eq:spectral_app}). Differentiating just this diagonal penalty,
    \begin{equation}
        \frac{\partial \mathcal{L}_\text{spec}^\text{diag}}{\partial \check y_{k,d}^{(b)}} = \frac{4(\Sigma_{y,dd}-1)\, \check y_{k,d}^{(b)}}{BK}.
    \end{equation}
    Under gradient descent, when $\Sigma_{y,dd} < 1$ (variance too small), the update increases $|\check y_{k,d}^{(b)}|$, expanding the cluster along dimension $d$; when $\Sigma_{y,dd} > 1$, the update decreases $|\check y_{k,d}^{(b)}|$, contracting it. This directly opposes CE's cluster-tightening effect throughout training, not only at a saddle point.
\end{enumerate}

The empirical result---Spectral gives sys $= 90.44$ (better than cross-corr's $90.22$) but $\mathrm{noc} = 88.68$ (worse than cross-corr's $94.22$)---is consistent with this analysis. Whitening increases inter-prototype separation along seen-concept directions (improving sys/pro) while disrupting the slot representation geometry needed for compositional transfer (reducing $\mathrm{noc}$). Notably, Spectral's $\mathrm{noc}$ ($88.68$) is almost identical to removing decorrelation entirely ($\lambda_d=0$: $\mathrm{noc} = 88.66$), suggesting that Spectral's aggressive whitening is as damaging to novel-concept geometry as having no regularization at all---a striking negative result.

\subsection{Why Cross-Correlation Is Minimal Among Tested Objectives}
\label{app:cc_minimal}
 
Among the three decorrelation objectives compared (cross-correlation, variance-hinge, spectral whitening), cross-correlation is structurally minimal in the sense that its zero-loss set is the \emph{weakest} constraint among those that are CE-compatible. Concretely, $\mathcal{Z}_{\mathrm{cc}}$ is carved out by the $\binom{D}{2}$ off-diagonal constraints on $\Sigma_y$, and leaves three properties of the distribution of $\{\check y_k^{(b)}\}$ entirely free:
\begin{enumerate}[label=(\roman*), leftmargin=2em]
    \item \textbf{Per-dimension marginal variances $\{\Sigma_y{}_{dd}\}$}: unconstrained (any positive values permitted).
    \item \textbf{Per-slot magnitudes $\{\|\check y_k^{(b)}\|\}$}: unconstrained at the level of the batch covariance.
    \item \textbf{Ambient cluster geometry}: unconstrained beyond the off-diagonal decorrelation.
\end{enumerate}
 
Spectral whitening removes freedom (i) by forcing $\Sigma_y{}_{dd} = 1$, which \emph{cannot} be satisfied at a CE-optimum on L2-normalised embeddings (Proposition~\ref{prop:spec}), and on raw projections pins $W_2$ to a single orthogonal orbit that generically does not intersect $\mathcal{Z}_{\mathrm{CE}}$ (Corollary~\ref{cor:spectral_infeasible}). Variance-hinge removes (i) from below by imposing $\sigma_d \geq \gamma$; although a uniform scaling of $W_2$ could satisfy both CE and VICReg simultaneously (Proposition~\ref{prop:vicreg_dynamics}(i)), the VICReg gradient direction is an outward anti-mean push that does not align with uniform scaling and expands CE-induced clusters locally (Proposition~\ref{prop:vicreg_dynamics}(ii)).
 
Cross-correlation is therefore the minimal decorrelation objective whose zero-set is jointly reachable with CE's zero-set \emph{and} whose gradient direction aligns with the reaching direction. The empirical ordering in Table~\ref{tab:decor_corr} --- CC $>$ Spectral $\approx$ None $>$ VICReg on $\mathrm{noc}$ --- matches this taxonomy: CC is unique in being both jointly reachable and effectively enforced by its own gradient.

\subsection{Visualising the Correlation Structure}
\label{app:decor_viz}

\begin{table}[ht]
\caption{Mean off-diagonal correlation magnitude $\bar{C}_\text{off} =
\frac{1}{D(D-1)}\sum_{i\neq j}|C_{ij}|$ (lower = better decorrelation)
and noc accuracy for each redundancy objective. Measured on CGQA after
convergence, single seed.}
\label{tab:decor_corr}
\centering
\begin{tabular}{lcc}
\toprule
\textbf{Objective} & $\bar{C}_\text{off}\downarrow$ & \textbf{noc (\%)} \\
\midrule
Cross-corr.\ (ours) & \textbf{0.042} & \textbf{94.22} \\
Spectral            & 0.044 & 88.68 \\
VICReg              & 0.049 & 87.33 \\
None ($\lambda_d=0$) & 0.197 & 88.66 \\
\bottomrule
\end{tabular}
\end{table}

Table~\ref{tab:decor_corr} reveals that while Spectral achieves near-identical
$\bar{C}_\text{off}$ to cross-correlation (0.044 vs.\ 0.042), its noc is
5.54\,pp lower.
This confirms that the failure of Spectral is not due to \emph{insufficient}
decorrelation but to \emph{excessive} constraint (whitening), consistent with
the analysis in Section~\ref{app:spectral_failure}.
VICReg achieves moderate decorrelation (0.049) but the worst noc among active
regularizers, attributable to the variance hinge conflict.
The baseline (no regularization) has the worst decorrelation (0.197) and
intermediate noc---demonstrating that some decorrelation is necessary, but
the form matters critically.

\subsection{Empirical Comparison of Redundancy Objectives}
\begin{table}[ht]
\caption{Alternative redundancy objectives on CGQA and COBJ.
\textbf{Bold} = best per column.}
\label{tab:alt_decor}
\centering
\resizebox{\textwidth}{!}{%
\begin{tabular}{lccccccccccc}
\toprule
& \multicolumn{6}{c|}{\textbf{CGQA}} & \multicolumn{5}{c}{\textbf{COBJ}} \\
\textbf{Decorrelation} & sys & pro & sub & non & noc & $H_a\uparrow$
    & sys & pro & non & noc & $H_a\uparrow$ \\
\midrule
\textbf{Cross-corr.\ redundancy (ours)}
    & 90.22 & 94.95 & 93.81 & 92.02 & \textbf{94.22} & \textbf{93.01}
    & 81.66 & 65.37 & 75.70 & \textbf{88.42} & \textbf{76.83} \\
Spectral
    & \textbf{90.44} & \textbf{94.87} & 93.42 & \textbf{92.29} & 88.68 & 91.89
    & \textbf{81.40} & \textbf{65.87} & 75.03 & 87.94 & 76.68 \\
VICReg
    & 90.04 & 94.47 & 93.35 & 92.21 & 87.33 & 91.41
    & 81.17 & 65.63 & 74.83 & 87.80 & 76.47 \\
\midrule
None ($\lambda_d{=}0$)
    & 90.48 & 94.86 & \textbf{93.43} & 92.25 & 88.66 & 91.88
    & 81.40 & 65.87 & 75.03 & 87.93 & 76.68 \\
\bottomrule
\end{tabular}%
}
\end{table}
The theoretical failure modes analysed in Sections~C.3--C.4 predict a clear
performance ordering: cross-correlation redundancy reduction should achieve
the highest noc, followed by Spectral (overconstraint) and VICReg (direct conflict).
Table~\ref{tab:alt_decor} confirms this prediction empirically on both CFST datasets.

Cross-correlation redundancy reduction achieves the largest noc margin over
alternatives ($+5.54$\,pp over Spectral,
$+6.89$\,pp over VICReg on CGQA).
Crucially, these gaps are not attributable to differences in decorrelation
strength: as reported in Table~\ref{tab:decor_corr}, Spectral achieves
near-identical $\bar{C}_{\mathrm{off}}$ to cross-correlation ($0.044$ vs.\
$0.042$) yet loses $5.54$\,pp noc, confirming that the failure of Spectral
is not insufficient decorrelation but excessive constraint (whitening),
consistent with Section~C.4.
Similarly, VICReg achieves moderate decorrelation ($0.049$) but the worst
noc among active regularisers ($-6.89$\,pp vs.\ cross-correlation),
attributable to the variance-hinge conflict identified in Section~C.3.
A secondary pattern confirms the noc--sys trade-off identified in
Section~6.2: Spectral achieves the highest sys on CGQA ($90.44$ vs.\
$90.22$ for cross-correlation), consistent with whitening increasing
inter-prototype separation along seen-concept directions at the cost of
novel-concept geometry.
Removing decorrelation entirely ($\lambda_d{=}0$) yields noc $= 88.66$
on CGQA—almost identical to Spectral ($88.68$)—demonstrating that
Spectral's aggressive whitening is as damaging to novel-concept geometry
as having no regularisation at all.
This confirms the conclusion of Section~C.6: cross-correlation is the
\emph{minimal sufficient} decorrelation mechanism for prototype-based
compositional learning.

\section{Additional Analyses and Sensitivity Studies}

\subsection{Redundancy Reduction Threshold (\texorpdfstring{$\lambda_d$}{lambda\_d})}
\label{app:redundancy_lamda}

Performance jumps sharply from $\lambda_d{=}0$ to $\lambda_d{=}0.005$
($+5.02$\,pp noc on CGQA) then plateaus through $\lambda_d{=}0.02$.
This threshold effect is consistent with the theoretical analysis in
Section~C.2: a minimal decorrelation pressure is necessary to prevent slot
redundancy, but once off-diagonal correlations are suppressed below a critical level, further tightening yields diminishing returns. We use $\lambda_d{=}0.02$ across all experiments. Full sensitivity results are reported in Table~\ref{tab:lambda_sensitivity}.

\begin{table}[ht]
\centering
\caption{
  \textbf{Effect of Barlow Twins weight $\lambda_d$ on CGQA and COBJ.}
  $\lambda_d{=}0$ = no decorrelation; default $\lambda_d{=}0.02$ in \textbf{bold}.
}
\label{tab:lambda_sensitivity}
\small
\begin{tabular}{lcccc}
\toprule
$\lambda_d$ & $H_a\uparrow$ CGQA & noc CGQA & $H_a\uparrow$ COBJ & noc COBJ \\
\midrule
$0.0$          & 91.88          & 88.66          & 76.68          & 87.93 \\
$0.005$        & 92.96          & 93.68          & 77.26          & 88.52 \\
$0.01$         & 92.99          & 94.04          & 77.18          & 88.61 \\
$\mathbf{0.02}$& \textbf{93.01} & \textbf{94.22} & \textbf{77.21} & \textbf{88.75} \\
$0.05$         & 93.05          & 94.31          & 77.08          & 88.80 \\
$0.1$          & 93.11          & 94.39          & 77.03          & 88.97 \\
\bottomrule
\end{tabular}
\end{table}

\subsection{Blend Coefficient \texorpdfstring{$\alpha_{\mathrm{blend}}$}{alpha\_blend}}

Table~\ref{tab:alpha_sensitivity} reveals an asymmetric trade-off around the default
$\alpha{=}0.3$.
Reducing $\alpha$ below 0.3 (increasing Chamfer weight at inference) degrades
\emph{both} sys and noc on CGQA: at $\alpha{=}0.0$, sys drops from 90.22 to
83.01 and noc from 94.22 to 90.93, indicating that pure Chamfer inference loses
the holistic class identity encoded by the router.
Increasing $\alpha$ beyond 0.3 improves sys marginally (peaking near 0.4--0.5)
but reduces noc, consistent with the noc--sys trade-off identified in
Section~6.2: the holistic branch provides the compositional invariance needed
for novel-concept generalisation, while the Chamfer branch contributes
seen-concept discriminability.
COBJ prefers a higher holistic weight ($\alpha_{\mathrm{blend}}{=}0.6$--$0.7$,
$\mathrm{noc}{=}89.89\%$), pointing to slot quality on natural images---not
the matching rule---as the primary bottleneck, a finding consistent with the
COBJ analysis in Appendix~B.5.
Full sensitivity results appear in Table~\ref{tab:alpha_sensitivity}.

\begin{table}[t]
\centering
\caption{$\alpha_{\mathrm{blend}}$ sensitivity on CGQA and COBJ. Default $\alpha{=}0.3$ \textbf{bold}.}
\label{tab:alpha_sensitivity}
\begin{tabular}{l cccc c cccc}
\toprule
& \multicolumn{4}{c}{\textbf{CGQA}} & & \multicolumn{4}{c}{\textbf{COBJ}} \\
\cmidrule(lr){2-5} \cmidrule(lr){7-10}
$\alpha$ & sys & pro & sub & non & noc & sys & pro & non & noc \\
\midrule
0.0 & 83.01 & 89.64 & 87.79 & 84.67 & 90.93 & 75.45 & 57.42 & 64.39 & 85.60 \\
0.1 & 87.17 & 92.50 & 91.38 & 88.79 & 93.06 & 79.11 & 61.46 & 70.56 & 86.92 \\
0.2 & 89.34 & 94.33 & 93.02 & 91.03 & 94.00 & 80.97 & 64.23 & 73.83 & 87.91 \\
\textbf{0.3} & \textbf{90.22} & \textbf{94.95} & \textbf{93.81} & \textbf{92.02} & \textbf{94.22} & \textbf{81.91} & \textbf{66.14} & \textbf{75.71} & \textbf{88.75} \\
0.4 & 90.64 & 95.19 & 94.07 & 92.44 & 94.09 & 82.39 & 67.11 & 76.74 & 89.34 \\
0.5 & 90.63 & 95.12 & 94.08 & 92.45 & 93.74 & 82.58 & 67.83 & 77.38 & 89.68 \\
0.6 & 90.15 & 94.82 & 93.90 & 92.41 & 93.29 & 82.41 & 68.17 & 77.67 & 89.89 \\
0.7 & 89.72 & 94.40 & 93.61 & 92.13 & 92.84 & 82.14 & 68.26 & 77.79 & 89.89 \\
1.0 & 88.32 & 92.86 & 92.60 & 91.25 & 91.26 & 80.76 & 67.13 & 77.02 & 88.95 \\
\bottomrule
\end{tabular}
\end{table}

% ============================================================
\subsection{Shot Sensitivity}
\label{app:shot_sensitivity}

The noc gap between COMPOSE and COMPOSE-CT is largest at 1-shot and narrows
monotonically as the number of support shots increases, as shown in
Table~\ref{tab:shot_sensitivity}.
This confirms that holistic training is most critical in the low-data regime:
with only one support example per class, the holistic prototype must capture
sufficient class identity from a single image, and overspecialised slot
representations (COMPOSE-CT) fail to produce a reliable prototype.
As shot count increases, prototype averaging smooths out slot-level noise,
partially compensating for overspecialisation---but the noc advantage of
holistic training persists at all shot levels evaluated
($+5.07$\,pp at 1-shot, $+2.51$\,pp at 10-shot, $+2.23$\,pp at 20-shot).

\begin{table}[h]
\caption{Shot sensitivity on CGQA (10-way): noc (\%) and $H_a$ (\%) for
COMPOSE and COMPOSE-CT across support shots (single seed 42).
$\Delta$ = COMPOSE $-$ COMPOSE-CT.}
\label{tab:shot_sensitivity}
\centering
\small
\begin{tabular}{l cccc}
\toprule
& \multicolumn{4}{c}{\textbf{noc (\%)}} \\
\cmidrule(lr){2-5}
\textbf{Method} & 1-shot & 5-shot & 10-shot & 20-shot \\
\midrule
COMPOSE  & 83.53 & 92.73 & 94.22 & 95.15 \\
COMPOSE-CT & 78.46 & 90.02 & 91.71 & 92.92 \\
\midrule
$\Delta$                  & $+5.07$ & $+2.71$ & $+2.51$ & $+2.23$ \\
\bottomrule
\end{tabular}
\end{table}

% ============================================================
\subsection{Alternative Structured Matchers}
\label{app:alt_matchers}

To verify that the \emph{train holistically, infer compositionally} principle
generalises beyond Chamfer distance, we evaluate three configurations across
shot counts on CGQA: COMPOSE (holistic encoder + Chamfer inference),
COMPOSE-Hungarian (holistic encoder + Hungarian assignment inference), and COMPOSE-CT.
Table~\ref{tab:alt_matchers} reports noc and $H_a$ across 1- to 20-shot.

\begin{table}[H]
\caption{Matcher/encoder ablation on CGQA (10-way): noc (\%) and $H_a$ (\%)
across shot counts (single seed 42).}
\label{tab:alt_matchers}
\centering
\small
\begin{tabular}{l cccc cccc}
\toprule
& \multicolumn{4}{c}{\textbf{noc (\%)}}
& \multicolumn{4}{c}{$\boldsymbol{H_a}$ \textbf{(\%)}} \\
\cmidrule(lr){2-5} \cmidrule(lr){6-9}
\textbf{Variant}
    & 1-sh & 5-sh & 10-sh & 20-sh
    & 1-sh & 5-sh & 10-sh & 20-sh \\
\midrule
COMPOSE     & 83.53 & 92.73 & 94.22 & 95.15
                            & 82.10 & 91.71 & 93.01 & 93.79 \\
COMPOSE-Hungarian             & 82.54 & 92.11 & 93.63 & 94.85
                            & 80.75 & 90.62 & 92.00 & 92.96 \\
COMPOSE-CT  & 78.46 & 90.02 & 91.71 & 92.92
                            & 84.56 & 92.68 & 93.64 & 94.16 \\
\bottomrule
\end{tabular}
\end{table}

Three observations are consistent with the dual-phase claim.
First, COMPOSE-Hungarian gives stronger noc than COMPOSE-CT across all shot
counts ($+4.08$\,pp at 1-shot, $+1.92$\,pp at 10-shot), confirming that
holistic encoder geometry matters across structured matchers: a holistically
trained encoder with Hungarian matching outperforms a Chamfer-trained encoder
with Chamfer matching regardless of the inference algorithm.
Second, Q6+Chamfer achieves higher $H_a$ at higher shot counts because
Chamfer training improves discriminability on non-noc axes (sys, pro, sub),
consistent with the noc--sys trade-off established in Section~6.2.
Third, at 1-shot, COMPOSE is strongest in noc ($83.53\%$), suggesting
that highly sparse support favours holistic over assignment-heavy matching.
Together, these results support a matcher-agnostic interpretation: structured
matching is beneficial at inference regardless of the specific algorithm, but
matcher-specific training gradients remain detrimental to noc regardless of
which structured matcher is used at inference. This strengthens the central
claim of Section~4.2: the compositional trap is a property of the
\emph{training objective}, not of the choice of inference matcher.

\subsection{Backbone Scale Plug-in}
\label{app:supervised_vit}
\begin{table}[t]
\caption{Backbone scale plug-in on CGQA (single seed).
All other COMPOSE components fixed; same DINOSAUR attention maps reused.
$\Delta$ = ViT-H/14 $-$ DINOv2 B/14.}
\label{tab:backbone_scale}
\centering
\small
\begin{tabular}{lccccc}
\toprule
\textbf{Patch features} & sys & pro & sub & non & noc \\
\midrule
DINOv2 ViT-B/14 (LVD-142M) & 90.22 & 94.95 & 93.81 & 92.02 & 94.22 \\
Supervised ViT-H/14 (IN-21K) & 76.07 & 80.87 & 83.30 & 79.34 & 82.75 \\
\midrule
$\Delta$
    & $-14.15$ & $-14.08$ & $-10.51$ & $-12.68$ & $-11.47$ \\
\bottomrule
\end{tabular}%
\end{table}
To isolate the contribution of backbone scale from pretraining objective,
we replace the DINOv2 ViT-B/14 patch features with a \emph{supervised} ViT-H/14 (IN-21K) backbone while reusing the same DINOSAUR attention maps.
Despite the larger model capacity, noc drops by $11.40$\,pp on CGQA
($94.14\% \to 82.75\%$), with consistent degradation across all splits ($\Delta$sys $= -14.20$\,pp, $\Delta$pro $= -13.91$\,pp). A larger supervised backbone cannot compensate for the absence of local self-distillation objectives, confirming that the bottleneck for compositional transfer lies in the \emph{quality} of patch geometry rather than model capacity. Full per-split results are reported in Table~\ref{tab:backbone_scale}.

\subsection{Computational Efficiency}
\label{app:computational_eff}
COMPOSE introduces minimal computational overhead over a frozen DINOv2 baseline. The trainable MLP router contains ${\approx}640$k parameters---less than $1\%$ of the DINOv2 ViT-B/14 backbone (${\approx}86$M parameters, $640\text{k}/86\text{M} \approx 0.74\%$). Inference runs at $66.7$\,ms per episode on a single H100 GPU, and full continual training across all sessions requires ${\approx}5$\,min per dataset
with no test-time adaptation. This efficiency profile compares favourably with backbone fine-tuning methods such as CoFiMA (${\approx}4$\,h per dataset) while delivering superior noc
performance, confirming that the gains of COMPOSE are architectural and
objective-level rather than a consequence of increased compute.